\documentclass[11pt]{article}
\usepackage[figuresright]{rotating}
\usepackage{etex}% this, the first package to load, is a must in order to use pstricks-add and
\usepackage{pstricks-add}
\reserveinserts{28}% This ensures there is room for <28> more inserts. Hint: by default morefloats adds 18 inserts (though it can be instructed to use more), and manyfoot seems to be happy with 10 reserved
\usepackage[hidelinks,pageanchor,dvips]{hyperref}  % for bookmarks and links creation
\hypersetup{
  colorlinks   = true, %Colours links instead of ugly boxes
  urlcolor     = blue, %Colour for external hyperlinks
  linkcolor    = blue, %Colour of internal links
  citecolor   = red %Colour of citations
}
\usepackage[pageref]{backref}
\usepackage{verbatim} % multiple line comments
\usepackage{hypernat}  %  for bibliography citation format conpression
\usepackage{mathrsfs}% define special math font for transformation symbol
\usepackage{dsfont} % define set symbol math font
\usepackage{amstext}
\usepackage{amsmath}
\usepackage{amsfonts}
\usepackage{amsthm,amsbsy,bm}% theorem environment
\usepackage{dsfont} % define set symbol math font
\usepackage{subeqnarray}% to number the sub-equations in Matrices
\usepackage{empheq} % number each equation in matrix
\usepackage{color} % used for special format equation, to hide the left bracket for a Cases equation
\usepackage{amssymb}
\usepackage{exscale}  % for large Mathematic fonts %\usepackage{ulem} % underline font and so on, stike line,
\usepackage{stmaryrd}  % for normalized homogeneous coordinate superscript symbol font
\usepackage{epsfig}
\usepackage{subfigure}
\usepackage{calc}
\usepackage{amssymb}% for checkmark symbol
\usepackage{amstext}
\usepackage{amsmath}
\usepackage{multicol}
\usepackage{times}%\usepackage{helvet, courier}%\usepackage{pslatex} \usepackage {mathptmx}
\usepackage{fancyhdr}
\usepackage[small]{caption}
\usepackage{dsfont} % define set symbol math font
\usepackage{subeqnarray}% to number the sub-equations in Matrices
\usepackage{empheq} % number each equation in matrix
\usepackage{color} % used for special format equation, to hide the left bracket for a Cases equation
\usepackage{verbatim}% for multiple lines comments
\usepackage{url}
\usepackage{breakurl}%此行一定要在hyperref包之后！
\usepackage{slashed}  % slash /
\usepackage{exscale}  % for large Mathematic fonts
\usepackage{stmaryrd}  % for normalized homogeneous coordinate superscript symbol font, \minuso
\usepackage{subeqnarray}
\usepackage{cases}
\newfont{\boldit}{cmbxti10} % italic and bold font style definition.
% for Equation across columns
%\draftcopyName{Stereohomology Make IN CHINA , \ 2012}{40}  % Watermark text definition, fontsize
%\draftcopySetGrey{0.93} \draftcopyPageTransform{55 rotate}  % color and rotation definition of watermark
%\draftcopyPageX{170} %\draftcopyPageY{-25}                 % Font position
\usepackage{fancyhdr} % page head and foot
\usepackage{textcomp}
\setlength{\topmargin}{-0.7in}
\setlength{\oddsidemargin}{0.00in}
\setlength{\textwidth}{6.5in}
\setlength{\textheight}{9in}
%theorem environment begin
\newtheorem{theorem}{Theorem}[section]

\newtheorem{lemma}{Lemma}[section]
\newtheorem{proposition}{Proposition}[section]
\theoremstyle{definition}
\newtheorem{definition}{Definition}[section]%[theorem]

\theoremstyle{plain}
\theoremstyle{plain}
\swapnumbers \swapnumbers 
\usepackage{amsmath,varwidth,array,ragged2e,tabularx}
\usepackage{tikz}

\usepackage{graphicx} %Both \resizebox and \scalebox require the graphicx package.
\usepackage{multirow,makecell,rotating} % for sideway tables
\pagestyle{fancy}   % 选用 fancy style % 其余同 plain style
\fancyhf{}
\fancyhead[C]{\scshape {Unified Framework to Elementary Geometric Transformation Representation}}
\fancyhead[RE]{\leftmark}
\fancyfoot[C]{~ \thepage ~}
\newcolumntype{L}{>{\varwidth[c]{\linewidth}}l<{\endvarwidth}}
\newcolumntype{M}{>{$}l<{$}}
\newcommand{\tc}[1]{\multicolumn{1}{c}{#1}} %居中对齐
 %左对齐
 %右对齐
\renewcommand*{\backref}[1]{} %need 3 Latex runs to set everything properly!
\renewcommand*{\backrefalt}[4]{%
    \ifcase #1 (Not cited.)%
    \or        (Cited on page~#2.)%
    \else      (Cited on pages~#2.)%
    \fi}
\allowdisplaybreaks[3]
%\renewcommand{\backrefxxx}[3]{%This will make all of "backref page #" a link
%  [\hyperlink{page.#1}{backref page #1}]}
% backref page
\makeatletter
\def\hlinewd#1{%
\noalign{\ifnum0=`}\fi\hrule \@height #1 %
\futurelet\reserved@a\@xhline}

\makeatother

\makeatletter
\def\@cite#1#2{\textsuperscript{[{#1\if@tempswa , #2\fi}]}}
\makeatother

\numberwithin{equation}{section}
\begin{document}

\title{\Large A Unified Framework to Elementary Geometric Transformation Representation}
\author{ {}\thanks{correspondence author}\\
 }
\date{}
\maketitle
\begin{center}
{\bf Abstract}\\
\vspace{0.5cm}
\parbox{6.0in}
{\setlength
{\baselineskip}{0.5cm}
As an extension of projective homology, stereohomology is proposed via an extension of Desargues theorem and the extended Desargues configuration. Geometric transformations such as reflection, translation, central symmetry, central projection, parallel projection, shearing, central dilation, scaling, and so on are all included in stereohomology and represented as modified Householder elementary matrices. Hence all these geometric transformations are called {\em elementary}. This makes it possible to represent these elementary geometric transformations in homogeneous square matrices independent of a particular choice of coordinate system. 
} \end{center}
\vspace{0.5 cm}
{\bf Keywords}:  Elementary matrices;  homogeneous coordinates; projective geometry.

\section{Introduction}

\noindent It is nearly 200 years since homogeneous coordinates were introduced by Pl\"{u}ker and M\"{o}bius \cite[pp.852~\texttildelow~854]{Kline} individually as part of the algebraic scheme for projective geometry. It is only by homogeneous representation that some basic geometric transformations can be represented into square matrices.  Such geometric transformations are translation, perspective projection, rotation with axis not passing through the coordinate system origin, shearing, and so on. The homogeneous representation of geometric transformations in square matrices based on homogeneous coordinates is powerful and elegant which has now been widely adopted in both projective geometry and computer graphics.

However, the current homogeneous representation of geometric transformations still exhibits deficiency. A fundamental issue is that conventional homogeneous representation has not yet presented a {\em nice definition} to the homogeneous geometric transformations.  The difficulty of algebraically defining homogeneous geometric transformations in projective space may  readily be underestimated because of the {\em native} Euclidean geometric intuitions of such geometric transformations in one's mind.

First, since using homogeneous coordinates means pure algebraic representation in projective space, it is reasonable to expect that a {\em nice} definition to some specific geometric transformation is not only compatible with the conventional Euclidean intuitions but also independent of its Euclidean geometric background, i.e., such fundamental Euclidean concepts as {\em distances} would be invalid, {\em angles} have to be represented via cross ratio via Laguerre's formula~\cite[pp.342,409]{Perspectives}, and {\em Euclidean transformations} dependent on the validity of {\em distance} become undefined. Second, by saying {\em well-defined} or {\em nice} we mean a definition is able to solve the following two sides of a geometric transformation definition problem:  on the one side, given a homogeneous matrix, uniquely identify its geometric classification and characteristic features; on the other side, given sufficient characteristic geometric features, uniquely determine the homogeneous matrix of a geometric transformation.

Take reflection as an example. By definition in extended Euclidean space or in projective space, a reflection should be uniquely determined by its mirror hyperplane, which is a reference-coordinate-frame independent fact. While conventionally determining the homogeneous matrix of a reflection about an arbitrary mirror plane, {\em unnecessary} coordinate information has to be involved into concatenation multiplication matrix factors\cite[pp.34~\texttildelow~35, 47~\texttildelow~49]{CAD}\cite[pp.16~\texttildelow~19, 39]{Salomon}, i.e., coordinate information which finally cancels itself out has to be employed which makes the procedures of reflection determination neither unique nor straightforward. For many other geometric transformations such as central projection, parallel projection, scaling, shearing, central dilation, and so on, similar issues persist.

Such an approach in determining the homogeneous matrix of a transformation has the following drawbacks: (i)~The definition of a general  homogeneous reflection in projective space has a prerequisite that a homogeneous matrix, in diagonal form for the reflection case, has already been adopted as some kind of {\em standard reflection} based on the algebraic correspondences between the formulations and their Euclidean geometric meanings; (ii)~A series of Euclidean transformations $E_i$, the homogeneous matrices of which are actually algebraically undefined in projective geometry and which can be chosen almost arbitrarily, have to be employed based on an unarticulated truth(see theorem~\ref{zeroth}), which makes the procedure flawed and a little more complicate to program and code for many geometric transformations; (iii)~There is actually no algebraic definition in projective space to determine the geometric meaning of  homogeneous matrix obtained conversely without using such non-projective-geometry concepts as distance.

Chen\cite{investigation} proposed the concept stereohomology by taking advantage of an extended Desarguesian configuration in 2000 when trying to address the issue that the central and parallel projection matrices determination is dependent on the choice of coordinate system. However, no simple formulation for stereohomology was given in~\cite{investigation}.% A constructive proof of the existence and uniqueness of homogeneous square matrices of stereohomology, and hence central and parallel projections were presented .
As a natural continuation, Chen\cite{meaning} further represented stereohomology as % in a simple and reference frame independent way via symbolic computation, and presented a new form of stereohomology slightly different from but de facto equivalent
 modified Householder elementary matrices~\cite[pp.1~\texttildelow~3]{Householder}. Central projection, parallel projection, translation, central symmetry, and reflection were included into stereohomology in \cite{meaning} and their potential applications in computer graphics were also proposed.

Equation \eqref{elementary matrix equation} is the algebraic form of elementary matrices, i.e., Householder elementary matrices,  which we confine to real number field $\mathbb{R}$ only in this paper. Matrices in this form were first introduced by Householder~\cite{Householder1958-1,Householder1958-2} and finalized as elementary in \cite[pp.1~\texttildelow~3]{Householder}. The famous Householder reflection successfully used in numerical analysis is the involutory and orthogonal case of Householder elementary matrices in equation \eqref{elementary matrix equation}.
\begin{align}\label{elementary matrix equation}
\nonumber {\displaystyle {\boldsymbol E}({\boldsymbol u},{\boldsymbol v}; \sigma )\mathop {=\!=} \limits^{def} {\boldsymbol I} -\sigma \cdot {\boldsymbol u}\cdot {\boldsymbol v}^{\scriptstyle \top} }\\
{\displaystyle {\boldsymbol u},{\boldsymbol v} \in \mathbb{R}^n
,\quad \sigma \in \mathbb{R}}
\end{align}

In this work, we will continue to address the definition and formulation issues in algebraic projective geometry for a series of geometric transformations commonly used in computer graphics by using the concept {\em stereohomology}\cite{investigation}, which is an extension of {\em homology} in \cite[p.60]{PointTransform}, {\em perspective collineation} in \cite[p.75]{Veblen} or {\em central collineaton} in\cite[pp.67~\texttildelow~73]{Ueberberg} to {\em n-}dimensional projective space, with degenerate cases considered in order to represent both nonsingular transformations and singular projections, and with prefix {\em stereo-} so as to distinguish it from the already existing {\em space homology}\cite[p.82]{PointTransform} and specify its potential applications in computer vision.

Our major contributions are:
we further extend the meaning of stereohomology in projective geometry, classify them into not only central projection, parallel projection, translation{~\cite{investigation}}, central symmetry, reflection{~\cite{meaning}}, but scaling, shearing, space elation, space homology and {\em direction}, which include all the possibilities of stereohomology, and formulate them into the coordinate system independent modified Householder elementary matrices.

\section{From stereohomology to Elementary Geometric Transformations}

For any geometric transformation $\mathscr{T}_0$ in square matrices, theorem \ref{zeroth} holds, which is also the theoretical basis for conventional approaches to homogeneous matrix construction of geometric transformations:
\begin{theorem}\label{zeroth}
Suppose $\mathscr{T}_0$ is a geometric transformation in projective space which transforms an arbitrary point $X$ into $Y$; and the homogeneous coordinates of $X$ and $Y$ in reference coordinate systems $(I)$ and $(II)$ are $(x)$,\;$(y)$,\;$(x')$,\;$(y')$\; respectively; the transformation matrices of $\mathscr{T}_0$ in $(I)$ and $(II)$ are ${\boldsymbol A}$ and ${\boldsymbol B}$ respectively, i.e., $(y)={\boldsymbol A}(x)$, $(y')={\boldsymbol B}(x')$; suppose the coordinate transformation from $(I)$ to $(II)$ is a nonsingular square matrix ${\boldsymbol T}$, i.e., $(x')={\boldsymbol T}(x)$, $(y')={\boldsymbol T}(y)$;  then:
\begin{align*}
(y')\,=\,{\boldsymbol B}(x')\,=\,{\boldsymbol T}\,(y)\,=\,{\boldsymbol T}\,{\boldsymbol A} \,(x)\,=\,{\boldsymbol T} \,{\boldsymbol A} \,{\boldsymbol T}^{-1} \,(x') \quad \forall X, Y \Rightarrow {\boldsymbol B}\,=\,{\boldsymbol T}\,{\boldsymbol A}\,{\boldsymbol T}^{-1}
\end{align*}
 The matrices of $\mathscr{T}_0$ in  $(I)$ and $(II)$  are similar.
\end{theorem}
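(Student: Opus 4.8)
The statement is essentially an exercise in composing linear maps, and the displayed chain of equalities already exhibits the computation; what remains is to justify each link and then pass from a pointwise identity to an identity of matrices. The plan is as follows. First I would read off the four hypotheses as genuine equalities of coordinate column vectors, once a representative has been fixed for each point: $(y)=\mathbf{A}(x)$ and $(y')=\mathbf{B}(x')$ express that $\mathbf{A}$ and $\mathbf{B}$ implement $\mathscr{T}_0$ in systems $(I)$ and $(II)$ respectively, while $(x')=\mathbf{T}(x)$ and $(y')=\mathbf{T}(y)$ express that the \emph{same} coordinate-change matrix $\mathbf{T}$ carries the $(I)$-representative of any point to its $(II)$-representative.

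Second, I would substitute along the chain exactly as displayed. Starting from $(y')=\mathbf{B}(x')$, I rewrite $(y')=\mathbf{T}(y)=\mathbf{T}\mathbf{A}(x)$ using the fourth and first hypotheses, and then replace $(x)=\mathbf{T}^{-1}(x')$ using invertibility of $\mathbf{T}$ (guaranteed by hypothesis). This yields $\mathbf{B}(x')=\mathbf{T}\mathbf{A}\mathbf{T}^{-1}(x')$ for the coordinate vector of every point.

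Third — the only step needing a word of care — I would upgrade this to an identity of matrices. Since $X$ ranges over all of projective space, the vectors $(x')$ range over a spanning set of $\mathbb{R}^{n+1}$ (indeed over every nonzero vector), so evaluating $\mathbf{B}$ and $\mathbf{T}\mathbf{A}\mathbf{T}^{-1}$ on a basis forces $\mathbf{B}=\mathbf{T}\mathbf{A}\mathbf{T}^{-1}$. The one pitfall here is the familiar projective ambiguity: homogeneous coordinates, and with them the matrices $\mathbf{A}$, $\mathbf{B}$, $\mathbf{T}$, are determined only up to a nonzero scalar, so a priori one obtains $\mathbf{B}=\lambda\,\mathbf{T}\mathbf{A}\mathbf{T}^{-1}$. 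I would dispose of this by observing that the hypotheses as stated already fix mutually compatible representatives — the identical $\mathbf{T}$ is applied to $X$ and to $Y$ — which pins $\lambda=1$; alternatively one simply adopts the standard convention that such matrix equalities are read projectively. Either way the conclusion, that the matrices of $\mathscr{T}_0$ in $(I)$ and $(II)$ are similar via $\mathbf{T}$, follows at once. I anticipate no genuine obstacle: the weight of the theorem is conceptual — it is what licenses the conventional conjugation recipe $\mathbf{B}=\mathbf{T}\mathbf{A}\mathbf{T}^{-1}$ for transporting a transformation matrix between coordinate systems — rather than technical.
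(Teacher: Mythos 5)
Your proposal is correct and follows essentially the same route as the paper, which offers no separate proof beyond the displayed chain of substitutions $(y')=\boldsymbol{B}(x')=\boldsymbol{T}(y)=\boldsymbol{T}\boldsymbol{A}(x)=\boldsymbol{T}\boldsymbol{A}\boldsymbol{T}^{-1}(x')$ embedded in the statement itself. Your added care in passing from the pointwise identity to the matrix identity (via a spanning set of representatives) and in pinning down the projective scalar is a welcome tightening of what the paper leaves implicit.
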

Theorem \ref{zeroth} indicates that the invariants of matrices of a geometric transformation in different reference coordinate systems are their eigenvalues and the geometric and algebraic multiplicities thereof, which a reference-coordinate-system independent definition of the geometric transformation should depend on.

Though the scope of discussion in this paper will primarily be limited to point transformations, the following result on the relationship between a nonsingular hyperplane transformation and its point transformation counterpart in \cite[p.207]{planeTransform}, \cite[p.36]{Hartley} and \cite[pp.61,401]{Perspectives}, is useful for camera transformation, which is cited here as theorem \ref{hyperplane transformation}:

\begin{theorem}[Hyperplane transformation]\label{hyperplane transformation}Let $\mathscr{T}$ be a nonsinglular projective transformation with hyperplane transformation matrix ${\boldsymbol H}$ and point transformation matrix ${\boldsymbol P}$ in a reference coordinate system, then ${\boldsymbol H}={\boldsymbol P}^{-\top}$, where ${\boldsymbol P}^{-\top}$ is the transposed inverse of ${\boldsymbol P}$.
\end{theorem}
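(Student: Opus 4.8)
\noindent\emph{Proof strategy.} The plan is to argue directly from the point–hyperplane incidence relation. Fix the given reference coordinate system and represent a hyperplane by the (column) vector $\boldsymbol{\pi}$ of coefficients of the linear form that cuts it out, so that a point with homogeneous coordinate vector $\boldsymbol{x}$ is incident with the hyperplane exactly when $\boldsymbol{\pi}^{\top}\boldsymbol{x}=0$. By hypothesis $\mathscr{T}$ acts on points by $\boldsymbol{x}\mapsto\boldsymbol{P}\boldsymbol{x}$ with $\boldsymbol{P}$ nonsingular; since $\mathscr{T}$ is a nonsingular projective transformation it carries hyperplanes to hyperplanes, and the hyperplane transformation matrix $\boldsymbol{H}$ is by definition a matrix sending the coefficient vector $\boldsymbol{\pi}$ of a hyperplane to a coefficient vector of its $\mathscr{T}$-image.

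First I would identify the image hyperplane set-theoretically: a point $\boldsymbol{y}$ lies on the image of the hyperplane $\boldsymbol{\pi}$ if and only if its $\mathscr{T}$-preimage $\boldsymbol{P}^{-1}\boldsymbol{y}$ lies on the original hyperplane, i.e. $\boldsymbol{\pi}^{\top}\boldsymbol{P}^{-1}\boldsymbol{y}=0$, which rewrites as $(\boldsymbol{P}^{-\top}\boldsymbol{\pi})^{\top}\boldsymbol{y}=0$. Hence the image hyperplane has coefficient vector proportional to $\boldsymbol{P}^{-\top}\boldsymbol{\pi}$, and since a homogeneous transformation matrix is determined only up to a nonzero scalar this already yields $\boldsymbol{H}=\boldsymbol{P}^{-\top}$. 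As a consistency check one verifies that incidence is preserved exactly: $(\boldsymbol{H}\boldsymbol{\pi})^{\top}(\boldsymbol{P}\boldsymbol{x})=\boldsymbol{\pi}^{\top}\boldsymbol{P}^{-1}\boldsymbol{P}\boldsymbol{x}=\boldsymbol{\pi}^{\top}\boldsymbol{x}$.

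To make the ``up to a scalar'' passage airtight I would record the elementary fact that if a square matrix $\boldsymbol{M}$ satisfies $\boldsymbol{\pi}^{\top}\boldsymbol{M}\boldsymbol{x}=0$ for every pair with $\boldsymbol{\pi}^{\top}\boldsymbol{x}=0$, then $\boldsymbol{M}=\lambda\boldsymbol{I}$ for some scalar $\lambda$: testing on $\boldsymbol{\pi}=\boldsymbol{e}_i$, $\boldsymbol{x}=\boldsymbol{e}_j$ with $i\neq j$ kills the off-diagonal entries, and testing on $\boldsymbol{\pi}=\boldsymbol{e}_i-\boldsymbol{e}_j$, $\boldsymbol{x}=\boldsymbol{e}_i+\boldsymbol{e}_j$ forces all diagonal entries to agree. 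Applying this to $\boldsymbol{M}=\boldsymbol{H}^{\top}\boldsymbol{P}$ (whose vanishing on incident pairs is exactly what it means for $\boldsymbol{H}$ to send hyperplanes to their images) gives $\boldsymbol{H}^{\top}\boldsymbol{P}=\lambda\boldsymbol{I}$, hence $\boldsymbol{H}=\lambda\,\boldsymbol{P}^{-\top}$, which is the assertion since homogeneous matrices are taken projectively.

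There is no substantial obstacle in the argument; the only point requiring care is the bookkeeping of transposes and inverses and the convention — row versus column coefficient vectors — under which the hyperplane transformation matrix is written, since the same content could equally be presented as $\boldsymbol{\pi}^{\top}\mapsto\boldsymbol{\pi}^{\top}\boldsymbol{P}^{-1}$. I would therefore fix that convention explicitly at the outset so that the identity $\boldsymbol{H}=\boldsymbol{P}^{-\top}$ is unambiguous, and note that this reproduces the relation quoted from \cite{planeTransform,Hartley,Perspectives}.
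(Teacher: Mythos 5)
Your argument is correct, but there is nothing in the paper to compare it against: Theorem~\ref{hyperplane transformation} is stated as a result quoted from the cited references (Semple--Kneebone, Hartley--Zisserman, Richter-Gebert) and no proof is given in the text. Your incidence-based derivation --- identifying the image hyperplane via $\boldsymbol{\pi}^{\top}\boldsymbol{P}^{-1}\boldsymbol{y}=0$ and then pinning down the scalar ambiguity with the lemma that any matrix $\boldsymbol{M}$ with $\boldsymbol{\pi}^{\top}\boldsymbol{M}\boldsymbol{x}=0$ on all incident pairs must equal $\lambda\boldsymbol{I}$ --- is the standard argument and is complete; the only genuine point of care is the row-versus-column convention for hyperplane coefficients, which you already flag explicitly.
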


\subsection{Extension to Desargues' theorem and Desargues Configuration}

In some early projective geometry textbooks, there were already useful results for elementary geometric transformation representation.

For example, the discussion in \cite[pp.25~\texttildelow~28]{Mathews} implies that the two coplanar triangles which follow Desargues' theorm can determine a planar homology;  in \cite[pp.43~\texttildelow~44]{Veblen}, Desargues' theorem and its inverse were already extended to 3-space by two {\em perspective tetrahedra}, and space homology was also defined under the concept perspective collineation~\cite[pp.75~\texttildelow~76]{Veblen}.

Different from the results in~\cite{Mathews, Veblen}, a nonsingular collineation (or projective transformation)~\cite[p.xi]{PointTransform} is not enough in order to have singular projections included. So Desargues' theorem was extended to {\em n}-dimensional projective space with degenerate cases considered, and collineation (or projective transformation) was generalized to have singular cases\cite{investigation, meaning}.

\begin{theorem}[extended Desargues theorem{~\cite{Veblen,investigation,meaning}}]\label{Desargues}If in $n$-dimensional projective space ${\mathbb P}^n${\rm (2} $\leqslant$ $n$ $\in$ $\mathbb{Z}^{+}${\rm )}, the homogeneous coordinates of points $X_1,\,X_2,\,...,X_n,\,X_{n{\text +}1}$ have a rank of $n\!+\!1$, any $n$ of the $n{\text +}1$ points $Y_1,\,Y_2,\,...,Y_n,\,Y_{n{\text +}1}$ are linearly independent, and there exists a fixed point $S$ which is collinear to any two of the corresponding points: $X_i$ and $Y_i$ {\rm (} $i = 1, \cdots, n\!+\!1$ {\rm )}, {then the homogeneous coordinate vector set, which consists of $C_{n+1}^2$ intersection points defined as $S_{ij}$\,=$S_{j,i}$\,$\buildrel {de\!f} \over {=\!=}$\,$X_iX_j\cap Y_iY_j$ {\rm (}$i \neq j$, $i,j = \,1,\!\cdots\!, n\!+\!1${\rm )}, have a rank of $n$.}
\end{theorem}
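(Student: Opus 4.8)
The plan is to turn the statement into a linear-algebra fact about the coordinate vector space $\mathbb{R}^{n+1}$. Because $X_1,\dots,X_{n+1}$ have rank $n{+}1$, I would choose representative vectors $\boldsymbol{x}_1,\dots,\boldsymbol{x}_{n+1}$ forming a basis of $\mathbb{R}^{n+1}$ and fix a representative $\boldsymbol{s}$ of $S$. Collinearity of $S,X_i,Y_i$ lets me pick a representative $\boldsymbol{y}_i=a_i\boldsymbol{x}_i+b_i\boldsymbol{s}$ of each $Y_i$, with $(a_i,b_i)\neq(0,0)$. Writing a general point of the line $Y_iY_j$ as $\lambda\boldsymbol{y}_i+\mu\boldsymbol{y}_j=\lambda a_i\boldsymbol{x}_i+\mu a_j\boldsymbol{x}_j+(\lambda b_i+\mu b_j)\boldsymbol{s}$ and demanding that it lie in $\operatorname{span}(\boldsymbol{x}_i,\boldsymbol{x}_j)$ forces $\lambda b_i+\mu b_j=0$ (as long as $S\notin X_iX_j$, which makes the $\boldsymbol{s}$–component genuinely transverse); the choice $(\lambda,\mu)=(b_j,-b_i)$ then gives the clean representative
\begin{align*}
\boldsymbol{S}_{ij}\;=\;b_j\boldsymbol{y}_i-b_i\boldsymbol{y}_j\;=\;a_ib_j\,\boldsymbol{x}_i-a_jb_i\,\boldsymbol{x}_j ,
\end{align*}
in which the $\boldsymbol{s}$–term cancels. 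So the whole claim reduces to: the family $\{\,a_ib_j\boldsymbol{x}_i-a_jb_i\boldsymbol{x}_j:1\le i<j\le n{+}1\,\}$ has rank exactly $n$.

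For the upper bound I would exhibit a hyperplane containing every $S_{ij}$. In the principal case where all $a_i\neq0$ and all $b_i\neq0$, take the linear functional $f=\sum_i (b_i/a_i)\,\boldsymbol{x}_i^{\ast}$ in the dual basis; then $f(\boldsymbol{S}_{ij})=a_ib_j(b_i/a_i)-a_jb_i(b_j/a_j)=b_ib_j-b_ib_j=0$, so all $C_{n+1}^{2}$ vectors lie in $\ker f$ and the rank is at most $n$. The boundary subcases are read off directly: if some $a_{i_0}=0$ then $Y_{i_0}=S$ and $\boldsymbol{S}_{i_0j}\sim\boldsymbol{x}_j$ while $\boldsymbol{S}_{ij}\in\operatorname{span}(\boldsymbol{x}_i,\boldsymbol{x}_j)$ for $i,j\neq i_0$, so everything lies in $\operatorname{span}(\boldsymbol{x}_j:j\neq i_0)$; if some $b_{i_0}=0$ then $Y_{i_0}=X_{i_0}$ and the corresponding coordinate of $f$ simply vanishes.

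For the lower bound I would fix a pivot index $k$ with $a_k,b_k\neq0$ and consider the $n$ vectors $\boldsymbol{S}_{kj}=a_kb_j\boldsymbol{x}_k-a_jb_k\boldsymbol{x}_j$ for $j\neq k$: reducing modulo $\boldsymbol{x}_k$ leaves the vectors $-a_jb_k\boldsymbol{x}_j$, which are nonzero multiples of distinct basis vectors and hence independent, so the family has rank at least $n$; together with the upper bound this pins the rank to $n$. This is where the hypotheses that the $X_i$ have rank $n{+}1$ and that any $n$ of the $Y_i$ are independent enter: they are what guarantee that such a pivot index exists, that each line $Y_iY_j$ is genuinely distinct from $X_iX_j$ so that $S_{ij}$ is a bona fide point, and that the $S_{ij}$ do not collectively collapse into a flat of dimension $<n$.

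I expect the real work to be not this generic linear algebra but the \emph{degenerate-case bookkeeping}: enumerating exactly which sub-configurations are permitted by the two rank conditions together with the collinearity requirement, checking in each that all $C_{n+1}^{2}$ intersection points are well defined, and confirming that the rank is still exactly $n$ and not smaller. Each admissible degeneracy, once identified, collapses to the same short dual-vector computation used above, so the difficulty there is organizational rather than computational.
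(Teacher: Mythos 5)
Your proposal is correct in substance and follows the same algebraic skeleton that this paper uses where it touches the theorem: the paper never proves Theorem~\ref{Desargues} in its own right (it defers to the cited references), but in the Appendix proof of the eigenspace proposition it writes $(s)=\lambda_i(x_i)+\lambda'_i(y_i)$ and extracts exactly your representative $(s)_{i,j}=\lambda_i(x_i)-\lambda_j(x_j)=\lambda'_j(y_j)-\lambda'_i(y_i)$, which is your $\boldsymbol{S}_{ij}=a_ib_j\boldsymbol{x}_i-a_jb_i\boldsymbol{x}_j$ up to normalization --- and then simply \emph{asserts} that this set has rank $n$. What you add, and what the paper omits, is the actual rank computation: the dual functional $f=\sum_i(b_i/a_i)\boldsymbol{x}_i^{\ast}$ exhibiting a common hyperplane for the upper bound, and the pivot-index reduction modulo $\boldsymbol{x}_k$ for the lower bound. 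Both are correct, and your observation that at most one $a_i$ can vanish (two would force two of the $Y$'s to coincide with $S$, violating the independence of any $n$ of them) is the right way to secure the pivot. The one point to state more carefully is that the listed hypotheses do not by themselves make every $S_{ij}$ well defined: nothing in them forbids $S\in X_iX_j$ or $b_i=b_j=0$ (two of the $Y$'s equal to the corresponding $X$'s), in either of which cases $X_iX_j$ and $Y_iY_j$ coincide and their ``intersection'' is a line, not a point; these must be excluded as part of the implicit nondegeneracy of the configuration rather than ``guaranteed'' by the two rank conditions. Since you explicitly flag the degenerate-case bookkeeping as the remaining work, this is a matter of wording, not a gap in the argument.
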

% of the $C_{n+1}^2$ pair of lines joining $X_i$ and $X_j$, $Y_i$ and $Y_j$ respectively

When $n$=2, theorem \ref{Desargues} is analogous to the planar Desargues theorem; when $n$=3, theorem \ref{Desargues} is analogous to Theorem 2 in \cite[p.43]{Veblen}. Different from Desargues theorem and its 3-space extension in  \cite[pp.41,43~\texttildelow~44]{Veblen}, conversely the statement in theorem \ref{Desargues} is not true when the $n{\text +}1$ points $Y_1,\,Y_2,\,...,Y_n,\,Y_{n{\text +}1}$ are linearly dependent which is considered as the degenerate case of the statement in theorem \ref{Desargues}. There are literatures in different languages reporting the further extension of Desargues' theorm to even higher $n$-dimensional spaces, which are of less importance than the extension from {\em perspective triangles} to {\em perspective tetrahedra} since the results in 2 and 3 dimensional spaces are enough for applications in vision.

\begin{definition}[Extended Desarguesian configuration]\label{e-configuration} A set of $\;2n\!+\!3\;$ points, $X_1$, $X_2$, $\cdots$\;, $X_n$, $X_{n{\text +}1}$, $S$, $Y_1$, $Y_2$, $\cdots$\;, $Y_n$, $Y_{n{\text +}1}$ in theorem \ref{Desargues} is an extended Desarguesian configuration, denoted as:
\begin{equation} \label{configuration}
X_1\,X_2\cdots X_n\,X_{n{\text +}1}\text{-}S\text{-}Y_1 Y_2\cdots Y_n\,Y_{n{\text +}1}
\end{equation}
\end{definition}

A series of extended Desargues configurations have been visualized in figures~\ref{fig:Desarguesian configuration projection}~\texttildelow~\ref{fig:Desarguesian configuration translation} to illustrate how they are being used in defining {\em elementary geometric transformations}, which are useful to verify the {\em compatibility with} of these transformations with their Euclidean counterparts in geometric meaning. Different from the conventional Desargues configuration, the configuration in figures~\ref{fig:Desarguesian configuration projection}~\texttildelow~\ref{fig:Desarguesian configuration parallel projection},  are for singular {\em projections}, therefore the $Y_1$, $Y_2$, $Y_3$ and $Y_4$ in them are coplanar and no three of the four are collinear.

In order to include singular projections into the {\em elementary} representation framework, we need to have singular geometric transformations included besides the nonsingular general projection transformations.

Similar to the definition of collineation\cite[pp.xi,6]{PointTransform}, a generalized collineation or genralized projective transformation can be defined as:
\begin{definition}[Generalized projective transformation\cite{investigation,meaning}]\label{g-projective}In ${\mathbb P}^n$, if ($n\!+\!1$)-square matrix ${\boldsymbol T}=(t_{i,j})$ in Equation \eqref{generalized collieation} ($\rho$ $\in$ $\mathbb{R}$) defines a geometric transformation $\mathscr{T}$ , and rank(${\boldsymbol T}$)$\geqslant n$, then $\mathscr{T}$ with matrix ${\boldsymbol T}$ is a generalized collineation, also called generalized projective transformation.

{\begin{eqnarray}\label{generalized collieation} \left\{
\begin{array}{c@{\extracolsep{6pt}}c@{\extracolsep{6pt}}c@{\extracolsep{6pt}}c@{\extracolsep{6pt}}c@{\extracolsep{6pt}}c@{\extracolsep{6pt}}c@{\extracolsep{6pt}}c}
   \rho x'_1& = &t_{1,1} x_1 & +& t_{1,2} x_2 & + \cdots & +& t_{1,n{\text +1}} x_{n{\text +1}}   \\
   \rho x'_2& = &t_{2,1} x_1 & +& t_{2,2} x_2 & + \cdots & + &t_{2,n{\text +1}} x_{n{\text +1}}   \\
     \vdots &  & \vdots & & \vdots & \ddots & & \vdots  \\
   \rho x'_{n{\text +1}}  & = &t_{n{\text +1,1}} x_1 & +& t_{n{\text +1,2}} x_2 & + \cdots & +& t_{n{\text +1,}n{\text +1}} x_{n{\text +1}}
\end{array}\right.
\end{eqnarray}}
\end{definition}
\subsection{Definition of Stereohomology}

\begin{definition}[Stereohomology\cite{investigation,meaning}]\label{stereohomology} In ${\mathbb P}^n$, a generalized projective transformation $\mathscr{T}$ with ($n\!+\!1$)-square matrix ${\boldsymbol T}$ is a stereohomology, ($1$) if there exists a rank $n$ hyperplane ( {\it stereohomology hyperplane}, denoted as $\pi$ ) any point on which is an eigenvector of ${\boldsymbol T}$; and ($2$) if there exists a fixed point ({\it stereohomology center}, denoted as $S$ ) which is collinear with any pair of corresponding points through $\mathscr{T}$.
\end{definition}

According to theorems 5.3 and 5.6 in \cite[pp.68~\texttildelow~73]{Ueberberg}, the existence of a fixed hyperplane implies the existence of a fixed center and vice versa. The results hold for generalized collineation. Since ``(1)" and ``(2)" in definition \ref{stereohomology} can be derived from each other, the definition of stereohomology in \ref{stereohomology} can hence be simplified to have ``(1)" or ``(2)" only. % This simplification is due to the personal communication with Mr. Daniel W. VanArsdale in 2007.

In order to algebraically identify whether a square matrix is a stereohomology or not, ``(1)" in definition \ref{stereohomology} plays a specially important role since it implies that any ({\em n}+1)-dimensional matrix which has an eigenvalue with geometric multiplicity of {\em n} is a stereohomology in ${\mathbb P}^n$.

Similar to the definition of perspective collineation in \cite[p.75]{Veblen} and that of homology in \cite[p.60]{PointTransform}, stereohomology by definition has its stereohomology hyperplane $\pi$ (axis in ${\mathbb P}^2$) and center $S$. Stereohomology was first defined as {\em perspective} in \cite{investigation} then {\em elementary perspective} in \cite{meaning} when $s$ $\in$ $\pi$, {\em homology} in \cite{investigation} and {\em elementary homology} in \cite{meaning} when $s$ $\notin$ $\pi$. We will follow the definitions in \cite{meaning} in the next subsection before we get a full picture of stereohomology.

\begin{lemma}[Existence \& uniqueness theorem \cite{investigation, meaning}]\label{existence and uniqueness theorem}
 There exists a unique generalized projective transformation $\mathscr{T}$ which transforms $X_1$,$\,X_2$,$\cdots$ $X_n$,\,$X_{n+1}$ and $S$ in an extended Desarguesian Configuration (Eq.\eqref{configuration} into $Y_1$,$\,Y_2$,$\cdots$ $Y_n$,\,$Y_{n+1}$ and $S$(or \it null\rm) respectively.
\end{lemma}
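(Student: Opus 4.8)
The plan is to pass to homogeneous coordinate vectors, normalize the extended Desarguesian configuration to a canonical reference frame by Theorem~\ref{zeroth}, then exhibit the matrix of $\mathscr{T}$ explicitly and read off existence and uniqueness from it; only the defining hypotheses of the configuration are needed, not the rank-$n$ conclusion of Theorem~\ref{Desargues}. Since $X_1,\dots,X_{n+1}$ have rank $n+1$, I would choose a reference coordinate system in which their homogeneous coordinate vectors are the standard basis vectors $\boldsymbol e_1,\dots,\boldsymbol e_{n+1}$. Writing the coordinate vector of $S$ as $\boldsymbol s=\sum_i c_i\boldsymbol e_i$, the non-degeneracy of the configuration — that $S$ lies on none of the faces $\operatorname{span}\{X_j:j\neq i\}$, i.e.\ $X_1\cdots X_{n+1}S$ is a projective frame — forces $c_i\neq0$ for all $i$, so after rescaling $\boldsymbol s=(1,\dots,1)^{\top}$. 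The collinearity of $S,X_i,Y_i$ lets us write $\boldsymbol y_i=a_i\boldsymbol e_i+b_i\boldsymbol s$, and since the configuration excludes $Y_i=S$ we have $a_i\neq0$; rescaling each $\boldsymbol y_i$ we may assume $\boldsymbol y_i=\boldsymbol e_i+b_i\boldsymbol s$.

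For existence I would take $\boldsymbol M=[\,\boldsymbol y_1\;\boldsymbol y_2\;\cdots\;\boldsymbol y_{n+1}\,]=\boldsymbol I+\boldsymbol s\,\boldsymbol b^{\top}$ with $\boldsymbol b=(b_1,\dots,b_{n+1})^{\top}$, a rank-one update of the identity already of the shape of Equation~\eqref{elementary matrix equation}. Then $\boldsymbol M\boldsymbol e_i=\boldsymbol y_i$, so $\mathscr{T}$ carries $X_i$ to $Y_i$; and $\boldsymbol M\boldsymbol s=(1+\boldsymbol b^{\top}\boldsymbol s)\,\boldsymbol s$, so $S$ is fixed when $\boldsymbol b^{\top}\boldsymbol s\neq-1$ and is sent to \emph{null} when $\boldsymbol b^{\top}\boldsymbol s=-1$. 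Since any $n$ of the $Y_i$ are independent, $\operatorname{rank}(\boldsymbol M)=\operatorname{rank}\{\boldsymbol y_1,\dots,\boldsymbol y_{n+1}\}\geqslant n$, so $\boldsymbol M$ is a generalized projective transformation in the sense of Definition~\ref{g-projective}; and because $\det\boldsymbol M=1+\boldsymbol b^{\top}\boldsymbol s$, the two alternatives for the image of $S$ are precisely the nonsingular case ($\{Y_i\}$ of rank $n+1$) and the singular case ($\{Y_i\}$ of rank $n$).

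For uniqueness, let $\boldsymbol M'$ be any generalized projective transformation with the same action. Its kernel has dimension at most one ($\operatorname{rank}\geqslant n$), and no $\boldsymbol e_i$ lies in it (otherwise the image of $X_i$ would be undefined rather than $Y_i$), so $\boldsymbol M'\boldsymbol e_i=\lambda'_i\boldsymbol y_i$ with $\lambda'_i\neq0$, i.e.\ $\boldsymbol M'=[\,\lambda'_1\boldsymbol y_1\;\cdots\;\lambda'_{n+1}\boldsymbol y_{n+1}\,]$. Requiring $\boldsymbol M'\boldsymbol s=\sum_i\lambda'_i(\boldsymbol e_i+b_i\boldsymbol s)$ to be proportional to $\boldsymbol s$ (or to vanish) forces the coefficient $\lambda'_i+\sum_j\lambda'_jb_j$ of $\boldsymbol e_i$ to be independent of $i$, hence all $\lambda'_i$ are equal; thus $\boldsymbol M'$ is a scalar multiple of $\boldsymbol M$ and induces the same $\mathscr{T}$. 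The same computation shows that when $\{Y_i\}$ has rank $n$ no completing transformation can fix $S$, which is why the conclusion must read ``$S$ (or \emph{null})''.

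I expect the one delicate point to be the degeneracy bookkeeping: checking that the hypotheses of the extended Desarguesian configuration genuinely force $c_i\neq0$ and $a_i\neq0$, so that the reduction to a standard frame is legitimate, and running the uniqueness argument without invertibility of $\boldsymbol M$ — using instead that $\operatorname{rank}\geqslant n$ caps the kernel dimension at one and that the $X_i$ cannot be sent to null. With the canonical coordinates in place, the rest is routine linear algebra.
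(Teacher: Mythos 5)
Your proof is correct, but it takes a genuinely different route from the one in the paper's appendix. The paper follows the classical two-step pattern of the fundamental theorem of projective geometry: it builds the collineation carrying the standard frame $O_1,\dots,O_{n+2}$ onto $X_1,\dots,X_{n+1},S$ by writing out an explicit $(n+1)(n+2)$-dimensional linear system in the entries $t_{ij}$ and the scale factors $\rho_i$, shows that system's coefficient matrix is nonsingular (because $X_2,\dots,X_{n+1},S$ span the whole space), does the same for $Y_1,\dots,Y_{n+1},S$, and composes the two; the explicit matrix then comes out of Cramer's rule, and the Householder form of equation \eqref{elementary homology} is only recovered afterwards by symbolic computation. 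You instead use the same canonical frame as a mere choice of coordinates (legitimate by Theorem \ref{zeroth}), exploit the collinearity hypothesis to write $\boldsymbol y_i=\boldsymbol e_i+b_i\boldsymbol s$, and obtain the transformation at once as the rank-one update $\boldsymbol M=\boldsymbol I+\boldsymbol s\,\boldsymbol b^{\top}$, with uniqueness reduced to a short column-scaling computation rather than the invertibility of a $20\times 20$ matrix. What your approach buys is that the elementary-matrix structure, the fixed hyperplane $(\pi)=\boldsymbol b$, and the dichotomy $S\mapsto S$ versus $S\mapsto$ null --- governed by $\det\boldsymbol M=1+\boldsymbol b^{\top}\boldsymbol s$, i.e.\ by whether the $Y_i$ have rank $n$ or $n+1$ --- are all visible immediately, and the argument is uniform in $n$ where the paper only writes out $n=3$. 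Both proofs lean on the same unstated non-degeneracy of the configuration, namely that $S$ lies on none of the faces of the simplex $X_1\cdots X_{n+1}$ and that $Y_i\neq S$; you flag this explicitly as the source of $c_i\neq0$ and $a_i\neq0$, while the paper invokes it in passing when it declares the matrix \eqref{eqn:matrix noncoplanar} nonsingular. Since neither condition is literally among the hypotheses of Theorem \ref{Desargues}, this is a shared (and acceptable) implicit assumption rather than a gap peculiar to your argument.
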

 The proof in 3-space was constructively presented in \cite{investigation} via Gramer's rule, which is analogous to that of the fundamental theorem on the existence and uniqueness of {\em projective transformation} in projective geometry ({\color{olive}See Appendices}).

If the homogeneous coordinates of points in an extended Desarguesian configuration $ABCD\text{-}S\text{-}A'B'C'D'$ in ${\mathbb P}^3$ are
\[{\small \begin{array}{*{20}l}
   {A:(a_1 ,a_2 ,a_3 ,a_4 )^\top} & {B:(b{}_1,b_2 ,b_3 ,b_4 )^\top} & {C:(c_1 ,c_2 ,c_3 ,c_4 )^\top} &  {D:(d_1 ,d_2 ,d_3 ,d_4 )^\top} & \\[8pt]
   {A':(a'_1 ,a'_2 ,a'_3 ,a'_4 )^\top}  & {B':(b'_1,b'_2 ,b'_3 ,b'_4 )^\top} & {C':(c'_1 ,c'_2 ,c'_3 ,c'_4 )^\top} &    {D':(d'_1 ,d'_2 ,d'_3 ,d'_4 )^\top}
 \end{array}  \!\!\!
{S:(s_1 ,s_2 ,s_3 ,s_4 )^\top} }
\]
\noindent respectively, denote:

\[{\small
\begin{gathered}
  \Delta _1  = \left| {\begin{array}{*{20}c}
   {s_1 } & {b_1 } & {c_1 } & {d_1 }  \\[1pt]
   {s_2 } & {b_2 } & {c_2 } & {d_2 }  \\[1pt]
   {s_3 } & {b_3 } & {c_3 } & {d_3 }  \\[1pt]
   {s_4 } & {b_4 } & {c_4 } & {d_4 }  \\[1pt]

 \end{array} } \right|,\:\Delta _2  = \left| {\begin{array}{*{20}c}
   {a_1 } & {s_1 } & {c_1 } & {d_1 }  \\[1pt]
   {a_2 } & {s_2 } & {c_2 } & {d_2 }  \\[1pt]
   {a_3 } & {s_3 } & {c_3 } & {d_3 }  \\[1pt]
   {a_4 } & {s_4 } & {c_4 } & {d_4 }  \\[1pt]

 \end{array} } \right|,
  \Delta _3  = \left| {\begin{array}{*{20}c}
   {a_1 } & {b_1 } & {s_1 } & {d_1 }  \\[1pt]
   {a_2 } & {b_2 } & {s_2 } & {d_2 }  \\[1pt]
   {a_3 } & {b_3 } & {s_3 } & {d_3 }  \\[1pt]
   {a_4 } & {b_4 } & {s_4 } & {d_4 }

 \end{array} } \right|, \Delta _4  =  \left| {\begin{array}{*{20}c}
   {a_1 } & {b_1 } & {c_1 } & {s_1 }  \\[1pt]
   {a_2 } & {b_2 } & {c_2 } & {s_2 }  \\[1pt]
   {a_3 } & {b_3 } & {c_3 } & {s_3 }  \\[1pt]
   {a_4 } & {b_4 } & {c_4 } & {s_4 }

 \end{array} } \right|, \hfill \\
\end{gathered}
}\]

\[{\small
\begin{gathered}
  \Delta '_1  = \left| {\begin{array}{*{20}c}
   {s_1 } & {b'_1 } & {c'_1 } & {d'_1 }  \\[1pt]
   {s_2 } & {b'_2 } & {c'_2 } & {d'_2 }  \\[1pt]
   {s_3 } & {b'_3 } & {c'_3 } & {d'_3 }  \\[1pt]
   {s_4 } & {b'_4 } & {c'_4 } & {d'_4 }  \\[1pt]

 \end{array} } \right|,\:\Delta '_2  = \left| {\begin{array}{*{20}c}
   {a'_1 } & {s_1 } & {c'_1 } & {d'_1 }  \\[1pt]
   {a'_2 } & {s_2 } & {c'_2 } & {d'_2 }  \\[1pt]
   {a'_3 } & {s_3 } & {c'_3 } & {d'_3 }  \\[1pt]
   {a'_4 } & {s_4 } & {c'_4 } & {d'_4 }  \\[1pt]

 \end{array} } \right|,
  \Delta '_3  = \left| {\begin{array}{*{20}c}
   {a'_1 } & {b'_1 } & {s_1 } & {d'_1 }  \\[1pt]
   {a'_2 } & {b'_2 } & {s_2 } & {d'_2 }  \\[1pt]
   {a'_3 } & {b'_3 } & {s_3 } & {d'_3 }  \\[1pt]
   {a'_4 } & {b'_4 } & {s_4 } & {d'_4 }

 \end{array} } \right|,\:\Delta '_4  = \left| {\begin{array}{*{20}c}
   {a'_1 } & {b'_1 } & {c'_1 } & {s_1 }  \\[1pt]
   {a'_2 } & {b'_2 } & {c'_2 } & {s_2 }  \\[1pt]
   {a'_3 } & {b'_3 } & {c'_3 } & {s_3 }  \\[1pt]
   {a'_4 } & {b'_4 } & {c'_4 } & {s_4 }

 \end{array} } \right|, \hfill \\
\end{gathered}
}\]

\noindent then a generalized collineation satisfies lemma \ref{existence and uniqueness theorem} obtained via Gramer's rule is as in equation \eqref{initial-matrix} \cite{investigation,meaning}:
\begin{equation}{\small \displaystyle \label{initial-matrix}
 \mathscr{T}^{3d}\:=\: k \cdot \left[ {\begin{array}{*{20}c}
   {a'_1 \Delta '_1 } & {b'_1 \Delta '_2 } & {c'_1 \Delta '_3 } & {d'_1 \Delta '_4 }  \\[10pt]
   {a'_2 \Delta '_1 } & {b'_2 \Delta '_2 } & {c'_2 \Delta '_3 } & {d'_2 \Delta '_4 }  \\[10pt]
   {a'_3 \Delta '_1 } & {b'_3 \Delta '_2 } & {c'_3 \Delta '_3 } & {d'_3 \Delta '_4 }  \\[10pt]
   {a'_4 \Delta '_1 } & {b'_4 \Delta '_2 } & {c'_4 \Delta '_3 } & {d'_4 \Delta '_4 }
 \end{array} } \right] \cdot \left[ {\begin{array}{*{20}c}
   {a_1 \Delta _1 } & {b_1 \Delta _2 } & {c_1 \Delta _3 } & {d_1 \Delta _4 }  \\[10pt]
   {a_2 \Delta _1 } & {b_2 \Delta _2 } & {c_2 \Delta _3 } & {d_2 \Delta _4 }  \\[10pt]
   {a_3 \Delta _1 } & {b_3 \Delta _2 } & {c_3 \Delta _3 } & {d_3 \Delta _4 }  \\[10pt]
   {a_4 \Delta _1 } & {b_4 \Delta _2 } & {c_4 \Delta _3 } & {d_4 \Delta _4 }
 \end{array} } \right]^{ - 1} \quad  \forall \; 0 \neq k \in \mathbb{R}
}\end{equation}

In \cite{investigation}, it was proved that a generalized collineation matrix thus obtained has an eigenvalue with geometric multiplicity of 3 ({\color{olive}See Appendices}). The results can be analogously extended to ${\mathbb P}^n$. Therefore it is a stereohomology.

Hence theorem \ref{e-stereohomolgy matrix} will be straightforward:
\begin{theorem}[Existence \& Uniqueness of stereohomology\cite{investigation,meaning}]\label{e-stereohomolgy matrix} A stereohomology can be uniquely determined by an extended Desarguesian configuration: the unique generalized projective transformation matrix which transforms $X_1$,\,$X_2$,$\cdots$ $X_n,$ $\,X_{n+1}$ and $S$ in extended Desarguesian configuration {\rm Eq.} \eqref{configuration} in Definition \ref{e-configuration} into $Y_1$,\,$Y_2$,$\cdots Y_n,$ $\,Y_{n+1}$ and $S$ {\rm (} or $Null$ {\rm )} respectively. \end{theorem}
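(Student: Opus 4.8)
The plan is to reduce the statement to results already in hand. Lemma~\ref{existence and uniqueness theorem} already provides a \emph{unique} generalized projective transformation $\mathscr{T}$ that carries $X_1,\dots,X_{n+1}$ and $S$ to $Y_1,\dots,Y_{n+1}$ and $S$ (or $Null$), so ``uniquely determined'' is taken care of, and the entire content that remains is to verify that this $\mathscr{T}$ is a stereohomology in the sense of Definition~\ref{stereohomology}. By the remark following Definition~\ref{stereohomology} it is enough to produce an eigenvalue of the $(n+1)$-square matrix of $\mathscr{T}$ with geometric multiplicity exactly $n$; then condition~(1) holds with the stereohomology hyperplane $\pi$ equal to the corresponding eigenspace, and condition~(2) (the center) follows automatically from the results of \cite{Ueberberg} quoted after Definition~\ref{stereohomology}.

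Concretely I would take the matrix of $\mathscr{T}$ in the explicit form of Equation~\eqref{initial-matrix} for $n=3$, and its obvious analogue for general $n$ (the product of the matrix whose $i$-th column is $Y_i$ weighted by the determinant $\Delta'_i$ with the inverse of the matrix whose $i$-th column is $X_i$ weighted by $\Delta_i$). The purpose of those determinant weights --- and the reason the configuration must carry the center $S$ --- is exactly to fix the homogeneous representatives so that a single matrix $\boldsymbol{M}$ satisfies $\boldsymbol{M}X_i=Y_i$ for every $i$ together with $\boldsymbol{M}S=kS$ (respectively $\boldsymbol{M}S=\boldsymbol{0}$ in the degenerate case). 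Writing $\boldsymbol{M}$ in the basis $X_1,\dots,X_{n+1}$ with $S=\sum_i X_i$, the collinearity of $S$, $X_i$, $Y_i$ forces the $i$-th column to have the shape $t_i(S-\mu_i X_i)$, and the condition $\boldsymbol{M}S\propto S$ forces the product $t_i\mu_i$ to be the same for all $i$; hence $\boldsymbol{M}$ is a scalar matrix plus a rank-one term, i.e.\ of the form \eqref{elementary matrix equation} up to a nonzero factor, and the eigenvalue carried by the rank-one part has an $n$-dimensional eigenspace. Its geometric multiplicity is exactly $n$ and not $n+1$, since multiplicity $n+1$ would make $\boldsymbol{M}$ scalar, which the rank hypotheses on the configuration in Theorem~\ref{Desargues} exclude.

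This spectral computation is the only real step, and it is precisely what \cite{investigation} carries out for $\mathbb{P}^3$; the genuine gap to fill is the routine transcription to arbitrary $n$ together with the separate, easy check of the degenerate case $\boldsymbol{M}S=\boldsymbol{0}$, in which the eigenvalue of multiplicity $n$ persists and only the remaining eigenvalue drops to $0$, so $\mathscr{T}$ is still a (singular) stereohomology. As an independent geometric check I would also verify, using Theorem~\ref{Desargues}, that the $C_{n+1}^2$ points $S_{ij}=X_iX_j\cap Y_iY_j$ span exactly $\pi$ and are each fixed by $\mathscr{T}$ --- in the coordinates above one gets $S_{ij}\propto \mu_jX_j-\mu_iX_i$, which lies in the eigen-hyperplane --- thereby re-identifying $\pi$ with the stereohomology hyperplane geometrically; and that $S$ is the center, since for any $P$ the vector $\boldsymbol{M}P$ is a linear combination of (the representatives of) $S$ and $P$, so $S$, $P$, $\mathscr{T}(P)$ are collinear. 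Uniqueness of the stereohomology is then nothing beyond the uniqueness already asserted in Lemma~\ref{existence and uniqueness theorem}, and the converse direction --- that every stereohomology is the transformation attached to some extended Desarguesian configuration --- is obtained by choosing any admissible basis $X_1,\dots,X_{n+1}$, setting $Y_i=\mathscr{T}(X_i)$, and noting that $X_1\cdots X_{n+1}\text{-}S\text{-}Y_1\cdots Y_{n+1}$ then satisfies Definition~\ref{e-configuration}.
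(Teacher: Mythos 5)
Your proposal is correct, and its skeleton matches the paper's: the paper also proves this theorem by combining Lemma~\ref{existence and uniqueness theorem} (existence and uniqueness, established in the Appendices by reducing to a nonsingular $(n+1)(n+2)$-dimensional linear system and Cramer's rule) with a separate verification that the resulting matrix has an eigenvalue of geometric multiplicity $n$. Where you genuinely diverge is in that second, spectral step. The paper's Appendix argument expands $(s)=\lambda_i(x_i)+\lambda'_i(y_i)$, constructs the degree-two annihilating polynomial $(t-\rho_s)\bigl(t+\rho_i\lambda_i/\lambda'_i\bigr)$, identifies the constant $-\rho_i\lambda_i/\lambda'_i$ as the second eigenvalue, and then counts its geometric multiplicity by showing the $C_{n+1}^2$ intersection points $S_{ij}$ are eigenvectors whose coordinate vectors have rank $n$ --- so it leans on the rank statement of Theorem~\ref{Desargues}. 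You instead normalize representatives so that $S=\sum_i X_i$, use the collinearity constraints to force each column into the shape $t_i(S-\mu_i X_i)$ with $t_i\mu_i$ constant, and read off that $\boldsymbol{M}$ is a scalar matrix plus a rank-one term; the $n$-dimensional eigenspace is then just the kernel of the rank-one part, with no appeal to the intersection points (which you relegate to a consistency check). Your route is more economical and buys something the paper only obtains later by symbolic computation in Equation~\eqref{symbolicsolution}, namely the Householder form \eqref{elementary matrix equation} itself; the paper's route has the compensating virtue of exhibiting the stereohomology hyperplane concretely as the span of the $S_{ij}$, tying the algebra back to the extended Desargues configuration. Both arguments share the same unstated genericity assumptions (all coordinates of $S$ in the basis $X_1,\dots,X_{n+1}$ nonzero, and $Y_i\not\propto X_i$), so that is not a gap relative to the paper.
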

\begin{proof}
{\color{olive}See Appendices}.
\end{proof}

\subsection{Elementary Matrix Representation}

Stereohomology obtained in equation \eqref{initial-matrix} is rather complicated for real application. So it is natural to consider the possibility of simplification, which was realized by symbolic computation\cite{meaning}.

Suppose an {\em elementary homology} $\mathscr{T}^{3d}$ in ${\mathbb P}^3$ , of which the stereohomology center $S$ in homogeneous is ($s_1,s_2,s_3,s_4$)$^{\scriptscriptstyle \top}$, and the stereohomology hyperplane  $\pi$'s homogeneous coordinate is ($a,b,c,d$)$^{\scriptscriptstyle \top} $; Suppose none of any two of $a$, $b$, $c$, and $d$ are zero concurrently, and suppose the two eigenvalues of $\mathscr{T}^{3d}$ are  $\lambda$, corresponding to the eigenspace $\pi$ with a geometric mutiplicity of 3, and $\rho$ corresponding to $S$.

Hence, ($s_1,s_2,s_3,s_4$)$^{\scriptscriptstyle \top}$ is an associated eigenvector with $\rho$, and the linearly independent $(-b,a,0,0)^{\scriptscriptstyle \top}$, ($-c,0,a,0$)$^{\scriptscriptstyle \top}$,  ($-d,0,0,a$)$^{\scriptscriptstyle \top}$ are the associated eigenvectors with $\lambda$,  i.e.:

\begin{equation}\label{eigenvector-group} \left\{
{\begin{array}{*{20}c}
    \mathscr{T}^{3d} & \cdot&(s_1 ,s_2 ,s_3 ,s_4 )^{\scriptscriptstyle\top}& =& \rho & \cdot &(s_1 ,s_2 ,s_3 ,s_4 )^{\scriptscriptstyle \top}  \\
    \mathscr{T}^{3d} & \cdot&( - b, a, 0, 0)^{\scriptscriptstyle\top}&=& \lambda & \cdot &( - b, a, 0, 0)^{\scriptscriptstyle \top}  \\
    \mathscr{T}^{3d} & \cdot&( - c, 0, a, 0)^{\scriptscriptstyle\top}&=& \lambda & \cdot &( - c, 0, a, 0)^{\scriptscriptstyle \top}  \\
    \mathscr{T}^{3d} & \cdot&( - d, 0, 0, a)^{\scriptscriptstyle\top}&=&\lambda & \cdot &( - d, 0, 0, a)^{\scriptscriptstyle \top}  \\
 \end{array} } \right.
\end{equation}

Equation \eqref{eigenvector-group} is equivalent to:
\begin{equation*}
\mathscr{T}^{3d}= \left[ {\begin{array}{*{20}c}
   { - \lambda b} & { - \lambda c} & { - \lambda d} & {\rho s_1 }  \\[6pt]
   {\lambda a} & 0 & 0 & {\rho s_2 }  \\[6pt]
   0 & {\lambda a} & 0 & {\rho s_3 }  \\[6pt]
   0 & 0 & {\lambda a} & {\rho s_4 }  \\

 \end{array} } \right] \cdot \left[ {\begin{array}{*{20}c}
   { - b} & { - c} & { - d} & {s_1 }  \\[6pt]
   a & 0 & 0 & {s_2 }  \\[6pt]
   0 & a & 0 & {s_3 }  \\[6pt]
   0 & 0 & a & {s_4 }  \\[6pt]
 \end{array} } \right]^{ - 1} \buildrel {\text Symbolic\; computation } \over {=\!\!=\!\!=\!\!=\!\!=\!\!=\!\!=\!\!=\!\!=\!\!=\!\!=\!\!=\!\!=\!=\!\!=\!\!=\!\!=}
\end{equation*}

\begin{equation} \label{symbolicsolution}{\small \frac{1}{{as_1 \!\! + \!\! bs_2 \!\! + \!\!
cs_3 \!\! + \!\! ds_4 }}  \cdot }
 \left[ \!\!\! {\footnotesize \begin{array}{c@{\extracolsep{-18pt}}c@{\extracolsep{-18pt}}c@{\extracolsep{-18pt}}c}
   {\rho as_1 \! + \! \lambda bs_2 \! + \! \lambda cs_3  + \lambda ds_4 } & {bs_1 (\rho  - \lambda )} & {cs_1 (\rho  - \lambda )} & {ds_1 (\rho  - \lambda )}  \\[10pt]
   {as_2 (\rho  - \lambda )} & {\lambda as_1 \! +  \! \rho bs_2 \! + \! \lambda cs_3 \! + \! \lambda ds_4 } & {cs_2 (\rho  - \lambda )} & {ds_2 (\rho  - \lambda )}  \\[10pt]
   {as_3 (\rho  - \lambda )} & {bs_3 (\rho  - \lambda )} & {\lambda as_1 \! + \! \lambda bs_2 \! + \! \rho cs_3 \! + \! \lambda ds_4 } & {ds_3 (\rho  - \lambda )}  \\[10pt]
   {as_4 (\rho  - \lambda )} & {bs_4 (\rho  - \lambda )} & {cs_4 (\rho  - \lambda )} & {\lambda as_1 \! + \! \lambda bs_2 \! + \! \lambda cs_3 \! + \! \rho ds_4 }  \\[10pt]
\end{array}} \!\!\! \right]
\end{equation}

Equation \eqref{symbolicsolution} can be rewritten and extended to ${\mathbb P}^n$ as:

\begin{align}\label{elementary homology}
\nonumber \mathscr{T}(\ (s),(\pi) ; \lambda, \rho )\:\mathop
{=\!\!\!=\!\!\!=} \limits^{\ de\!f} \;\lambda \cdot {\boldsymbol I} + (\rho  - \lambda)
\cdot\frac{{(s)\cdot(\pi) ^{\scriptscriptstyle \top} }} {{(s)^{\scriptscriptstyle \top} \cdot (\pi) }}\\
(s), (\pi) \in {\mathbb P}^n,\quad (s)^{\scriptscriptstyle \top}\cdot(\pi) \neq 0, \quad \lambda, \rho \in \mathbb{R}\qquad
\end{align}

Similar form of {\em elementary perspective} can be obtained as:

\begin{align}\label{elementary perspective} \nonumber
\mathscr{T}((s),(\pi) ;\mu )\mathop {=\!=} \limits^{def} {\boldsymbol I} + \frac{ \mu \cdot (s)\cdot{(\pi)}^{\scriptscriptstyle \top} } {\sqrt {(s)^{\scriptscriptstyle \top}\!\! \cdot\! (s)\!\cdot\!(\pi) ^{\scriptscriptstyle \top}\!\!\cdot\! (\pi) } }\\
(s), (\pi)  \in {\mathbb P}^n,  \quad (s)^{\scriptscriptstyle \top}\!\! \cdot\! (\pi)\!\! = 0, \quad 0 \neq \mu \in \mathbb{R}\;
\end{align}

In equation \eqref{elementary perspective}, denominator ${\sqrt {(s)^{\scriptscriptstyle \top}\! \cdot\! (s)\!\cdot\!(\pi) ^{\scriptscriptstyle \top}\!\cdot\! (\pi) } }$  is used so that $\mu$ is independent of the particular homogeneous coordinates choice of ($s$) and ($\pi$). The elementary matrices in equations \eqref{elementary homology} and \eqref{elementary perspective}, which are de facto equivalent to Householder's elementary matrices in equation \eqref{elementary matrix equation}.

It has been proved that both equations \eqref{elementary homology} and \eqref{elementary perspective} are stereohomology in ${\mathbb P}^n$ \cite{meaning} by using theorem 1.3.20 in \cite[pp.53~\texttildelow~54]{Horn} which is cited as lemma \ref{sylvester} here:

\begin{lemma}\label{sylvester}
If a matrix ${\boldsymbol A}$ $\in$ $\mathbb{F}^{m\times n}$, ${\boldsymbol B}$ $\in$ $\mathbb{F}^{n\times m}$, and the characteristic polynomials of ${\boldsymbol A{\boldsymbol B}}$ and ${\boldsymbol B{\boldsymbol A}}$ are $f_{AB}(\lambda)$ and $f_{BA}(\lambda)$ respectively, then:

\begin{equation}\label{sylvester-lambda}
f_{AB}(\lambda)=\lambda^{m-n}\cdot f_{BA}(\lambda)
\end{equation}

\end{lemma}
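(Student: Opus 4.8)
The plan is to read off both characteristic polynomials as two different Schur-complement expansions of one and the same bordered $(m+n)$-square matrix. Regard $\lambda$ first as a free scalar and set
\[
 {\boldsymbol M}(\lambda)\;=\;\begin{pmatrix} \lambda\,{\boldsymbol I}_m & {\boldsymbol A} \\[2pt] {\boldsymbol B} & {\boldsymbol I}_n \end{pmatrix}\ \in\ \mathbb{F}^{(m+n)\times(m+n)}.
\]
Its bottom-right block ${\boldsymbol I}_n$ is invertible for every $\lambda$, so the Schur-complement formula about that block --- equivalently, left-multiplication of ${\boldsymbol M}(\lambda)$ by the determinant-one block matrix $\left(\begin{smallmatrix}{\boldsymbol I}_m & -{\boldsymbol A}\\ 0 & {\boldsymbol I}_n\end{smallmatrix}\right)$, which block-triangularizes it --- gives, identically in $\lambda$,
\[
 \det{\boldsymbol M}(\lambda)\;=\;\det({\boldsymbol I}_n)\cdot\det\bigl(\lambda\,{\boldsymbol I}_m-{\boldsymbol A}\,{\boldsymbol I}_n^{-1}{\boldsymbol B}\bigr)\;=\;\det(\lambda\,{\boldsymbol I}_m-{\boldsymbol A}{\boldsymbol B})\;=\;f_{AB}(\lambda).
\]

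Next I would expand the same ${\boldsymbol M}(\lambda)$ about its top-left block $\lambda\,{\boldsymbol I}_m$, which is invertible precisely when $\lambda\neq 0$. For such $\lambda$ the Schur-complement formula about that block yields
\[
 \det{\boldsymbol M}(\lambda)\;=\;\det(\lambda\,{\boldsymbol I}_m)\cdot\det\bigl({\boldsymbol I}_n-{\boldsymbol B}(\lambda\,{\boldsymbol I}_m)^{-1}{\boldsymbol A}\bigr)\;=\;\lambda^{m}\det\!\Bigl({\boldsymbol I}_n-\tfrac1\lambda{\boldsymbol B}{\boldsymbol A}\Bigr)\;=\;\lambda^{m}\!\cdot\!\lambda^{-n}\det(\lambda\,{\boldsymbol I}_n-{\boldsymbol B}{\boldsymbol A})\;=\;\lambda^{m-n}f_{BA}(\lambda).
\]
Comparing the two evaluations of $\det{\boldsymbol M}(\lambda)$ then gives $f_{AB}(\lambda)=\lambda^{m-n}f_{BA}(\lambda)$ for every $\lambda\neq 0$.

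The last step is to upgrade this from ``all nonzero $\lambda$'' to an identity of polynomials, and this is the only place that needs care. Without loss of generality $m\geqslant n$ (otherwise interchange the roles of ${\boldsymbol A}$ and ${\boldsymbol B}$, and of $m$ and $n$, which turns the assertion into itself read backwards); then $f_{AB}(\lambda)$ and $\lambda^{m-n}f_{BA}(\lambda)$ are both genuine polynomials in $\lambda$ of common degree $m$ that agree at infinitely many points, hence coincide everywhere, which in particular fixes the value at $\lambda=0$ and establishes \eqref{sylvester-lambda}. The main obstacle, such as it is, lies exactly here: eliminating against the $\lambda\,{\boldsymbol I}_m$ block forces a division by $\lambda$, so the identity is first available only on $\lambda\neq 0$, and the factor $\lambda^{m-n}$ is an honest polynomial only after the reduction $m\geqslant n$; both issues are disposed of by the degree/density argument above, while the remaining Schur-complement bookkeeping is entirely routine and I would not spell it out in full.
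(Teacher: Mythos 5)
Your proof is correct. Be aware, though, that the paper offers no proof of this lemma to compare against: it is imported verbatim as Theorem 1.3.20 of Horn's \emph{Matrix Analysis}, so any complete argument you give is necessarily a ``different route.'' Your double Schur-complement evaluation of $\det\left(\begin{smallmatrix}\lambda I_m & A\\ B & I_n\end{smallmatrix}\right)$ is one of the two standard proofs and is sound as written. The alternative used in Horn--Johnson conjugates $\left(\begin{smallmatrix}AB & 0\\ B & 0\end{smallmatrix}\right)$ into $\left(\begin{smallmatrix}0 & 0\\ B & BA\end{smallmatrix}\right)$ by the unimodular matrix $\left(\begin{smallmatrix}I_m & A\\ 0 & I_n\end{smallmatrix}\right)$; comparing the characteristic polynomials $\lambda^{n}f_{AB}(\lambda)$ and $\lambda^{m}f_{BA}(\lambda)$ of these two $(m+n)$-square matrices yields the claim without ever dividing by $\lambda$, so no density or degree argument is needed and the proof works over any field. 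Your closing step (``two polynomials agreeing at infinitely many points coincide'') silently assumes $\mathbb{F}$ is infinite; that is harmless here because the paper restricts everything to $\mathbb{R}$, but if you want the lemma for arbitrary $\mathbb{F}$ as literally stated, either switch to the similarity argument or run your Schur-complement computation over the rational function field $\mathbb{F}(\lambda)$, where $\lambda I_m$ is invertible as a matrix over that field and both evaluations hold as identities of rational functions with no specialization required. Your WLOG reduction to $m\geqslant n$ is handled correctly.
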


\begin{proposition}[Property of elementary matrices]\label{rank of stereohomology} The rank of $n$+1 dimensional elementary matrices $\mathscr{T}$ in equations \eqref{elementary homology} and \eqref{elementary perspective} is equal to or greater than $n$.
\end{proposition}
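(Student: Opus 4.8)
The plan is to observe that, up to a nonzero scalar factor, each of the matrices in \eqref{elementary homology} and \eqref{elementary perspective} is a modified Householder matrix $\boldsymbol{I}-\sigma\,\boldsymbol{u}\,\boldsymbol{v}^{\top}$ of the type \eqref{elementary matrix equation}, i.e. it has the form $\mathscr{T}=\alpha\,\boldsymbol{I}+(s)\,\boldsymbol{w}^{\top}$ with a nonzero scalar $\alpha$ and a rank-one correction term. Concretely: for \eqref{elementary homology}, $\alpha=\lambda$ and $\boldsymbol{w}^{\top}=\dfrac{\rho-\lambda}{(s)^{\top}(\pi)}\,(\pi)^{\top}$ (legitimate since $(s)^{\top}(\pi)\neq0$, and $\lambda\neq0$ is part of the data of an elementary homology); for \eqref{elementary perspective}, $\alpha=1$ and $\boldsymbol{w}^{\top}=\dfrac{\mu}{\sqrt{(s)^{\top}(s)\,(\pi)^{\top}(\pi)}}\,(\pi)^{\top}$. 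In both cases $\operatorname{rank}\!\big((s)\boldsymbol{w}^{\top}\big)\leq1$.

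The key step is then a one-line rank estimate, for which I would use subadditivity of rank: from $\alpha\boldsymbol{I}=\mathscr{T}+\big(-(s)\boldsymbol{w}^{\top}\big)$ one gets $n+1=\operatorname{rank}(\alpha\boldsymbol{I})\leq\operatorname{rank}(\mathscr{T})+\operatorname{rank}\!\big((s)\boldsymbol{w}^{\top}\big)\leq\operatorname{rank}(\mathscr{T})+1$, hence $\operatorname{rank}(\mathscr{T})\geq n$. Equivalently, and more in the spirit of the eigenvalue analysis already carried out via lemma \ref{sylvester}: the hyperplane $\{\boldsymbol{x}\in\mathbb{R}^{n+1}:\boldsymbol{w}^{\top}\boldsymbol{x}=0\}$ has dimension at least $n$, $\mathscr{T}$ carries it to $\alpha$ times itself and hence injectively (because $\alpha\neq0$), so its image is an $n$-dimensional subspace of the column space of $\mathscr{T}$. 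Since trivially $\operatorname{rank}(\mathscr{T})\leq n+1$, this proves the statement. As a by-product, applying lemma \ref{sylvester} to $\mathscr{T}-\alpha\boldsymbol{I}=(s)\,\boldsymbol{w}^{\top}$ (with the lemma's ``$m$'' $=n+1$ and ``$n$'' $=1$, so that $BA=\boldsymbol{w}^{\top}(s)$ is a scalar) shows the characteristic polynomial of $\mathscr{T}$ is $(t-\lambda)^{n}(t-\rho)$ for \eqref{elementary homology} and $(t-1)^{n+1}$ for \eqref{elementary perspective}; this refines ``$\geq n$'' to: the rank is $n+1$ in every case except the elementary homology with $\rho=0$ (the singular/projection sub-case), where it equals exactly $n$.

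The only point needing care is the hypothesis $\alpha\neq0$. For \eqref{elementary perspective} it is automatic ($\alpha=1$), so there the rank is always $n+1$. For \eqref{elementary homology} it genuinely must be assumed: if $\lambda=0$ were admitted then $\mathscr{T}=\rho\,(s)(\pi)^{\top}\big/\big((s)^{\top}(\pi)\big)$ would have rank $1$, which is less than $n$ once $n\geq2$; I would therefore state explicitly at the outset that $\lambda\neq0$ for an elementary homology, which is consistent with $\lambda$ being the eigenvalue attached to the rank-$n$ stereohomology hyperplane $\pi$ of Definition \ref{stereohomology}.
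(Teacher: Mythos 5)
Your proof is correct, and its main line of argument is genuinely different from the paper's. The paper proves the proposition by invoking Lemma \ref{sylvester} to compute the determinant outright: setting ${\boldsymbol A}=\alpha(s)$, ${\boldsymbol B}=(\pi)^{\top}$ with a suitably chosen scalar $\alpha$, it evaluates $\det(\mathscr{T})=\lambda^{n}\rho$ for \eqref{elementary homology} and $\det(\mathscr{T})=\lambda^{n+1}$ for \eqref{elementary perspective}, and then reads off the rank from the nonvanishing of the determinant together with the previously established fact that $\lambda\neq 0$ is an eigenvalue of geometric multiplicity $n$. Your primary argument --- rank subadditivity applied to $\alpha{\boldsymbol I}=\mathscr{T}-(s){\boldsymbol w}^{\top}$, or equivalently the observation that $\mathscr{T}$ maps the $n$-dimensional kernel of ${\boldsymbol w}^{\top}$ injectively into its column space --- bypasses Lemma \ref{sylvester} entirely and delivers the stated inequality $\operatorname{rank}(\mathscr{T})\geq n$ in one step; this is more elementary and makes transparent exactly which hypothesis carries the load, namely $\alpha\neq 0$, i.e.\ $\lambda\neq 0$, a point the paper uses but does not isolate. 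Your by-product computation of the characteristic polynomials $(t-\lambda)^{n}(t-\rho)$ and $(t-1)^{n+1}$ via Lemma \ref{sylvester} then recovers precisely the paper's case analysis (rank $n$ exactly when $\rho=0$ in the homology case, rank $n+1$ otherwise), so nothing from the paper's finer conclusion is lost. The one thing the paper's route buys that yours defers is that the determinant formula is obtained directly as a closed form, which the paper reuses rhetorically when classifying singular versus nonsingular stereohomologies; but as a proof of the proposition as stated, your version is tighter and self-contained.
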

\begin{proof}
Let ${\boldsymbol A}$ $=\alpha\cdot (s)$, ${\boldsymbol B}$ $=(\pi)\!^\top$\; ($\forall$ $\alpha$ $\in$ $\mathbb{R}$). Applying equation.\:\eqref{sylvester-lambda} in  {\em Lemma}\:\ref{sylvester} to the characteristic polynomials of ${\boldsymbol A{\boldsymbol B}}$ and ${\boldsymbol B{\boldsymbol A}}$, we have the following results:

When $S\notin \pi$, for equation \eqref{elementary homology}, let $$\alpha=-\frac{\rho-\lambda}{(s)\!^{\scriptscriptstyle \top}\!\!\cdot\! (\pi)}, $$ then we have the determinant of $\mathscr{T}$:
\begin{align}
\nonumber {\rm det}(\mathscr{T})&={\rm det}\left( \lambda \cdot {\boldsymbol I}+(\rho - \lambda)\frac{(s)\!\cdot (\pi
)\!^{\scriptscriptstyle \top}}{(s)\!^{\scriptscriptstyle \top}\!\!\cdot\!(\pi )}\right) = f_{AB} (\lambda)=\lambda^{n} \cdot f_{BA}(\lambda)=\lambda^{n}\cdot \rho
\end{align}
Since $\lambda$ $\neq$ 0 is the eigenvalue with the geometric multiplicity of $n$, if and only if $\rho$\, =\, 0, rank($\mathscr{T}$)\,=\,$n$, otherwise rank($\mathscr{T}$)\,=\,$n$+1.

When $S$ $\in$ $\pi$, for equation \eqref{elementary perspective}, let $$\alpha = \frac{-\mu}{\sqrt {(s)^{\scriptscriptstyle \top}\!\!\cdot\!(s)\cdot(\pi )^{\scriptscriptstyle \top}\!\!(\pi )}}, $$ then the determinant of $\mathscr{T}$:
\begin{align}
\nonumber {\rm det}(\mathscr{T})&={\rm det}\left( \lambda \cdot {\boldsymbol I}+\mu \cdot \frac{(s)\!^{\scriptscriptstyle \top}\!\!\cdot (\pi )}{\sqrt{(s)^{\scriptscriptstyle \top}\!\!\cdot\!(s)\cdot(\pi )^{\scriptscriptstyle \top}\!\!(\pi )}}\right) = f_{AB} (\lambda)=\lambda^{n}\cdot f_{BA}(\lambda)=\lambda^{n+1}
\end{align}
Since $\lambda$ is the only nonzero eigenvalue of $\mathscr{T}$,  an elementary perspective $\mathscr{T}$ always is nonsingular. Hence rank($\mathscr{T}$) $=$ $n$+1.
\end{proof}

So far we have successfully represented stereohomology into modified Householder elementary matrices in equations \eqref{elementary homology} and \eqref{elementary perspective}. This also makes it possible a stereohomology matrix can be determined by a fixed hyperplane $\pi$, a fixed point $S$ and the eigenvalues with geometric meaning independent of the particular choice of coordinate system. Conversely, given a square matrix, it is easy to determine whether it is a stereohomology by its checking its eigen-decomposition.

\begin{theorem}[Three stereohomology theorem\cite{meaning}]\label{tri-sterehomology}
If $\mathscr{T}_1$ and $\mathscr{T}_2$ are elementary geometric transformations with stereohomology centers of $S_1$, $S_2$ , and stereohomology hyperplanes of $\pi_1$ and $\pi_2$ respectively. $\mathscr{T}_3$ $=$ $\mathscr{T}_1$ $\cdot$ $\mathscr{T}_2$. Then:
\begin{description}
\item [{\small (i)}] If $S_1$ coincides with $S_2$, then $\mathscr{T}_3$ is also a stereohomology; denote the stereohomology hyperplane of $\mathscr{T}_3$ as $\pi_3$, then $\pi_1$, $\pi_2$ and $\pi_3$ are collinear.
\item [{\small (ii)}]If $\pi_1$ coincides with $\pi_2$, then $\mathscr{T}_3$ is also a stereohomology; denote the stereohomology center of $\mathscr{T}_3$ as $S_3$, then $S_1$, $S_2$ and $S_3$ are collinear.
\end{description}
\end{theorem}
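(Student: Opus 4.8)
\noindent The plan is to push the whole statement through the unified modified Householder shape of the two factors together with a single matrix product in each case. Whether $\mathscr{T}_1$ and $\mathscr{T}_2$ are elementary homologies as in Eq.~\eqref{elementary homology} or elementary perspectives as in Eq.~\eqref{elementary perspective}, each can be written, after absorbing scalar factors (legitimate, since a projective transformation matrix is defined only up to a nonzero factor), in the form $\mathscr{T}_i = a_i\,{\boldsymbol I} + (s_i)\,(\tilde{\pi}_i)^{\scriptscriptstyle\top}$, where $a_i \neq 0$ (for a homology $a_i=\lambda_i\neq 0$, for a perspective $a_i=1$) and $(\tilde{\pi}_i)$ is a fixed nonzero scalar multiple of the stereohomology hyperplane coordinates $(\pi_i)$. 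So the first step is simply to record this normalization and note that $a_1a_2\neq 0$.

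For part (i): since $S_1=S_2$ we may choose homogeneous representatives with $(s_1)=(s_2)=:(s)$. Writing $t_1=(\tilde{\pi}_1)^{\scriptscriptstyle\top}(s)$ and multiplying out, using $(s)(\tilde{\pi}_1)^{\scriptscriptstyle\top}(s)(\tilde{\pi}_2)^{\scriptscriptstyle\top}=t_1\,(s)(\tilde{\pi}_2)^{\scriptscriptstyle\top}$,
\[
\mathscr{T}_3=\mathscr{T}_1\mathscr{T}_2= a_1a_2\,{\boldsymbol I}+(s)\bigl(a_2(\tilde{\pi}_1)+(a_1+t_1)(\tilde{\pi}_2)\bigr)^{\scriptscriptstyle\top}.
\]
Thus $\mathscr{T}_3$ again has modified Householder shape $a_1a_2\,{\boldsymbol I}+(s)(\tilde{\pi}_3)^{\scriptscriptstyle\top}$ with $(\tilde{\pi}_3)\in\operatorname{span}\{(\tilde{\pi}_1),(\tilde{\pi}_2)\}=\operatorname{span}\{(\pi_1),(\pi_2)\}$. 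Now apply the argument of Proposition~\ref{rank of stereohomology}: since $a_1a_2\neq 0$, $\mathscr{T}_3$ has rank $\geq n$, the eigenvalue $a_1a_2$ has geometric multiplicity $n$ with eigenspace the hyperplane $(\tilde{\pi}_3)^{\scriptscriptstyle\top}x=0$, and $\mathscr{T}_3 x-a_1a_2\,x=\bigl((\tilde{\pi}_3)^{\scriptscriptstyle\top}x\bigr)(s)$ shows $S$, $X$, $\mathscr{T}_3 X$ collinear for every $X$. By Definition~\ref{stereohomology} (and the remark after it that an eigenvalue of geometric multiplicity $n$ already forces a stereohomology), $\mathscr{T}_3$ is a stereohomology with center $S$ and hyperplane $\pi_3$; and $(\tilde{\pi}_3)\in\operatorname{span}\{(\pi_1),(\pi_2)\}$ is exactly the statement that $\pi_1,\pi_2,\pi_3$ lie in one pencil, i.e.\ are collinear.

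Part (ii) is the transpose-dual of this. With $\pi_1=\pi_2$ choose $(\tilde{\pi}_1)=(\tilde{\pi}_2)=:(\tilde{\pi})$ and write $\mathscr{T}_i=a_i\,{\boldsymbol I}+(s_i)(\tilde{\pi})^{\scriptscriptstyle\top}$; multiplying out with $t_2=(\tilde{\pi})^{\scriptscriptstyle\top}(s_2)$ gives
\[
\mathscr{T}_3= a_1a_2\,{\boldsymbol I}+\bigl(a_1(s_2)+(a_2+t_2)(s_1)\bigr)(\tilde{\pi})^{\scriptscriptstyle\top},
\]
so $\mathscr{T}_3=a_1a_2\,{\boldsymbol I}+(s_3)(\tilde{\pi})^{\scriptscriptstyle\top}$ with $(s_3)\in\operatorname{span}\{(s_1),(s_2)\}$, i.e.\ $S_1,S_2,S_3$ collinear; and the same recognition step identifies $\mathscr{T}_3$ as a stereohomology with center $S_3$ and hyperplane $\pi=\pi_1=\pi_2$. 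Alternatively one could deduce (ii) from (i) by applying it to the hyperplane transformation matrices $\mathscr{T}_i^{-\top}$ via Theorem~\ref{hyperplane transformation}, but the direct computation is shorter and also covers the singular (projection) subcases uniformly.

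The computations here are routine; the one place that needs genuine care is the degenerate subcase in which the collected vector vanishes, $(\tilde{\pi}_3)=0$ in (i) or $(s_3)=0$ in (ii), which forces $\mathscr{T}_3=a_1a_2\,{\boldsymbol I}$, the identity of ${\mathbb P}^n$; this must be recorded as a trivial degenerate stereohomology, for which the collinearity assertion is vacuously true, or excluded by hypothesis. A secondary bookkeeping point is to carry the scalar normalizations consistently through the product and to use $a_i\neq 0$ for both transformation types, which guarantees $a_1a_2\neq 0$, so that $\mathscr{T}_3$ is never the zero matrix and is always a genuine generalized projective transformation of rank $\geq n$.
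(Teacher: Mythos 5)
Your proposal is correct and follows exactly the route the paper intends: the paper's own ``proof'' is only the one-line remark that the result follows from the elementary-matrix representation in equations~\eqref{elementary homology} and~\eqref{elementary perspective}, and your computation of $\mathscr{T}_1\mathscr{T}_2$ in the unified form $a_i{\boldsymbol I}+(s_i)(\tilde{\pi}_i)^{\scriptscriptstyle\top}$ is precisely the calculation being alluded to, carried out in full. Your explicit handling of the degenerate subcase $\mathscr{T}_3=a_1a_2{\boldsymbol I}$ (e.g.\ when $\mathscr{T}_2=\mathscr{T}_1^{-1}$ up to scale) is a genuine care point that the paper's statement silently ignores, and is worth keeping.
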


This can be proved by using the result that elementary matrices in equations \eqref{elementary homology} and \eqref{elementary perspective} are stereohomology by definition \ref{stereohomology}.

\section{Unification of Elementary Geometric Transformations: stereohomology}

In \cite{investigation}, only central projection, parallel projection, translation and central dilation were defined as stereohomology since only singlularity of the transformation was considered. When involution was taken into consideration, reflection and central symmetry were both added in \cite{meaning}.

\subsection{Definitions and Representation of Elementary Geometric Transformations }
In this section, we discuss more possibilities of $S$ and $\pi$ in the extended Desarguesian configuration of a stereohomology, and define all the following geometric transformations into the stereohomology family as {\em elementary}: central projection, parallel projection (with both oblique and orthographic cases), direction, space homology(slightly different from those in \cite[p.75]{Veblen} and \cite[p.60]{PointTransform}), scaling(with both oblique and orthographic cases, the involutory case of which is a reflection), central dilation(the involutory case of which is central symmetry), space elation, shearing and translation.

Note that all the specific elementary geometric transformations are classified and defined strictly based on the geometric features in the corresponding {\em extended Desarguesian configuration}, e.g., figures~\ref{fig:Desarguesian configuration reflection} and~\ref{fig:Desarguesian configuration projection} for reflection and central projection respectively. We omit those figures for other elementary geometric transformations and only list their algebraic descriptions in table~\ref{classification table}. The definitions thus proposed also present an approach in determining whether a given square matrix is a specific elementary geometric transformation or not.

Table~\ref{classification table} summarizes all these elementary geometric transformations defined as stereohomology which will be discussed in details respectively. Before going on for further discussion, it is necessary to present the following properties of stereohomology as propositions \ref{singular stereohomology} and \ref{involutory stereohomology}, which are the basis for some of the definitions in table \ref{classification table} and both can be proved by using equations \eqref{elementary homology} and \eqref{elementary perspective}.

\def\arraystretch{1.7}
% \resizebox{width}{height}{object}
%\begin{table}[tbp] % table environment for table caption
\begin{sidewaystable}
\centering
\caption{\large Classification and Definitions of Geometric Transformations Which are Stereohomology}
\label{classification table}
\resizebox{23cm}{!} { % keep the width vs height ratio, adjust width to # cm
 \begin{tabular}[t]{>{\centering\arraybackslash}m{.2in}cccLLr}
\hlinewd{2pt}
\; No.\; & \tc{$S$ vs. $\pi$} & \tc{ $\begin{casesBig} {\textrm{ Transformation}}{\textrm { matrix property}}\end{casesBig}$} & \tc{Property of $\pi$} & \tc{ Property of $S$ }& \tc{Transformation matrix formula} & \tc{Definition of transformation} \tabularnewline
\hline
1& $S\notin\pi$ & \tc{Singular} & \tc{Ordinary} & \tc{Ordinary} & & Central Projection {\color{red}\checkmark}\tabularnewline
2& $S\notin\pi$ & \tc{Singular} &\tc{Ordinary} & \tc{Infinite} &  \[\mathscr{T}\left ((s),(\pi); \lambda \right )=
 \lambda\cdot{\boldsymbol I} -\lambda\cdot \frac{{(s)\cdot(\pi) ^{\scriptscriptstyle \top}}}{{(s)^{\scriptscriptstyle \top}  \cdot (\pi)} } \]  & Oblique \& Orthographic Parallel Projection{\color{red} \checkmark} \tabularnewline
3& $S\notin\pi$ & \tc{Singular} & \tc{Infinite} & \tc{Ordinary} & & Direction {\color{red} \checkmark} \tabularnewline
\hline
4& $S\notin\pi$ & \tc{Nonsingular} & \tc{Ordinary} & \tc{Ordinary} & & {Space homology}{\color{red} \checkmark}\tabularnewline
5& $S\notin\pi$ & \tc{Nonsingular} & \tc{Ordinary} & \tc{Infinite} & \[\mathscr{T}(\ (s),(\pi) ;\rho, \lambda)\:\mathop =\;{\lambda\cdot\boldsymbol I} + (\rho - \lambda) \cdot \frac{{(s)\cdot(\pi) ^{\scriptscriptstyle \top} }} {{(s)^{\scriptscriptstyle \top} \cdot (\pi) }} \] & Oblique \& Orthographic Elementary Scaling {\color{red} \checkmark} \tabularnewline
6& \tc{$S\notin\pi$} & \tc{Nonsingular} & \tc{Infinite} & \tc{Ordinary} & & Central Dilation {\color{red} \checkmark} \tabularnewline
\hline
7& $S\notin\pi$ & \tc{$\begin{casesBig} {\textrm{ Nonsingular}}{\textrm {\&Involutory}}\end{casesBig}$} & \tc{Ordinary} & \tc{Ordinary} & & {Involutory space homology} {\color{red} \checkmark}\tabularnewline
8& $S\notin\pi$ & \tc{$\begin{casesBig} {\textrm {Nonsingular}}{\textrm {\&Involutory}}\end{casesBig}$} & \tc{Ordinary} & \tc{Infinite} & \[\mathscr{T}\left ( (s),(\pi); \lambda \right )=
 \lambda\cdot{\boldsymbol I} - 2\lambda \cdot {\huge \frac{(s)\cdot(\pi)\! ^{\scriptscriptstyle\top}} {(s)\!^{\scriptscriptstyle\top} \! \cdot (\pi) }} \] & Skew(Oblique) \& Orthographic Reflection {\color{red} \checkmark}\tabularnewline
9& $S\notin\pi$ & \tc{$\begin{casesBig} {\textrm {Nonsingular}}{\textrm {\&Involutory}}\end{casesBig}$} & \tc{Infinite} & \tc{Ordinary} & & Central Symmetry {\color{red} \checkmark}\tabularnewline
\hline
10& $S\in\pi$ & \tc{Nonsingular} & \tc{Ordinary} & \tc{Ordinary} & & {Space elation}{\color{red} \checkmark}\tabularnewline
11& $S\in\pi$ & \tc{Nonsingular} & \tc{Ordinary} & \tc{Infinite} & \[ \mathscr{T}((s),(\pi) ;\lambda,\mu )=
{\lambda\cdot\boldsymbol I} + \frac{ \mu \cdot
(s)\cdot{(\pi)}^{\scriptscriptstyle \top} } {\sqrt
{(s)^{\scriptscriptstyle \top}\!\!
\cdot\! (s)\!\cdot\!(\pi) ^{\scriptscriptstyle \top}\!\!\cdot\! (\pi) } }\]& Shearing {\color{red} \checkmark} \tabularnewline
12& $S\in\pi$ & \tc{Nonsingular} & \tc{Infinite} & \tc{Infinite} & & Translation {\color{red} \checkmark} \tabularnewline
\hlinewd{1pt}
\end{tabular}
}  % resize table
\end{sidewaystable}
%\end{table}% table environment for table caption

\begin{proposition}[singular stereohomology]\label{singular stereohomology} The transformation matrix $\mathscr{T}$ of a singular stereohomology can and only can be represented as(without loss of generality, $\lambda$ in equation \eqref{elementary homology} is fixed at {\rm 1}):
\begin{equation}\label{singular stereohomology equation}
\mathscr{T}\left ( (s),(\pi) \right )\mathop {=\!\!=\!\!=} \limits^{def} {\boldsymbol I} - \frac{{(s)\cdot(\pi)\! ^\top}} {{(s)\!^\top \!\! \cdot (\pi) }}
\end{equation}
where $(s)$,$(\pi)$ $\in$ $\mathbb{P}^n$,\; $(s)\!^\top$ $\!\!\cdot$ $(\pi)$ $\neq$ 0.
\end{proposition}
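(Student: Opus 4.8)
The plan is to obtain the statement as an immediate corollary of the normal forms \eqref{elementary homology} and \eqref{elementary perspective} for stereohomology together with the rank/determinant computation already carried out in Proposition \ref{rank of stereohomology}. For the ``only can'' half, I would start with an arbitrary singular stereohomology $\mathscr{T}$ in $\mathbb{P}^n$ with center $S$ and hyperplane $\pi$ and split into the two exhaustive cases $S\in\pi$ and $S\notin\pi$. If $S\in\pi$, then $\mathscr{T}$ is of the form \eqref{elementary perspective} and Proposition \ref{rank of stereohomology} gives $\det(\mathscr{T})=\lambda^{n+1}\neq 0$, contradicting singularity; hence necessarily $S\notin\pi$, so $\mathscr{T}$ has the form \eqref{elementary homology} with $\det(\mathscr{T})=\lambda^{n}\rho$ and $\lambda\neq 0$. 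Singularity then forces $\rho=0$, and substituting $\rho=0$ into \eqref{elementary homology} yields $\mathscr{T}=\lambda\bigl(\boldsymbol I-\tfrac{(s)(\pi)^{\scriptscriptstyle\top}}{(s)^{\scriptscriptstyle\top}(\pi)}\bigr)$; since homogeneous transformation matrices are defined only up to a nonzero scalar, normalizing $\lambda=1$ produces exactly \eqref{singular stereohomology equation}.

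For the converse ``can'' half, I would observe that a matrix of the form \eqref{singular stereohomology equation} is precisely the specialization $\lambda=1$, $\rho=0$ of \eqref{elementary homology}, which has already been shown to be a stereohomology; concretely one verifies that every point of $\pi$ is still an eigenvector (with eigenvalue $1$, of geometric multiplicity $n$) and that $S$ is still a fixed center, so conditions $(1)$--$(2)$ of Definition \ref{stereohomology} hold. Finally, applying Proposition \ref{rank of stereohomology} in the sub-case $\rho=0$ gives $\mathrm{rank}(\mathscr{T})=n$, so $\mathscr{T}$ is indeed singular, which completes the equivalence.

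I do not expect a serious obstacle here, since the computation is essentially absorbed into Proposition \ref{rank of stereohomology}. The one place that needs a careful word rather than a calculation is the normalization step: I would make explicit that the degenerate value $\rho=0$ is admissible in \eqref{elementary homology} (only $(s)^{\scriptscriptstyle\top}(\pi)\neq 0$ is required and $\rho$ ranges over all of $\mathbb{R}$), and that replacing $\lambda\mathscr{T}$ by $\mathscr{T}$ does not disturb the stereohomology structure, because rescaling a matrix multiplies all of its eigenvalues by the same constant and leaves all eigenvectors and the center $S$ unchanged.
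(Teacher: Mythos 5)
Your proposal is correct and follows exactly the route the paper intends: the paper gives no written proof of Proposition~\ref{singular stereohomology}, remarking only that it ``can be proved by using equations \eqref{elementary homology} and \eqref{elementary perspective}'', and your argument is precisely that remark made explicit via the determinant computations of Proposition~\ref{rank of stereohomology} ($\det\mathscr{T}=\lambda^{n}\rho$ for $S\notin\pi$, $\det\mathscr{T}=\lambda^{n+1}\neq 0$ for $S\in\pi$). Your added care about admitting $\rho=0$ and about the harmlessness of the normalization $\lambda=1$ is a welcome tightening rather than a deviation.
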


\begin{definition}[Involutory]\label{involutory}
If a projective geometric transformation $\mathscr{T}$ satisfies:
$$\mathscr{T}^2 = k \cdot {\boldsymbol I}, \quad \exists \; 0 \neq k \in \mathbb{R}$$
then $\mathscr{T}$ is involutory, or is called an involutory (projective) transformation.
\end{definition}

\begin{proposition}[involutory stereohomology]\label{involutory stereohomology}
An involutory stereohomology $\mathscr{T}$ can be represented as ( without loss of generality, still $\lambda$ in equation \eqref{elementary homology} is fixed at 1 ):
\end{proposition}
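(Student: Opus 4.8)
The plan is to impose the involutory condition of Definition~\ref{involutory} on the two normal forms already in hand --- the elementary homology \eqref{elementary homology} when $S\notin\pi$ and the elementary perspective \eqref{elementary perspective} when $S\in\pi$ --- and see which parameter choices survive. Since every stereohomology is of one of these two shapes, checking both exhausts all cases, and the surviving shape is the asserted representation.

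The only genuine computation is to record the behaviour of the two rank-one blocks under squaring. For $S\notin\pi$ put $N=\dfrac{(s)\cdot(\pi)^{\scriptscriptstyle\top}}{(s)^{\scriptscriptstyle\top}\cdot(\pi)}$; because $(\pi)^{\scriptscriptstyle\top}(s)$ and $(s)^{\scriptscriptstyle\top}(\pi)$ are the same nonzero scalar, one gets the idempotence $N^2=N$. For $S\in\pi$ put $M=\dfrac{(s)\cdot(\pi)^{\scriptscriptstyle\top}}{\sqrt{(s)^{\scriptscriptstyle\top}(s)\cdot(\pi)^{\scriptscriptstyle\top}(\pi)}}$; because now $(s)^{\scriptscriptstyle\top}(\pi)=0$, one gets the nilpotence $M^2={\boldsymbol 0}$. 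These are two one-line identities and everything else is bookkeeping.

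With these in place I would argue as follows. In the perspective case $\mathscr{T}=\lambda{\boldsymbol I}+\mu M$ gives $\mathscr{T}^2=\lambda^2{\boldsymbol I}+2\lambda\mu M$; demanding $\mathscr{T}^2=k{\boldsymbol I}$ forces $2\lambda\mu M=(k-\lambda^2){\boldsymbol I}$, and since $M$ has rank $1$ while ${\boldsymbol I}$ has full rank $n+1\geq3$, and $\lambda\neq0$ by Proposition~\ref{rank of stereohomology} (an elementary perspective is nonsingular), this forces $\mu=0$, i.e. $\mathscr{T}=\lambda{\boldsymbol I}$ --- the trivial scalar involution, which I would set aside. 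In the homology case $\mathscr{T}=\lambda{\boldsymbol I}+(\rho-\lambda)N$ gives, via $N^2=N$, $\mathscr{T}^2=\lambda^2{\boldsymbol I}+(\rho^2-\lambda^2)N$; the same rank comparison forces $\rho^2=\lambda^2$. The branch $\rho=\lambda$ again produces only $\lambda{\boldsymbol I}$, so the nontrivial involutory stereohomology is exactly $\rho=-\lambda$, and normalising $\lambda=1$ (permitted by the homogeneity remarked before \eqref{elementary homology}) yields precisely ${\boldsymbol I}-2\,\dfrac{(s)\cdot(\pi)^{\scriptscriptstyle\top}}{(s)^{\scriptscriptstyle\top}\cdot(\pi)}$ with $k=1$. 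For the converse it suffices to note this matrix is the $\lambda=1$, $\rho=-1$ case of \eqref{elementary homology} (hence a stereohomology, with $S\notin\pi$ because $(s)^{\scriptscriptstyle\top}(\pi)\neq0$), and that $({\boldsymbol I}-2N)^2={\boldsymbol I}-4N+4N^2={\boldsymbol I}$.

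The main obstacle is not any calculation but the handling of the degenerate matrix $\lambda{\boldsymbol I}$: it formally meets the letter of Definition~\ref{stereohomology} yet has no well-defined center, and the phrasing ``an involutory stereohomology can be represented as\dots'' reads correctly only once such scalar matrices are explicitly excluded; I would make that exclusion part of the proof's setup so that the reflection-type normal form is unambiguously the unique remaining possibility, and then the $S\in\pi$ and $S\notin\pi$ squarings above finish the argument.
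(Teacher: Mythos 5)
Your proposal is correct and follows exactly the route the paper itself indicates (the paper offers no written proof beyond the remark that Propositions~\ref{singular stereohomology} and~\ref{involutory stereohomology} ``can be proved by using equations \eqref{elementary homology} and \eqref{elementary perspective}''): you impose $\mathscr{T}^2=k\boldsymbol{I}$ on both normal forms, use $N^2=N$ and $M^2=\boldsymbol{0}$, and conclude $\rho=-\lambda$ up to the scalar-matrix degeneracy. Your explicit exclusion of $\lambda\boldsymbol{I}$ is a point of care the paper glosses over, but it does not change the substance of the argument.
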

\begin{equation} \label{involutory stereohomology equation}
{\displaystyle \mathscr{T}\left( (s),(\pi) \right )\mathop {=\!\!=\!\!=} \limits^{def} {\boldsymbol I} -
2\cdot\frac{{(s)\cdot(\pi) ^{\scriptscriptstyle \top} }} {{(s)^{\scriptscriptstyle \top}\!\cdot (\pi) }}}
\end{equation}

Similar reflection formula on projective involution was also given in \cite[p.415, theorem 21.4]{Perspectives} which can be considered as a special case of stereohomology.

\begin{definition}[Central projection]\label{central projection}
A central projection $\mathscr{T}$ is a singular stereohomology, of which both the stereohomology center $S$ and the stereohomology hyperplane $\pi$ are ordinary (or finite).  See figure~\ref{fig:Desarguesian configuration projection} for its extended Desarguesian configuration illustration. The stereohomology center $S$ is  the {\em projection center} of $\mathscr{T}$, and $\pi$ is the {\em projection hyperplane} or {\em image hyperplane} of $\mathscr{T}$.
\end{definition}

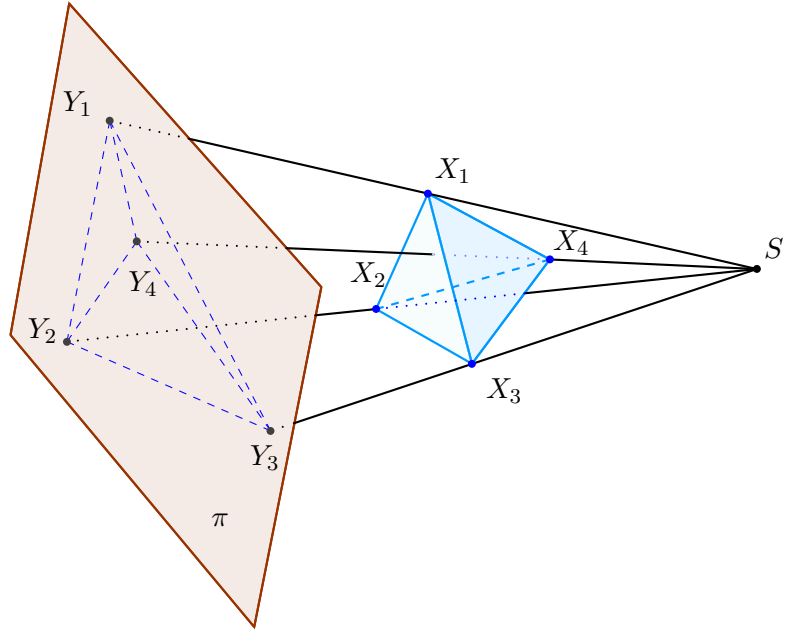
\begin{figure}[!hpt]
\begin{center}
\newrgbcolor{afeeee}{0.69 0.93 0.93}
\newrgbcolor{qqzzff}{0 0.6 1}
\newrgbcolor{zzttqq}{0.6 0.2 0}
\newrgbcolor{uququq}{0.25 0.25 0.25}
\newrgbcolor{bcduew}{0.74 0.83 0.9}
\newrgbcolor{xdxdff}{0.49 0.49 1}
\psset{xunit=0.75cm,yunit=0.75cm,algebraic=true,dimen=middle,dotstyle=o,dotsize=3pt 0,linewidth=0.8pt,arrowsize=3pt 2,arrowinset=0.25}
\begin{pspicture*}(3.61,-5.85)(18.09,5.96)
\pspolygon[linecolor=afeeee,fillcolor=afeeee,fillstyle=solid,opacity=0.1](11.38,2.19)(10.46,0.14)(12.16,-0.83)
\pspolygon[linecolor=qqzzff,fillcolor=qqzzff,fillstyle=solid,opacity=0.1](11.38,2.19)(13.54,1.02)(12.16,-0.83)
\pspolygon[linecolor=zzttqq,fillcolor=zzttqq,fillstyle=solid,opacity=0.1](5.02,5.56)(3.98,-0.32)(8.3,-5.5)(9.49,0.53)
\psline[linecolor=qqzzff](11.38,2.19)(10.46,0.14)
\psline[linecolor=qqzzff](10.46,0.14)(12.16,-0.83)
\psline[linecolor=afeeee](12.16,-0.83)(11.38,2.19)
\psline[linecolor=qqzzff](11.38,2.19)(13.54,1.02)
\psline[linecolor=qqzzff](13.54,1.02)(12.16,-0.83)
\psline[linecolor=qqzzff](12.16,-0.83)(11.38,2.19)
\psline[linestyle=dashed,dash=3pt 3pt,linecolor=qqzzff](10.46,0.14)(13.54,1.02)
\psline[linecolor=zzttqq](5.02,5.56)(3.98,-0.32)
\psline[linecolor=zzttqq](3.98,-0.32)(8.3,-5.5)
\psline[linecolor=zzttqq](8.3,-5.5)(9.49,0.53)
\psline[linecolor=zzttqq](9.49,0.53)(5.02,5.56)
\psline[linewidth=0.4pt,linestyle=dashed,dash=3pt 3pt,linecolor=blue](8.59,-2.02)(4.98,-0.44)
\psline[linewidth=0.4pt,linestyle=dashed,dash=3pt 3pt,linecolor=blue](4.98,-0.44)(5.74,3.48)
\psline[linewidth=0.4pt,linestyle=dashed,dash=3pt 3pt,linecolor=blue](5.74,3.48)(6.22,1.34)
\psline[linewidth=0.4pt,linestyle=dashed,dash=3pt 3pt,linecolor=blue](6.22,1.34)(8.59,-2.02)
\psline[linewidth=0.4pt,linestyle=dashed,dash=3pt 3pt,linecolor=blue](5.74,3.48)(8.59,-2.02)
\psline[linewidth=0.4pt,linestyle=dashed,dash=3pt 3pt,linecolor=blue](4.98,-0.44)(6.22,1.34)
\psline[linestyle=dotted](4.98,-0.44)(9.39,0.03)
\psline(7.15,3.16)(17.21,0.85)
\psline[linestyle=dotted](5.74,3.48)(7.15,3.16)
\psline(9.02,-1.88)(17.21,0.85)
\psline[linestyle=dotted](8.59,-2.02)(9.02,-1.88)
\psline[linestyle=dotted](6.22,1.34)(8.88,1.22)
\rput[tl](7.54,-3.49){$\pi$ }
\psline(13.54,1.02)(17.21,0.85)
\psline(13.1,0.42)(17.21,0.85)
\psline(9.39,0.03)(10.46,0.14)
\psline[linestyle=dotted,linecolor=blue](10.46,0.14)(13.1,0.42)
\psline(8.88,1.22)(11.47,1.11)
\psline[linestyle=dotted,linecolor=xdxdff](11.47,1.11)(13.54,1.02)
\psdots[dotstyle=*,linecolor=blue](11.38,2.19)
\rput[bl](11.5,2.38){{$X_1$}}
\psdots[dotstyle=*,linecolor=blue](10.46,0.14)
\rput[bl](10,0.54){{$X_2$}}
\psdots[dotstyle=*,linecolor=blue](12.16,-0.83)
\rput[bl](12.41,-1.52){{$X_3$}}
\psdots[dotstyle=*,linecolor=blue](13.54,1.02)
\rput[bl](13.6,1.1){{$X_4$}}
\psdots[dotstyle=*](17.21,0.85)
\rput[bl](17.34,1.04){$S$}
\psdots[dotstyle=*,linecolor=uququq](5.74,3.48)
\rput[bl](4.91,3.57){{$Y_1$}}
\psdots[dotstyle=*,linecolor=uququq](6.22,1.34)
\rput[bl](6.07,0.38){{$Y_4$}}
\psdots[dotstyle=*,linecolor=uququq](8.59,-2.02)
\rput[bl](8.23,-2.71){{$Y_3$}}
\psdots[dotstyle=*,linecolor=uququq](4.98,-0.44)
\rput[bl](4.29,-0.46){{$Y_2$}}
\psdots[dotsize=1pt 0,dotstyle=*,linecolor=bcduew](11.47,1.11)
\end{pspicture*}
\end{center}
\caption{Extended Desarguesian configuration for Central projection}
\label{fig:Desarguesian configuration projection}
\end{figure}

For example, if the homogeneous coordinates of the projection center $S$ and the projection hyperplane $\pi$ of a central projection $\mathscr{T}$ are:
\begin{equation*}
(s) = (0, 0, 0, 1)^{\scriptscriptstyle \top} \quad {\rm and} \quad
(\pi) = (0, 0, 1, -d)^{\scriptscriptstyle \top}
\end{equation*}
respectively, since a central projection matrix $\mathscr{T}$ should be singular, by using equation~\eqref{singular stereohomology equation} we obtain the central projection matrix as equation \eqref{central projection example}:

\begin{equation}\label{central projection example}
\begin{gathered}
           {\displaystyle \mathscr{T}\left((s),(\pi)\right)\mathop  {=\!\!=\!\!=} \limits^{def} {\boldsymbol I} - \frac{{(s)\!\cdot\!(\pi)\! ^\top }}
{{(s)\!^\top\!\!\cdot\! (\pi) }}} \\
  =\!\!\!\!=  {\scriptstyle \left[ {\begin{array}{*{20}c}
   1 & 0 & 0 & 0  \\
   0 & 1 & 0 & 0  \\
   0 & 0 & 1 & 0  \\
   0 & 0 & 0 & 1  \\

 \end{array} } \right]} -  {\scriptstyle \frac{{\left[ {\begin{array}{*{20}c}
   0  \\
   0  \\
   0  \\
   1  \\

 \end{array} } \right] \cdot \left[ {\begin{array}{*{20}c}
   0 & 0 & 1 & { - d}  \\

 \end{array} } \right]}}
{{\left[
{\begin{array}{*{20}c}
   0 & 0 & 0 & 1  \\

 \end{array} } \right] \cdot \left[ {\begin{array}{*{20}c}
   0  \\
   0  \\
   1  \\
   { - d}  \\

 \end{array} } \right]}} } =\!\!\!\!= \left[ {\begin{array}{*{20}c}
   1 & 0 & 0 & 0  \\
   0 & 1 & 0 & 0  \\
   0 & 0 & 1 & 0  \\
   0 & 0 &  d^{\text \,-1} & 0  \\

 \end{array} } \right] \hfill \\
\end{gathered}
\end{equation}

The result obtained in~\eqref{central projection example} can be verified by that in~\cite[pp.87~\texttildelow~92]{Salomon} unless row vector is used in~\cite{Salomon}. Note that no assumption is made on the reference coordinate system via the approach here, which means central projection matrix can be immediately obtained via equation~\eqref{singular stereohomology equation} provided arbitrary {\em projection center} and {\em projection hyperplane} are given.

\begin{definition}[Normal point]\label{normal point}
There is only one normal point in  ${\mathbb P}^3$, the homogeneous coordinate of which is $k\cdot(0, 0, 0, 1)$$^{\scriptscriptstyle \top}, 0 \neq k \in {\mathbb R}$.
\end{definition}

\begin{definition}[Ideal/Infinite hyperplane]\label{ideal hyperplane}
There is only one infinite (or ideal) hyperplane in ${\mathbb P}^3$, the homogeneous coordinate of which is $k\cdot(0, 0, 0, 1)$$^{\scriptscriptstyle \top}, 0 \neq k \in {\mathbb R}$.
\end{definition}

\begin{definition}[Direction]\label{direction}
A direction is a singular stereohomology, of which the stereohomology center $S$ is ordinary point and the stereohomology hyperplane $\pi$ is the infinite hyperplane. Figure~\ref{fig:Desarguesian configuration direction} illustrates the extended Desarguesian configuration of a typical direction, where the infinite geometric elements cannot be visualized immediately.

A direction with a stereohomology center $S$ is called a direction from $S$.
\end{definition}

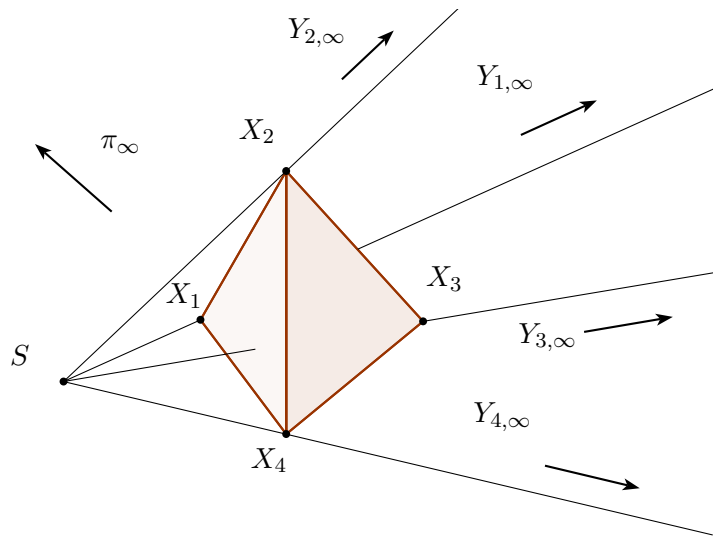
\begin{figure}[!hpt]
\begin{center}
\newrgbcolor{zzttqq}{0.6 0.2 0}
\newrgbcolor{xdxdff}{0.49 0.49 1}
\psset{xunit=1.0cm,yunit=1.0cm,algebraic=true,dimen=middle,dotstyle=o,dotsize=3pt 0,linewidth=0.8pt,arrowsize=3pt 2,arrowinset=0.25}
\begin{pspicture*}(4.42,-3.07)(14.41,4.03)
\pspolygon[linecolor=zzttqq,fillcolor=zzttqq,fillstyle=solid,opacity=0.04](7.6,-0.1)(8.74,-1.62)(8.74,1.88)
\pspolygon[linecolor=zzttqq,fillcolor=zzttqq,fillstyle=solid,opacity=0.1](8.74,-1.62)(10.56,-0.12)(8.74,1.88)
\psline[linecolor=zzttqq](7.6,-0.1)(8.74,-1.62)
\psline[linecolor=zzttqq](8.74,-1.62)(8.74,1.88)
\psline[linecolor=zzttqq](8.74,1.88)(7.6,-0.1)
\psline[linecolor=zzttqq](8.74,-1.62)(10.56,-0.12)
\psline[linecolor=zzttqq](10.56,-0.12)(8.74,1.88)
\psline[linecolor=zzttqq](8.74,1.88)(8.74,-1.62)
\psplot[linewidth=0.4pt]{5.78}{14.41}{(-18.91--2.8*x)/2.96}
\psplot[linewidth=0.4pt]{5.78}{14.41}{(--1.32-0.7*x)/2.96}
\psline[linewidth=0.4pt](5.78,-0.92)(7.6,-0.1)
\psline[linewidth=0.4pt](9.69,0.84)(22.36,6.55)
\psline[linewidth=0.4pt](10.56,-0.12)(25.9,2.45)
\psline[linewidth=0.4pt](5.78,-0.92)(8.33,-0.49)
\psline{->}(9.48,3.1)(10.17,3.75)
\psline{->}(11.86,2.36)(12.87,2.82)
\psline{->}(12.7,-0.26)(13.88,-0.06)
\psline{->}(12.18,-2.04)(13.44,-2.34)
\rput[tl](8.76,3.91){$Y_{2,\infty }$}
\rput[tl](11.27,3.29){$Y_{1,\infty }$}
\rput[tl](11.83,-0.14){$Y_{3,\infty }$}
\rput[tl](11.23,-1.2){$Y_{4,\infty }$}
\psline{->}(6.42,1.34)(5.4,2.24)
\rput[tl](6.27,2.35){$\pi_\infty $}
\psdots[dotstyle=*](7.6,-0.1)
\rput[bl](7.14,0.06){$X_1$}
\psdots[dotstyle=*](8.74,1.88)
\rput[bl](8.11,2.25){$X_2$}
\psdots[dotstyle=*](10.56,-0.12)
\rput[bl](10.6,0.29){$X_3$}
\psdots[dotstyle=*](8.74,-1.62)
\rput[bl](8.27,-2.13){$X_4$}
\psdots[dotstyle=*](5.78,-0.92)
\rput[bl](5.07,-0.7){$S$}
\psdots[dotstyle=*,linecolor=xdxdff](13.87,6.73)
\rput[bl](-2.26,6.52){\xdxdff{$F$}}
\psdots[dotstyle=*,linecolor=xdxdff](22.36,6.55)
\rput[bl](22.45,6.15){\xdxdff{$G$}}
\psdots[dotstyle=*,linecolor=xdxdff](25.9,2.45)
\rput[bl](-2.26,6.52){\xdxdff{$H$}}
\psdots[dotstyle=*,linecolor=xdxdff](22.49,-4.87)
\rput[bl](22.57,-4.73){\xdxdff{$I$}}
\end{pspicture*}
\end{center}
\caption{Extended Desarguesian configuration for direction}
\label{fig:Desarguesian configuration direction}
\end{figure}

\begin{definition}[Normal direction]\label{normal direction}
A normal direction is a direction from the normal point.

Since$\left( s \right) = {\left( {0,0,0,1} \right)^\top}$,$\left( \pi  \right) = {\left( {0,0,0,1} \right)^\top}$, and ${\left( s \right)^\top}\cdot\left( \pi  \right) = $1,  a normal direction thus obtained is:
\end{definition} \begin{equation}\label{normal direction equation}\begin{array}{l}
\mathscr{T} = \left[ {\begin{array}{*{20}{c}}
1&0&0&0\\
0&1&0&0\\
0&0&1&0\\
0&0&0&1
\end{array}} \right] - \left[ {\begin{array}{*{20}{c}}
0\\
0\\
0\\
1
\end{array}} \right]\cdot\left[ {\begin{array}{*{20}{c}}
0&0&0&1
\end{array}} \right] = \left[ {\begin{array}{*{20}{c}}
1&0&0&0\\
0&1&0&0\\
0&0&1&0\\
0&0&0&0
\end{array}} \right]
\end{array}\end{equation}
Such a definition as {\em normal direction} mainly serves as a bridge for the purpose of geometric meaning compatibility with Euclidean nomenclatures.

\begin{definition}[Parallel projection]\label{parallel projection}
A parallel projection is a singular stereohomology, of which the stereohomology center $S$ is an infinite point and the stereohomology hyperplane $\pi$ is an ordinary (finite) hyperplane. See figure~\ref{fig:Desarguesian configuration parallel projection}.

If $S$ is the normal direction of $\pi$, a parallel projection is orthographic, otherwise it is oblique.

See definition No. 2 in table \ref{classification table} for the parallel projection formula.
\end{definition}

\begin{figure}[!hpt]
\begin{center}
\newrgbcolor{zzttqq}{0.6 0.2 0}
\newrgbcolor{xdxdff}{0.49 0.49 1}
\newrgbcolor{qqzzff}{0 0.6 1}
\newrgbcolor{sqsqsq}{0.13 0.13 0.13}
\newrgbcolor{aqaqaq}{0.63 0.63 0.63}
\psset{xunit=0.68cm,yunit=0.68cm,algebraic=true,dimen=middle,dotstyle=o,dotsize=3pt 0,linewidth=0.8pt,arrowsize=3pt 2,arrowinset=0.25}
\begin{pspicture*}(-3.07,-3.2)(12.6,7.06)
\pspolygon[linecolor=zzttqq,fillcolor=zzttqq,fillstyle=solid,opacity=0.1](7.5,3.65)(5.57,2.1)(6.42,-1.2)(10.07,0.05)
\pspolygon[linecolor=qqzzff,fillcolor=qqzzff,fillstyle=solid,opacity=0.09](2.72,6.62)(-0.84,3.83)(1.39,-3.01)(4.48,0.61)
\pspolygon[linestyle=dashed,dash=2pt 2pt,linecolor=gray](1.42,4.76)(2.27,1.52)(1.55,-0.27)(3.42,2.42)
\pspolygon[linestyle=dashed,dash=2pt 2pt,linecolor=gray](1.55,-0.27)(1.42,4.76)(2.27,1.52)
\pspolygon[linecolor=zzttqq,fillcolor=zzttqq,fillstyle=solid,opacity=0.13](6.42,-1.2)(5.57,2.1)(7.5,3.65)
\psline[linecolor=zzttqq](7.5,3.65)(5.57,2.1)
\psline[linecolor=zzttqq](5.57,2.1)(6.42,-1.2)
\psline[linecolor=zzttqq](6.42,-1.2)(10.07,0.05)
\psline[linecolor=zzttqq](10.07,0.05)(7.5,3.65)
\psline[linecolor=xdxdff](6.42,-1.2)(7.5,3.65)
\psline[linecolor=qqzzff](2.72,6.62)(-0.84,3.83)
\psline[linecolor=qqzzff](-0.84,3.83)(1.39,-3.01)
\psline[linecolor=qqzzff](1.39,-3.01)(4.48,0.61)
\psline[linecolor=qqzzff](4.48,0.61)(2.72,6.62)
\psline[linestyle=dashed,dash=3pt 3pt,linecolor=gray](2.27,1.52)(1.55,-0.27)
\psline[linestyle=dashed,dash=2pt 2pt,linecolor=gray](1.55,-0.27)(3.42,2.42)
\psline[linestyle=dashed,dash=3pt 3pt,linecolor=gray](3.42,2.42)(1.42,4.76)
\psline[linecolor=xdxdff](1.42,4.76)(13.21,2.6)
\psline[linecolor=xdxdff](-3.51,5.66)(0.55,4.92)
\psline[linecolor=xdxdff](-3.48,3.76)(-0.65,3.24)
\psline[linecolor=xdxdff](-3.52,2.54)(-0.23,1.94)
\psline[linecolor=xdxdff](-3.33,0.59)(0.44,-0.1)
\psline[linecolor=xdxdff](1.55,-0.27)(13.52,-2.5)
\psline[linecolor=xdxdff](10.07,0.05)(13.33,-0.54)
\psline[linecolor=xdxdff](2.27,1.52)(6.32,0.74)
\psline[linecolor=xdxdff](3.42,2.42)(5.57,2.1)
\psline[linecolor=xdxdff](9.06,1.46)(13.36,0.68)
\psline[linestyle=dashed,dash=2pt 2pt,linecolor=gray](1.55,-0.27)(1.42,4.76)
\psline[linestyle=dashed,dash=3pt 3pt,linecolor=gray](1.42,4.76)(2.27,1.52)
\psline[linecolor=zzttqq](6.42,-1.2)(5.57,2.1)
\psline[linecolor=zzttqq](5.57,2.1)(7.5,3.65)
\psline[linecolor=zzttqq](7.5,3.65)(6.42,-1.2)
\rput[tl](10.24,2.77){$S_{\infty }$}
\rput[tl](-1.79,5.04){$S_{\infty }$}
\psline[linewidth=0.4pt]{->}(11.19,2)(12.39,1.78)
\psline[linewidth=0.4pt]{->}(-2.01,4.41)(-3,4.59)
\rput[tl](1.4,-1.54){$\pi$}
\psline[linestyle=dashed,dash=3pt 3pt,linecolor=aqaqaq](2.27,1.52)(3.42,2.42)
\psdots[dotstyle=*](7.5,3.65)
\rput[bl](7.64,3.84){$X_1$}
\psdots[dotstyle=*](5.57,2.1)
\rput[bl](5.25,2.56){$X_2$}
\psdots[dotstyle=*](6.42,-1.2)
\rput[bl](6.17,-2){$X_3$}
\psdots[dotstyle=*](10.07,0.05)
\rput[bl](10.21,0.32){$X_4$}
\psdots[dotstyle=*,linecolor=sqsqsq](1.42,4.76)
\rput[bl](1.55,4.95){\sqsqsq{$Y_1$}}
\psdots[dotstyle=*,linecolor=sqsqsq](2.27,1.52)
\rput[bl](2.28,2.44){\sqsqsq{$Y_4$}}
\psdots[dotstyle=*,linecolor=sqsqsq](1.55,-0.27)
\rput[bl](1.52,-0.99){\sqsqsq{$Y_3$}}
\psdots[dotstyle=*,linecolor=sqsqsq](3.42,2.42)
\rput[bl](3.54,2.59){\sqsqsq{$Y_2$}}
\psdots[dotsize=1pt 0,dotstyle=*,linecolor=lightgray](6.32,0.74)
\end{pspicture*}
\end{center}
\caption{Extended Desarguesian configuration for parallel projection}
\label{fig:Desarguesian configuration parallel projection}
\end{figure}
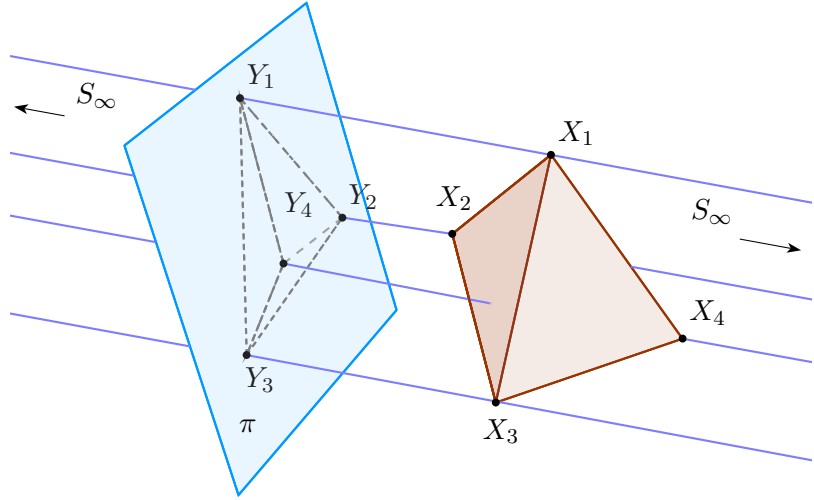

There is already similar representation on central and parallel projections in \cite[pp.67~\texttildelow~76]{CAD}, but no such general definition as stereohomology is found and the representation on reflections and rotations are still conventional ones in \cite{CAD}.

\begin{definition}[Space homology]\label{space homology}
A space homology is a nonsingular stereohomology, of which both the stereohomology center $S$ and the stereohomology hyperplane $\pi$ are ordinary (finite) elements and $S$ $\notin$ $\pi$. See figure~\ref{fig:Desarguesian configuration space homology}.

See definition No. 4 and No. 7 in table \ref{classification table} for the formula of space homology and its involutory case.
\end{definition}

\begin{figure}[!hpt]
\begin{center}
\newrgbcolor{zzttqq}{0.6 0.2 0}
\newrgbcolor{qqzzff}{0 0.6 1}
\newrgbcolor{xfqqff}{0.5 0 1}
\psset{xunit=0.045cm,yunit=0.045cm,algebraic=true,dimen=middle,dotstyle=o,dotsize=3pt 0,linewidth=0.8pt,arrowsize=3pt 2,arrowinset=0.25}
\begin{pspicture*}(-52.74,-39.55)(138.5,96.39)
\pspolygon[linewidth=0.4pt,linecolor=zzttqq,fillcolor=zzttqq,fillstyle=solid,opacity=0.1](27.65,-9)(43.55,3.3)(25.5,23.4)
\pspolygon[linewidth=0.4pt,linecolor=zzttqq,fillcolor=zzttqq,fillstyle=solid,opacity=0.1](99.03,-35.33)(125.02,9.97)(50.21,49.39)
\pspolygon[linewidth=0.4pt,linecolor=qqzzff,fillcolor=qqzzff,fillstyle=solid,opacity=0.05](-36.33,74.98)(9.34,81.76)(-2.43,-28.13)(-47.03,-32.77)
\psline[linewidth=4pt](-140.15,94.96)(-68.8,94.96)
\psline[linewidth=4pt](-140.15,86.4)(-68.8,86.4)
\psline[linewidth=4pt](-139.8,80.33)(-68.44,80.33)
\psline[linewidth=4pt](-140.15,71.77)(-68.8,71.77)
\psline[linewidth=4pt](-139.44,63.56)(-68.08,63.56)
\psline[linewidth=4pt](-139.08,55.36)(-67.72,55.36)
\psline[linewidth=4pt](-141.58,45.01)(-70.22,45.01)
\psline[linewidth=4pt](-142.29,29.67)(-70.94,29.67)
\psline[linewidth=4pt](-139.44,16.82)(-68.08,16.82)
\psline[linewidth=4pt](-140.51,3.62)(-69.15,3.62)
\psline[linewidth=4pt](-139.44,-7.44)(-68.08,-7.44)
\psline[linewidth=4pt](-139.08,-16)(-67.72,-16)
\psline[linewidth=4pt](-139.44,-27.06)(-68.08,-27.06)
\psline[linewidth=4pt](-134.8,-43.48)(-63.44,-43.48)
\psline[linewidth=4pt](-136.94,-58.1)(-65.58,-58.1)
\psline[linewidth=0.4pt,linecolor=zzttqq](35.6,23.3)(43.55,3.3)
\psline[linewidth=0.4pt,linecolor=zzttqq](125.02,9.97)(127.47,91.46)
\psline[linewidth=0.4pt,linecolor=zzttqq](27.65,-9)(43.55,3.3)
\psline[linewidth=0.4pt,linecolor=zzttqq](43.55,3.3)(25.5,23.4)
\psline[linewidth=0.4pt,linecolor=zzttqq](25.5,23.4)(27.65,-9)
\psline[linewidth=0.4pt,linecolor=zzttqq](99.03,-35.33)(125.02,9.97)
\psline[linewidth=0.4pt,linecolor=zzttqq](125.02,9.97)(50.21,49.39)
\psline[linewidth=0.4pt,linecolor=zzttqq](50.21,49.39)(99.03,-35.33)
\psline[linewidth=0.4pt,linecolor=qqzzff](-36.33,74.98)(9.34,81.76)
\psline[linewidth=0.4pt,linecolor=qqzzff](9.34,81.76)(-2.43,-28.13)
\psline[linewidth=0.4pt,linecolor=qqzzff](-2.43,-28.13)(-47.03,-32.77)
\psline[linewidth=0.4pt,linecolor=qqzzff](-47.03,-32.77)(-36.33,74.98)
\psline[linewidth=0.4pt,linecolor=xfqqff](3.25,0)(125.02,9.97)
\psline[linewidth=0.4pt,linecolor=xfqqff](3.25,0)(99.03,-35.33)
\psline[linewidth=0.4pt,linecolor=xfqqff](3.25,0)(50.21,49.39)
\rput[tl](-31.33,54.64){$\pi$}
\psline[linewidth=0.4pt,linecolor=xfqqff](3.25,0)(25.94,16.71)
\psline[linewidth=0.4pt,linecolor=xfqqff](35.6,23.3)(56.2,38.99)
\psline[linewidth=0.4pt,linestyle=dashed,dash=1pt 1pt,linecolor=xfqqff](56.2,38.99)(127.47,91.46)
\psline[linewidth=0.4pt,linecolor=zzttqq](25.5,23.4)(35.6,23.3)
\psline[linewidth=0.4pt,linecolor=zzttqq](50.21,49.39)(127.47,91.46)
\psline[linewidth=0.4pt,linestyle=dashed,dash=1pt 1pt,linecolor=zzttqq](99.03,-35.33)(127.47,91.46)
\psline[linewidth=0.4pt,linestyle=dashed,dash=1pt 1pt,linecolor=zzttqq](27.65,-9)(35.6,23.3)
\psline[linewidth=0.4pt,linestyle=dashed,dash=1pt 1pt,linecolor=xfqqff](25.94,16.71)(35.6,23.3)
\begin{scriptsize}
\psdots[dotsize=5pt 0,dotstyle=*](-134.45,94.96)
\rput[bl](-138.37,99.24){$a_3$}
\psdots[dotsize=5pt 0,dotstyle=*](-99.84,86.4)
\rput[bl](-103.76,91.75){$a1_2$}
\psdots[dotsize=5pt 0,dotstyle=*](-128,80.33)
\rput[bl](-131.95,85.68){$a2_2$}
\psdots[dotsize=5pt 0,dotstyle=*](-83.45,71.77)
\rput[bl](-87.35,77.12){$a3_2$}
\psdots[dotsize=5pt 0,dotstyle=*](-103.36,63.56)
\rput[bl](-107.33,68.91){$b1_2$}
\psdots[dotsize=5pt 0,dotstyle=*](-96.65,55.36)
\rput[bl](-100.55,60.71){$b2_2$}
\psdots[dotsize=5pt 0,dotstyle=*](-100.52,45.01)
\rput[bl](-104.47,50.36){$b3_2$}
\psdots[dotsize=5pt 0,dotstyle=*](-99.73,29.67)
\rput[bl](-103.76,35.02){$c1_2$}
\psdots[dotsize=5pt 0,dotstyle=*](-125.47,16.82)
\rput[bl](-129.45,22.17){$c2_2$}
\psdots[dotsize=5pt 0,dotstyle=*](-111.25,3.62)
\rput[bl](-115.18,8.97){$c3_2$}
\psdots[dotsize=5pt 0,dotstyle=*](-96.98,-7.44)
\rput[bl](-100.91,-2.09){$d1_2$}
\psdots[dotsize=5pt 0,dotstyle=*](-81.17,-16)
\rput[bl](-85.21,-10.65){$d2_2$}
\psdots[dotsize=5pt 0,dotstyle=*](-108.47,-27.06)
\rput[bl](-112.32,-21.71){$d3_2$}
\psdots[dotsize=5pt 0,dotstyle=*](-104.83,-43.48)
\rput[bl](-108.76,-38.12){$b_3$}
\psdots[dotstyle=*](35.6,23.3)
\rput[bl](43.24,21.1){$X_1$}
\psdots[dotstyle=*](43.55,3.3)
\rput[bl](45.38,-2.8){$X_2$}
\psdots[dotstyle=*](27.65,-9)
\rput[bl](22.9,-16.72){$X_3$}
\psdots[dotstyle=*](25.5,23.4)
\rput[bl](17.91,26.81){$X_4$}
\psdots[dotstyle=*](3.25,0)
\rput[bl](1.85,-7.44){$S$}
\psdots[dotsize=5pt 0,dotstyle=*](-103.4,-58.1)
\rput[bl](-107.33,-52.75){$\sigma = -0.3$}
\psdots[dotstyle=*](127.47,91.46)
\rput[bl](129.94,81.4){$Y_1$}
\psdots[dotstyle=*](125.02,9.97)
\rput[bl](124.94,1.48){$Y_2$}
\psdots[dotstyle=*](99.03,-35.33)
\rput[bl](106.03,-35.98){$Y_3$}
\psdots[dotstyle=*](50.21,49.39)
\rput[bl](44.67,53.57){$Y_4$}
\end{scriptsize}
\end{pspicture*}
\end{center}
\caption{Extended Desarguesian configuration for space homology}
\label{fig:Desarguesian configuration space homology}
\end{figure}
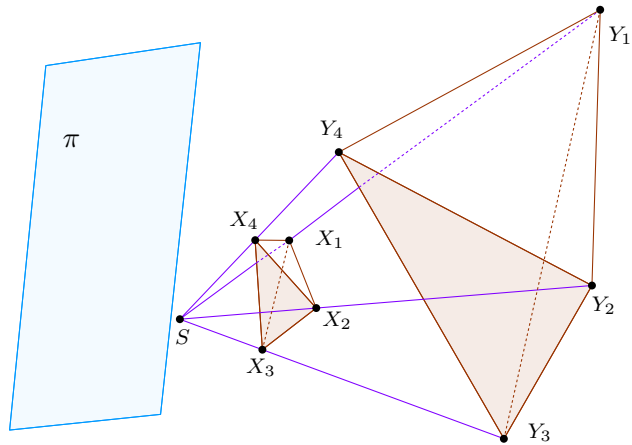

\begin{definition}[Elementary scaling]\label{scaling}
An elementary scaling is a nonsingular stereohomology, of which the stereohomology center $S$ is infinite while the stereohomology hyperplane $\pi$ is ordinary (finite), and $S$ $\notin$ $\pi$.  $S$ is called the scaling direction. See figure~\ref{fig:Desarguesian configuration scaling}.

If $S$ is the normal direction of $\pi$, a scaling is orthographic, otherwise it is oblique.  See definition No. 5 in table \ref{classification table} for the scaling formula, where the the scaling ratio is: ${\rho}$\slash${\lambda}$.
\end{definition}

\begin{figure}[!hpt]
\begin{center}
\newrgbcolor{zzttqq}{0.6 0.2 0}
\newrgbcolor{xfqqff}{0.5 0 1}
\psset{xunit=0.5cm,yunit=0.5cm,algebraic=true,dimen=middle,dotstyle=o,dotsize=3pt 0,linewidth=0.8pt,arrowsize=3pt 2,arrowinset=0.25}
\begin{pspicture*}(-6.47,-3.14)(21.96,14.12)
\pspolygon[linecolor=zzttqq,fillcolor=zzttqq,fillstyle=solid,opacity=0.04](2.98,2.35)(6.1,1.51)(2.02,4.98)
\pspolygon[linecolor=zzttqq,fillcolor=zzttqq,fillstyle=solid,opacity=0.1](2.02,4.98)(7.22,3.57)(6.1,1.51)
\pspolygon[linecolor=zzttqq,fillcolor=zzttqq,fillstyle=solid,opacity=0.05](9.06,3.79)(11.77,12.51)(7.57,5.89)
\pspolygon[linecolor=zzttqq,fillcolor=zzttqq,fillstyle=solid,opacity=0.1](9.06,3.79)(14.21,8.96)(11.77,12.51)
\psplot{-6.47}{21.96}{(-0-0*x)/1}
\psline[linecolor=zzttqq](2.98,2.35)(6.1,1.51)
\psline[linecolor=zzttqq](6.1,1.51)(2.02,4.98)
\psline[linecolor=zzttqq](2.02,4.98)(2.98,2.35)
\psline[linecolor=zzttqq](2.02,4.98)(7.22,3.57)
\psline[linecolor=zzttqq](7.22,3.57)(6.1,1.51)
\psline[linecolor=zzttqq](6.1,1.51)(2.02,4.98)
\psline[linecolor=zzttqq](9.06,3.79)(11.77,12.51)
\psline[linecolor=zzttqq](11.77,12.51)(7.57,5.89)
\psline[linecolor=zzttqq](7.57,5.89)(9.06,3.79)
\psline[linecolor=zzttqq](9.06,3.79)(14.21,8.96)
\psline[linecolor=zzttqq](14.21,8.96)(11.77,12.51)
\psline[linecolor=zzttqq](11.77,12.51)(9.06,3.79)
\psline[linewidth=0.4pt,linecolor=xfqqff](-4.77,-3.63)(2.98,2.35)
\psline[linewidth=0.4pt,linecolor=xfqqff](13.29,10.3)(21.2,16.4)
\psline[linewidth=0.4pt,linecolor=xfqqff](14.21,8.96)(23.58,16.19)
\psline[linewidth=0.4pt,linecolor=xfqqff](-2.29,-3.78)(5.17,1.98)
\psline[linewidth=0.4pt,linecolor=xfqqff](7.22,3.57)(8.99,4.93)
\psline[linewidth=0.4pt,linecolor=xfqqff](-0.69,-3.73)(25.16,16.22)
\psline[linewidth=0.4pt,linecolor=xfqqff](-8.77,-3.34)(16.84,16.42)
\psline[linewidth=0.4pt,linestyle=dashed,dash=2pt 2pt,linecolor=zzttqq](20.36,0)(14.21,8.96)
\psline[linewidth=0.4pt,linestyle=dashed,dash=2pt 2pt,linecolor=zzttqq](7.22,3.57)(20.36,0)
\psline[linewidth=0.4pt,linestyle=dashed,dash=2pt 2pt,linecolor=zzttqq](11.73,0)(9.06,3.79)
\psline[linewidth=0.4pt,linestyle=dashed,dash=2pt 2pt,linecolor=zzttqq](6.1,1.51)(11.73,0)
\psline[linewidth=0.4pt,linestyle=dashed,dash=2pt 2pt,linecolor=zzttqq](7.87,0)(9.06,3.79)
\psline[linewidth=0.4pt,linestyle=dashed,dash=2pt 2pt,linecolor=zzttqq](7.87,0)(6.1,1.51)
\psline[linewidth=0.4pt,linestyle=dashed,dash=2pt 2pt,linecolor=zzttqq](5.27,0)(6.1,1.51)
\psline[linewidth=0.4pt,linestyle=dashed,dash=2pt 2pt,linecolor=zzttqq](3.84,0)(2.98,2.35)
\psline[linewidth=0.4pt,linestyle=dashed,dash=2pt 2pt,linecolor=zzttqq](-5.17,0)(2.54,3.56)
\psline[linewidth=0.4pt,linestyle=dashed,dash=2pt 2pt,linecolor=zzttqq](-5.17,0)(2.98,2.35)
\psline[linewidth=0.4pt,linestyle=dashed,dash=2pt 2pt,linecolor=zzttqq](4.28,4.37)(7.57,5.89)
\rput[tl](10.13,2.99){$\text{The crosssection line of }\;\pi$}
\psline[linewidth=0.4pt]{->}(14.28,1.96)(14.98,0)
\psline[linewidth=0.4pt]{->}(14.32,12.11)(15.84,13.28)
\psline[linewidth=0.4pt]{->}(-2.86,-1.15)(-4.54,-2.44)
\psline[linewidth=0.4pt,linestyle=dashed,dash=2pt 2pt,linecolor=zzttqq](6.27,3.83)(7.57,5.89)
\psline[linewidth=0.4pt,linestyle=dashed,dash=2pt 2pt,linecolor=zzttqq](4.99,1.81)(3.84,0)
\psline[linewidth=0.4pt,linestyle=dashed,dash=2pt 2pt,linecolor=zzttqq](5.27,0)(9.06,3.79)
\rput[tl](13.13,13.47){$S_\infty $}
\rput[tl](-4.87,-0.26){$S_\infty $}
\psline[linewidth=0.4pt,linecolor=xfqqff](5.26,4.1)(7.57,5.89)
\psdots[dotstyle=*](2.98,2.35)
\rput[bl](1.58,2.45){$X_2$}
\psdots[dotstyle=*](6.1,1.51)
\rput[bl](5.4,2.25){$X_1$}
\psdots[dotstyle=*](7.22,3.57)
\rput[bl](6.92,4.06){$X_4$}
\psdots[dotstyle=*](2.02,4.98)
\rput[bl](1.13,5.21){$X_3$}
\psdots[dotsize=1pt 0,dotstyle=*](3.84,0)
\rput[bl](2.85,-1.49){$S_{23}$}
\psdots[dotsize=1pt 0,dotstyle=*](5.27,0)
\rput[bl](4.91,-1.45){$S_{14}$}
\psdots[dotsize=1pt 0,dotstyle=*](7.87,0)
\rput[bl](7.25,-1.49){$S_{13}$}
\psdots[dotsize=1pt 0,dotstyle=*](11.73,0)
\rput[bl](11.11,-1.29){$S_{12}$}
\psdots[dotsize=1pt 0,dotstyle=*](20.36,0)
\rput[bl](19.42,-1.2){$S_{34}$}
\psdots[dotstyle=*](9.06,3.79)
\rput[bl](9.55,3.52){$Y_1$}
\psdots[dotstyle=*](7.57,5.89)
\rput[bl](6.51,6.52){$Y_2$}
\psdots[dotstyle=*](14.21,8.96)
\rput[bl](12.84,8.95){$Y_4$}
\psdots[dotstyle=*](11.77,12.51)
\rput[bl](10.25,12.52){$Y_3$}
\psdots[dotstyle=*](-5.17,0)
\rput[bl](-4.95,0.93){$S_{24}$}
\end{pspicture*}
\end{center}
\caption{Extended Desarguesian configuration for scaling}
\label{fig:Desarguesian configuration scaling}
\end{figure}
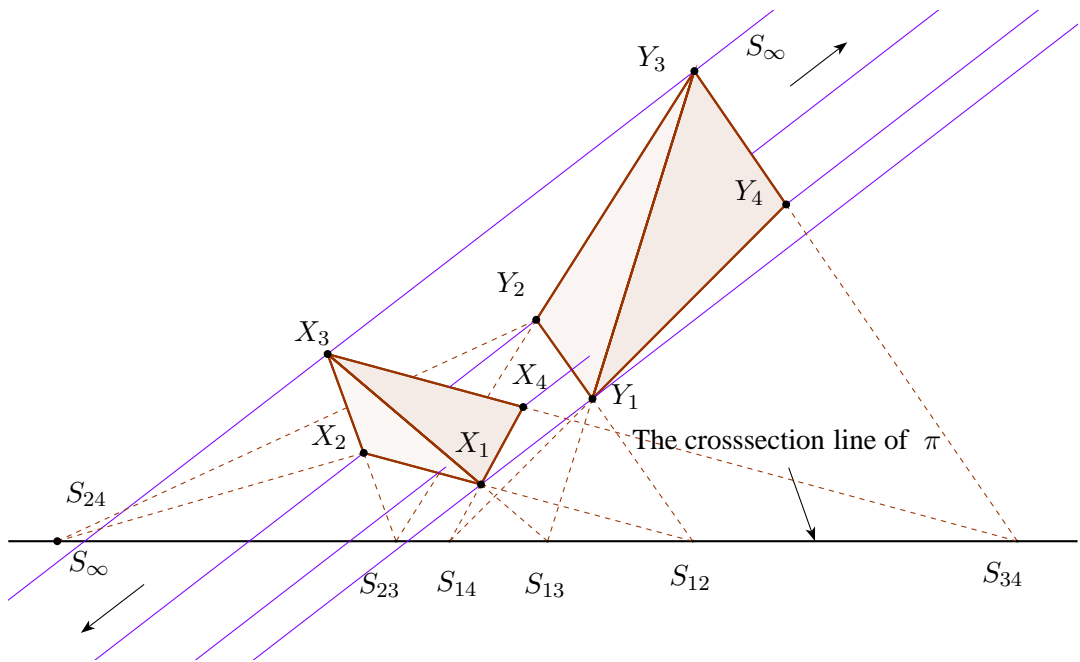

A general scaling with different scaling ratios in multiple directions is not elementary, which can be obtained by concatenating elementary scaling matrices together.

\begin{definition}[Central dilation]\label{central dilation}
A central dilation is a nonsingular stereohomology, of which the stereohomology center $S$ is ordinary while the stereohomology hyperplane $\pi$ is infinite), and $S$ $\notin$ $\pi$. See figure~\ref{fig:Desarguesian configuration central dilation}.

See definition No. 6 in table \ref{classification table} for the central dilation formula. The central dilation ratio is ${\rho}$\slash${\lambda}$.
\end{definition}

Note that conventional representation usually has the difficulty in distinguishing {\em central dilation} from {\em scaling} due to its ambiguity in definition.

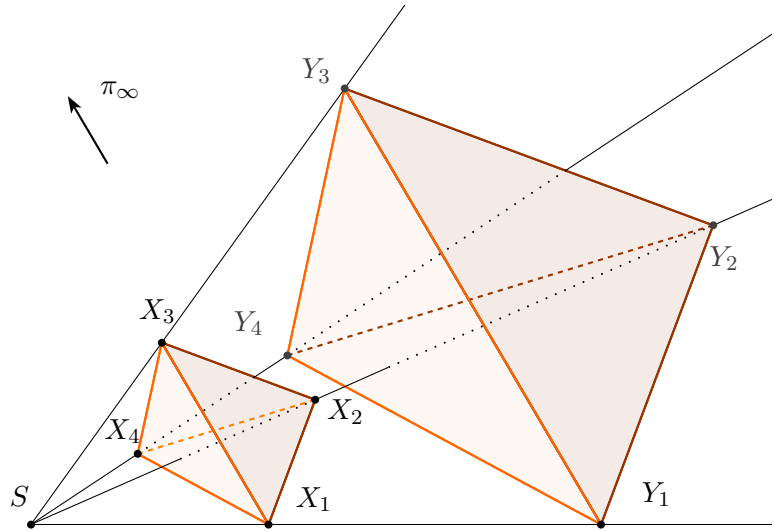
\begin{figure}[!hpt]
\begin{center}
\newrgbcolor{zzttqq}{0.6 0.2 0}
\newrgbcolor{ffxfqq}{1 0.5 0}
\newrgbcolor{ffwwqq}{1 0.4 0}
\psset{xunit=1.0cm,yunit=1.0cm,algebraic=true,dimen=middle,dotstyle=o,dotsize=3pt 0,linewidth=0.8pt,arrowsize=3pt 2,arrowinset=0.25}
\begin{pspicture*}(4.54,0.64)(15.1,8.04)
\pspolygon[linecolor=zzttqq,fillcolor=zzttqq,fillstyle=solid,opacity=0.1](7,3.56)(8.42,1.14)(9.04,2.8)
\pspolygon[linecolor=zzttqq,fillcolor=zzttqq,fillstyle=solid,opacity=0.1](12.84,1.14)(9.43,6.94)(14.33,5.12)
\pspolygon[linecolor=ffwwqq,fillcolor=ffwwqq,fillstyle=solid,opacity=0.05](8.67,3.39)(9.43,6.94)(12.84,1.14)
\pspolygon[linecolor=ffwwqq,fillcolor=ffwwqq,fillstyle=solid,opacity=0.05](6.68,2.08)(8.42,1.14)(7,3.56)
\psline[linecolor=zzttqq](7,3.56)(8.42,1.14)
\psline[linecolor=zzttqq](8.42,1.14)(9.04,2.8)
\psline[linecolor=zzttqq](9.04,2.8)(7,3.56)
\psline[linecolor=zzttqq](12.84,1.14)(9.43,6.94)
\psline[linecolor=zzttqq](9.43,6.94)(14.33,5.12)
\psline[linecolor=zzttqq](14.33,5.12)(12.84,1.14)
\psline[linestyle=dashed,dash=2pt 2pt,linecolor=ffxfqq](6.68,2.08)(9.04,2.8)
\psline[linestyle=dashed,dash=2pt 2pt,linecolor=zzttqq](8.67,3.39)(14.33,5.12)
\psline[linecolor=ffwwqq](8.67,3.39)(9.43,6.94)
\psline[linecolor=ffwwqq](9.43,6.94)(12.84,1.14)
\psline[linecolor=ffwwqq](12.84,1.14)(8.67,3.39)
\psline[linecolor=ffwwqq](6.68,2.08)(8.42,1.14)
\psline[linecolor=ffwwqq](8.42,1.14)(7,3.56)
\psline[linecolor=ffwwqq](7,3.56)(6.68,2.08)
\psline[linewidth=0.4pt](5.26,1.14)(10.59,8.55)
\psline[linewidth=0.4pt](5.26,1.14)(7.24,2.01)
\psline[linewidth=0.4pt](9.04,2.8)(10.01,3.23)
\psline[linewidth=0.4pt](14.33,5.12)(17.14,6.36)
\psline[linewidth=0.4pt](5.26,1.14)(6.68,2.08)
\psline[linewidth=0.4pt](12.37,5.85)(15.29,7.78)
\psline[linewidth=0.4pt](8.23,3.1)(8.67,3.39)
\psline[linewidth=0.4pt](5.26,1.14)(17.08,1.14)
\psline[linestyle=dotted](10.01,3.23)(14.33,5.12)
\psline[linestyle=dotted](8.67,3.39)(12.37,5.85)
\psline[linestyle=dotted](6.68,2.08)(8.23,3.1)
\psline[linestyle=dotted](7.24,2.01)(9.04,2.8)
\psline{->}(6.28,5.94)(5.75,6.84)
\rput[tl](6.18,7.04){$\pi _\infty $}
\psdots[dotstyle=*](5.26,1.14)
\rput[bl](4.98,1.38){$S$}
\psdots[dotstyle=*](7,3.56)
\rput[bl](6.7,3.82){$X_3$}
\psdots[dotstyle=*](8.42,1.14)
\rput[bl](8.78,1.32){$X_1$}
\psdots[dotstyle=*](9.04,2.8)
\rput[bl](9.2,2.48){$X_2$}
\psdots[dotstyle=*](12.84,1.14)
\rput[bl](13.38,1.38){$Y_1$}
\psdots[dotstyle=*,linecolor=darkgray](9.43,6.94)
\rput[bl](8.86,7.02){\darkgray{$Y_3$}}
\psdots[dotstyle=*,linecolor=darkgray](14.33,5.12)
\rput[bl](14.28,4.52){\darkgray{$Y_2$}}
\psdots[dotstyle=*](6.68,2.08)
\rput[bl](6.24,2.22){$X_4$}
\psdots[dotstyle=*,linecolor=darkgray](8.67,3.39)
\rput[bl](7.96,3.66){\darkgray{$Y_4$}}
\end{pspicture*}
\end{center}
\caption{Extended Desarguesian configuration for central dilation}
\label{fig:Desarguesian configuration central dilation}
\end{figure}

\begin{definition}[Reflection]\label{reflection}
A reflection is an involutory elementary scaling with scaling ratio $-$1, of which the stereohomology hyperplane $\pi$ is the reflection hyperplane or the mirror hyperplane. See figure~\ref{fig:Desarguesian configuration reflection}.

If the stereohomology center $S$ is the normal direction of $\pi$, a reflection is orthographic, otherwise skew or oblique.  Usually when we mention a reflection we mean orthographic reflection unless otherwise specified.

See definition No. 8 in table \ref{classification table} for the reflection formula.
\end{definition}

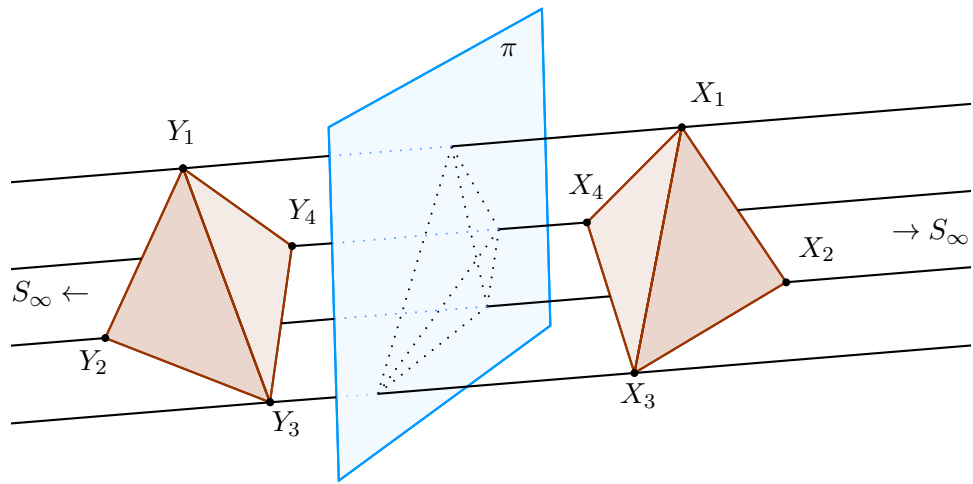
\begin{figure}[!hpt]
\begin{center}
\newrgbcolor{zzttqq}{0.6 0.2 0}
\newrgbcolor{qqzzff}{0 0.6 1}
\newrgbcolor{wwzzff}{0.4 0.6 1}
\newrgbcolor{zzccff}{0.6 0.8 1}
\psset{xunit=0.75cm,yunit=0.75cm,algebraic=true,dotstyle=o,dotsize=3pt 0,linewidth=0.8pt,arrowsize=3pt 2,arrowinset=0.25}
\begin{pspicture*}(-3,-4)(14,6)
\pspolygon[linecolor=zzttqq,fillcolor=zzttqq,fillstyle=solid,opacity=0.2](8.88,3.19)(10.73,0.44)(8.04,-1.17)
\pspolygon[linecolor=zzttqq,fillcolor=zzttqq,fillstyle=solid,opacity=0.1](8.88,3.19)(7.2,1.5)(8.04,-1.17)
\pspolygon[linecolor=zzttqq,fillcolor=zzttqq,fillstyle=solid,opacity=0.2](0.04,2.46)(-1.34,-0.55)(1.58,-1.69)
\pspolygon[linecolor=zzttqq,fillcolor=zzttqq,fillstyle=solid,opacity=0.1](0.04,2.46)(1.97,1.08)(1.58,-1.69)
\pspolygon[linecolor=qqzzff,fillcolor=qqzzff,fillstyle=solid,opacity=0.05](2.62,3.19)(6.4,5.29)(6.55,-0.33)(2.8,-3.08)
\psline[linecolor=zzttqq](8.88,3.19)(10.73,0.44)
\psline[linecolor=zzttqq](10.73,0.44)(8.04,-1.17)
\psline[linecolor=zzttqq](8.04,-1.17)(8.88,3.19)
\psline[linecolor=zzttqq](8.88,3.19)(7.2,1.5)
\psline[linecolor=zzttqq](7.2,1.5)(8.04,-1.17)
\psline[linecolor=zzttqq](8.04,-1.17)(8.88,3.19)
\psline[linecolor=zzttqq](0.04,2.46)(-1.34,-0.55)
\psline[linecolor=zzttqq](-1.34,-0.55)(1.58,-1.69)
\psline[linecolor=zzttqq](1.58,-1.69)(0.04,2.46)
\psline[linecolor=zzttqq](0.04,2.46)(1.97,1.08)
\psline[linecolor=zzttqq](1.97,1.08)(1.58,-1.69)
\psline[linecolor=zzttqq](1.58,-1.69)(0.04,2.46)
\psline[linecolor=qqzzff](2.62,3.19)(6.4,5.29)
\psline[linecolor=qqzzff](6.4,5.29)(6.55,-0.33)
\psline[linecolor=qqzzff](6.55,-0.33)(2.8,-3.08)
\psline[linecolor=qqzzff](2.8,-3.08)(2.62,3.19)
\psline[linestyle=dotted](4.8,2.85)(5.64,1.38)
\psline[linestyle=dotted](5.64,1.38)(5.43,0.01)
\psline[linestyle=dotted](3.49,-1.54)(5.43,0.01)
\psline[linestyle=dotted](4.8,2.85)(3.49,-1.54)
\psline[linestyle=dotted](4.8,2.85)(5.43,0.01)
\psline[linestyle=dotted](5.64,1.38)(3.49,-1.54)
\psline(5.64,1.38)(7.2,1.5)
\psline(4.8,2.85)(8.88,3.19)
\psline(3.49,-1.54)(8.04,-1.17)
\psline(5.43,0.01)(7.61,0.19)
\psline(1.58,-1.69)(2.76,-1.6)
\psline(1.78,-0.29)(2.72,-0.21)
\psline(1.97,1.08)(2.68,1.13)
\psline(0.04,2.46)(2.63,2.68)
\psline(-6.01,-2.31)(1.58,-1.69)
\psline(-6.16,-0.94)(-1.34,-0.55)
\psline(-6.28,1.95)(0.04,2.46)
\psline(-6.36,0.4)(-0.7,0.86)
\psline(8.88,3.19)(18.18,3.94)
\psline(8.04,-1.17)(18.67,-0.3)
\psline(10.73,0.44)(18.6,1.08)
\psline(9.87,1.72)(18.42,2.42)
\psline[linestyle=dotted,linecolor=wwzzff](2.63,2.68)(4.8,2.85)
\psline[linestyle=dotted,linecolor=wwzzff](2.68,1.13)(5.64,1.38)
\psline[linestyle=dotted,linecolor=wwzzff](2.72,-0.21)(5.43,0.01)
\psline[linestyle=dotted,linecolor=zzccff](2.76,-1.6)(3.49,-1.54)
\rput[tl](5.66,4.67){$\pi$}
\rput[tl](12.6,1.55){$\to S_{\infty}$}
\rput[tl](-3,0.45){$ S_{\infty} \leftarrow$}
\psdots[dotstyle=*](8.88,3.19)
\rput[bl](9.03,3.56){$X_1$}
\psdots[dotstyle=*](10.73,0.44)
\rput[bl](10.95,0.81){$X_2$}
\psdots[dotstyle=*](8.04,-1.17)
\rput[bl](7.79,-1.79){$X_3$}
\psdots[dotstyle=*](7.2,1.5)
\rput[bl](6.87,1.92){$X_4$}
\psdots[dotstyle=*](0.04,2.46)
\rput[bl](-0.25,2.89){$Y_1$}
\psdots[dotstyle=*](-1.34,-0.55)
\rput[bl](-1.81,-1.19){$Y_2$}
\psdots[dotstyle=*](1.58,-1.69)
\rput[bl](1.61,-2.28){$Y_3$}
\psdots[dotstyle=*](0.04,2.46)
\psdots[dotstyle=*](1.97,1.08)
\rput[bl](1.9,1.48){$Y_4$}
\psdots[dotstyle=*](1.58,-1.69)
\end{pspicture*}
\end{center}
\caption{Extended Desarguesian configuration for reflection}
\label{fig:Desarguesian configuration reflection}
\end{figure}

An {\em orthographic} reflection solution to Example 3.5 in \cite[pp.48~\texttildelow~49]{CAD} based on the definition here will be as follows:

Since the mirror hyperplane specified is $2x-y+2z-2=0$, then reflection mirror ($\pi$) = ($2,-1,2,-2$)$^\top$, and $S$ is the normal direction of $\pi$ by using equation \eqref{normal direction equation}:

\[\left( s \right) = \left[ {\begin{array}{*{20}{c}}
1&0&0&0\\
0&1&0&0\\
0&0&1&0\\
0&0&0&0
\end{array}} \right] \cdot \left[ {\begin{array}{*{20}{c}}
2\\
{ - 1}\\
2\\
{ - 2}
\end{array}} \right] = \left[ {\begin{array}{*{20}{c}}
2\\
{ - 1}\\
2\\
0
\end{array}} \right]\]

The reflection $\mathscr{R}$ thus obtained in equation \eqref{reflection example} via equation \eqref{involutory stereohomology equation} is the same as that in  \cite[p.49]{CAD} unless row vectors are used in \cite{CAD} while the approach used here is much simpler and can be extended to general cases without difficulty.
\begin{equation}\label{reflection example}\mathscr{R} = \left[ {\begin{array}{*{20}{c}}
1&0&0&0\\
0&1&0&0\\
0&0&1&0\\
0&0&0&1
\end{array}} \right] - 2 \cdot \frac{{\left[ {\begin{array}{*{20}{c}}
2\\[-12pt]
{ - 1}\\[-12pt]
2\\[-12pt]
0
\end{array}} \right] \cdot \left[ {\begin{array}{*{20}{c}}
2&{ - 1}&2&{ - 2}
\end{array}} \right]}}{{\left[ {\begin{array}{*{20}{c}}
2&{ - 1}&2&0
\end{array}} \right] \cdot \left[ {\begin{array}{*{20}{c}}
2\\[-12pt]
{ - 1}\\[-12pt]
2\\[-12pt]
{ - 2}
\end{array}} \right]}} = \frac{1}{9} \cdot \left[ {\begin{array}{*{20}{c}}
1&4&{ - 8}&8\\
4&7&4&{ - 4}\\
{ - 8}&4&1&8\\
0&0&0&9
\end{array}} \right]\end{equation}
\begin{definition}[Central symmetry]\label{central symmetry}
A central symmetry is an involutory central dilation with central dilation ratio $-$1, of which the stereohomology center $S$ is the central symmetry center.

See figure~\ref{fig:Desarguesian configuration central symmetry} for illustration and definition No. 9 in table \ref{classification table} for the central symmetry formulation.
\end{definition}

\begin{figure}[!hpt]
\begin{center}
\newrgbcolor{zzttqq}{0.6 0.2 0}
\psset{xunit=1.0cm,yunit=1.0cm,algebraic=true,dimen=middle,dotstyle=o,dotsize=3pt 0,linewidth=0.8pt,arrowsize=3pt 2,arrowinset=0.25}
\begin{pspicture*}(3.97,-2.3)(16.71,3.95)
\pspolygon[linecolor=zzttqq,fillcolor=zzttqq,fillstyle=solid,opacity=0.04](5.9,2.18)(4.78,0.32)(8.42,2.68)
\pspolygon[linecolor=zzttqq,fillcolor=zzttqq,fillstyle=solid,opacity=0.1](4.78,0.32)(8.44,-1.32)(8.42,2.68)
\pspolygon[linecolor=zzttqq,fillcolor=zzttqq,fillstyle=solid,opacity=0.1](11.76,2.92)(14.3,-0.58)(15.42,1.28)
\pspolygon[linecolor=zzttqq,fillcolor=zzttqq,fillstyle=solid,opacity=0.05](11.76,2.92)(11.78,-1.08)(14.3,-0.58)
\psline[linecolor=zzttqq](5.9,2.18)(4.78,0.32)
\psline[linecolor=zzttqq](4.78,0.32)(8.42,2.68)
\psline[linecolor=zzttqq](8.42,2.68)(5.9,2.18)
\psline[linecolor=zzttqq](4.78,0.32)(8.44,-1.32)
\psline[linecolor=zzttqq](8.44,-1.32)(8.42,2.68)
\psline[linecolor=zzttqq](8.42,2.68)(4.78,0.32)
\psplot[linewidth=0.4pt]{3.97}{16.71}{(-20.08--2.12*x)/1.66}
\psplot[linewidth=0.4pt]{3.97}{16.71}{(--20.33-1.88*x)/1.68}
\psline[linewidth=0.4pt,linestyle=dashed,dash=1pt 1pt](5.9,2.18)(8.44,-1.32)
\psline[linestyle=dashed,dash=1pt 1pt,linecolor=zzttqq](11.76,2.92)(14.3,-0.58)
\psline[linecolor=zzttqq](14.3,-0.58)(15.42,1.28)
\psline[linecolor=zzttqq](15.42,1.28)(11.76,2.92)
\psline[linecolor=zzttqq](11.76,2.92)(11.78,-1.08)
\psline[linecolor=zzttqq](11.78,-1.08)(14.3,-0.58)
\psline[linestyle=dashed,dash=1pt 1pt,linecolor=zzttqq](14.3,-0.58)(11.76,2.92)
\psline[linewidth=0.4pt,linestyle=dashed,dash=1pt 1pt](11.78,-1.08)(15.42,1.28)
\psline[linewidth=0.4pt](-0.12,-0.12)(4.78,0.32)
\psline[linewidth=0.4pt](8.43,0.65)(12.32,1)
\psline[linewidth=0.4pt](15.42,1.28)(21.14,1.8)
\psline[linewidth=0.4pt](-0.26,4.2)(5.9,2.18)
\psline[linewidth=0.4pt](8.43,1.35)(11.77,0.25)
\psline[linewidth=0.4pt](14.3,-0.58)(21.56,-2.97)
\psline[linewidth=0.4pt](14.3,-0.58)(11.77,0.25)
\psline[linewidth=0.4pt]{->}(10.22,2.94)(10.78,3.76)
\rput[tl](9.74,3.84){$\pi_\infty $}
\psdots[dotstyle=*](5.9,2.18)
\rput[bl](5.25,2.74){$X_1$}
\psdots[dotstyle=*](4.78,0.32)
\rput[bl](4.33,0.78){$X_2$}
\psdots[dotstyle=*](8.44,-1.32)
\rput[bl](7.58,-1.46){$X_3$}
\psdots[dotstyle=*](8.42,2.68)
\rput[bl](8.79,2.63){$X_4$}
\psdots[dotstyle=*](10.1,0.8)
\rput[bl](9.83,0.1){$S$}
\psdots[dotstyle=*,linecolor=darkgray](11.76,2.92)
\rput[bl](11.06,2.94){\darkgray{$Y_3$}}
\psdots[dotstyle=*,linecolor=darkgray](11.78,-1.08)
\rput[bl](11.08,-1.31){\darkgray{$Y_4$}}
\psdots[dotstyle=*,linecolor=darkgray](14.3,-0.58)
\rput[bl](14.56,-1.27){\darkgray{$Y_1$}}
\psdots[dotstyle=*,linecolor=darkgray](15.42,1.28)
\rput[bl](15.41,1.62){\darkgray{$Y_2$}}
\end{pspicture*}
\end{center}
\caption{Extended Desarguesian configuration for central symmetry}
\label{fig:Desarguesian configuration central symmetry}
\end{figure}
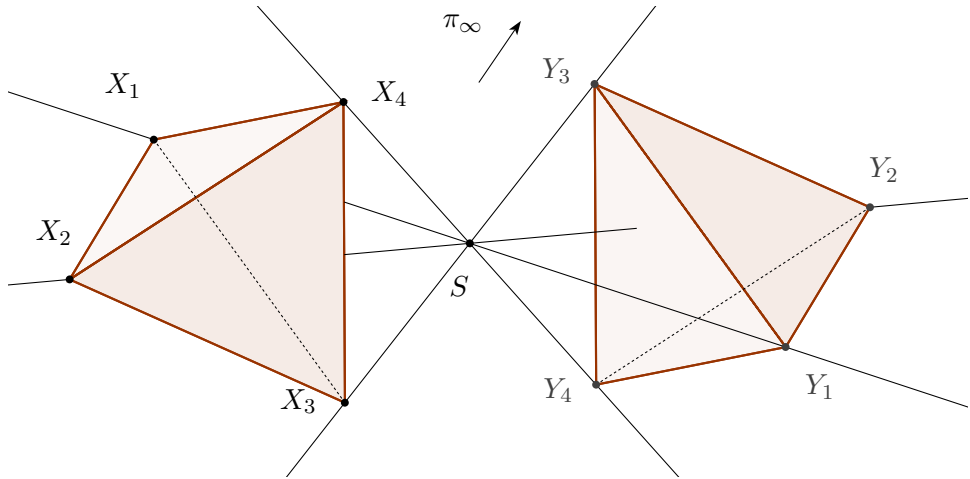

\begin{definition}[Space elation]\label{space elation}
A space elation is a nonsingular stereohomology, of which both the stereohomology center $S$ and the stereohomology hyperplane $\pi$ are ordinary (finite),  and $S$ $\in$ $\pi$.

See figure~\ref{fig:Desarguesian configuration space elation} for an intuitive concept of such a geometric transformation and definition No. 10 in table \ref{classification table} for the space elation formulation.
\end{definition}

Note that space elation follows the same rule as the convex and concave lens in elementary optics which is what inspires the extension to Desarguesian theorem and finally leads to the concept of stereohomology~\cite{investigation}.

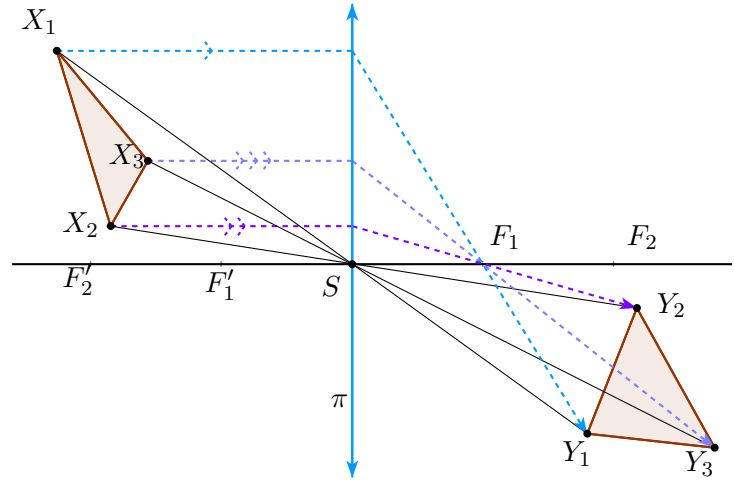
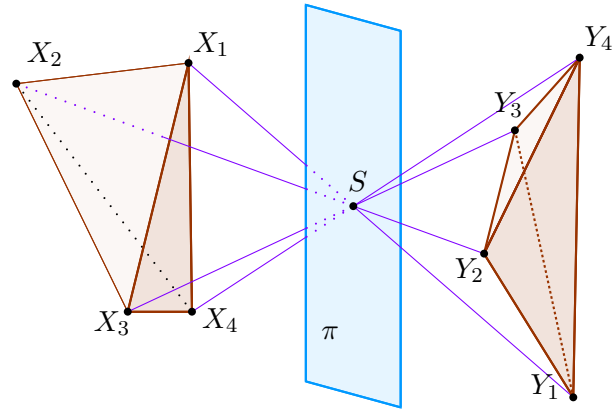
\begin{figure}[!hpt]
\begin{center}
\newrgbcolor{zzttqq}{0.6 0.2 0}
\newrgbcolor{yqqqqq}{0.5 0 0}
\newrgbcolor{qqzzff}{0 0.6 1}
\newrgbcolor{xdxdff}{0.49 0.49 1}
\newrgbcolor{xfqqff}{0.5 0 1}
\subfigure[Desarguesian configuration for 2D space elation]{\label{fig:a}
\psset{xunit=1.10cm,yunit=1.10cm,algebraic=true,dimen=middle,dotstyle=o,dotsize=3pt 0,linewidth=0.8pt,arrowsize=3pt 2,arrowinset=0.25}
\begin{pspicture*}(-4.11,-2.82)(4.59,3.2)
\pspolygon[linecolor=zzttqq,fillcolor=zzttqq,fillstyle=solid,opacity=0.1](-3.57,2.57)(-2.92,0.45)(-2.47,1.24)
\pspolygon[linecolor=zzttqq,fillcolor=zzttqq,fillstyle=solid,opacity=0.1](2.84,-2.06)(3.44,-0.54)(4.38,-2.23)
\psplot{-4.11}{4.59}{(-0.01-0*x)/1}
\psline[linecolor=zzttqq](-3.57,2.57)(-2.92,0.45)
\psline[linecolor=zzttqq](-2.92,0.45)(-2.47,1.24)
\psline[linecolor=zzttqq](-2.47,1.24)(-3.57,2.57)
\psline[linestyle=dashed,dash=2pt 2pt,linecolor=qqzzff](-3.57,2.57)(0,2.57)
\psline[linestyle=dashed,dash=2pt 2pt,linecolor=qqzzff](-1.7,2.57)(-1.78,2.47)
\psline[linestyle=dashed,dash=2pt 2pt,linecolor=qqzzff](-1.7,2.57)(-1.78,2.68)
\psline[linestyle=dashed,dash=2pt 2pt,linecolor=xdxdff](-2.47,1.24)(0,1.24)
\psline[linestyle=dashed,dash=2pt 2pt,linecolor=xdxdff](-1.15,1.24)(-1.24,1.14)
\psline[linestyle=dashed,dash=2pt 2pt,linecolor=xdxdff](-1.15,1.24)(-1.24,1.35)
\psline[linestyle=dashed,dash=2pt 2pt,linecolor=xdxdff](-1.32,1.24)(-1.4,1.14)
\psline[linestyle=dashed,dash=2pt 2pt,linecolor=xdxdff](-1.32,1.24)(-1.4,1.35)
\psline[linestyle=dashed,dash=2pt 2pt,linecolor=xdxdff](-0.99,1.24)(-1.07,1.14)
\psline[linestyle=dashed,dash=2pt 2pt,linecolor=xdxdff](-0.99,1.24)(-1.07,1.35)
\psline[linestyle=dashed,dash=2pt 2pt,linecolor=xfqqff](-2.92,0.45)(0,0.45)
\psline[linestyle=dashed,dash=2pt 2pt,linecolor=xfqqff](-1.3,0.45)(-1.38,0.34)
\psline[linestyle=dashed,dash=2pt 2pt,linecolor=xfqqff](-1.3,0.45)(-1.38,0.55)
\psline[linestyle=dashed,dash=2pt 2pt,linecolor=xfqqff](-1.46,0.45)(-1.54,0.34)
\psline[linestyle=dashed,dash=2pt 2pt,linecolor=xfqqff](-1.46,0.45)(-1.54,0.55)
\psline[linecolor=zzttqq](2.84,-2.06)(3.44,-0.54)
\psline[linecolor=zzttqq](3.44,-0.54)(4.38,-2.23)
\psline[linecolor=zzttqq](4.38,-2.23)(2.84,-2.06)
\psline[linewidth=0.4pt](-2.47,1.24)(4.38,-2.23)
\psline[linewidth=0.4pt](2.84,-2.06)(-3.57,2.57)
\psline[linewidth=0.4pt](-2.92,0.45)(3.44,-0.54)
\psline[linestyle=dashed,dash=2pt 2pt,linecolor=qqzzff]{->}(0,2.57)(2.84,-2.07)
\psline[linestyle=dashed,dash=2pt 2pt,linecolor=xdxdff]{->}(0,1.24)(4.38,-2.23)
\psline[linestyle=dashed,dash=2pt 2pt,linecolor=xfqqff]{->}(0,0.45)(3.44,-0.54)
\psline[linecolor=qqzzff]{->}(0,-2.59)(0,3.14)
\psline[linecolor=qqzzff]{->}(0,3.14)(0,-2.59)
\rput[tl](-0.26,-1.59){$\pi$}
\psdots[dotstyle=*](0,-0.01)
\rput[bl](-0.37,-0.4){$S$}
\psdots[dotstyle=*](-3.57,2.57)
\rput[bl](-4,2.78){$X_1$}
\psdots[dotstyle=*](-2.92,0.45)
\rput[bl](-3.5,0.32){$X_2$}
\psdots[dotstyle=*](-2.47,1.24)
\rput[bl](-2.94,1.15){$X_3$}
\psdots[dotsize=2pt 0,dotstyle=+](-1.58,-0.01)
\rput[bl](-1.78,-0.44){$F'_1$}
\psdots[dotsize=2pt 0,dotstyle=+](1.58,-0.01)
\rput[bl](1.66,0.17){$F_1$}
\psdots[dotsize=2pt 0,dotstyle=+](-3.16,-0.01)
\rput[bl](-3.5,-0.38){$F'_2$}
\psdots[dotsize=2pt 0,dotstyle=+](3.16,-0.01)
\rput[bl](3.32,0.17){$F_2$}
\psdots[dotstyle=*](3.44,-0.54)
\rput[bl](3.68,-0.65){$Y_2$}
\psdots[dotstyle=*](4.38,-2.23)
\rput[bl](4.02,-2.57){$Y_3$}
\psdots[dotstyle=*](2.84,-2.06)
\rput[bl](2.54,-2.45){$Y_1$}
\end{pspicture*}
}
\subfigure[Extended Desarguesian configuration for 3D space elation]{\label{fig:b}
\psset{xunit=0.101cm,yunit=0.101cm,algebraic=true,dimen=middle,dotstyle=o,dotsize=3pt 0,linewidth=0.8pt,arrowsize=3pt 2,arrowinset=0.25}
\begin{pspicture*}(-39.42,-28.12)(43.5,29.73)
\pspolygon[linewidth=0.4pt,linecolor=zzttqq,fillcolor=zzttqq,fillstyle=solid,opacity=0.04](-22.05,-13.92)(-13.59,-13.92)(-14.04,18.85)(-36.7,16.13)
\pspolygon[linewidth=0.4pt,linecolor=zzttqq,fillcolor=zzttqq,fillstyle=solid,opacity=0.04](36.6,-25.18)(24.85,-6.26)(28.93,9.99)(37.45,19.55)
\pspolygon[linecolor=zzttqq,fillcolor=zzttqq,fillstyle=solid,opacity=0.1](-22.05,-13.92)(-13.59,-13.92)(-14.04,18.85)
\pspolygon[linecolor=qqzzff,fillcolor=qqzzff,fillstyle=solid,opacity=0.1](13.88,23.09)(1.4,26.51)(1.4,-23.09)(13.88,-26.51)
\pspolygon[linecolor=zzttqq,fillcolor=zzttqq,fillstyle=solid,opacity=0.1](36.6,-25.18)(24.85,-6.26)(37.45,19.55)
\psline[linewidth=0.4pt,linecolor=zzttqq](-22.05,-13.92)(-13.59,-13.92)
\psline[linewidth=0.4pt,linecolor=zzttqq](-13.59,-13.92)(-14.04,18.85)
\psline[linewidth=0.4pt,linecolor=zzttqq](-14.04,18.85)(-36.7,16.13)
\psline[linewidth=0.4pt,linecolor=zzttqq](-36.7,16.13)(-22.05,-13.92)
\psline[linewidth=0.4pt,linecolor=zzttqq](36.6,-25.18)(24.85,-6.26)
\psline[linecolor=zzttqq](24.85,-6.26)(28.93,9.99)
\psline[linecolor=zzttqq](28.93,9.99)(37.45,19.55)
\psline[linewidth=0.4pt,linecolor=zzttqq](37.45,19.55)(36.6,-25.18)
\psline[linewidth=0.4pt](-22.05,-13.92)(-14.04,18.85)
\psline[linestyle=dotted](-36.7,16.13)(-13.59,-13.92)
\psline[linecolor=zzttqq](-22.05,-13.92)(-13.59,-13.92)
\psline[linecolor=zzttqq](-13.59,-13.92)(-14.04,18.85)
\psline[linecolor=zzttqq](-14.04,18.85)(-22.05,-13.92)
\psline[linestyle=dashed,dash=1pt 1pt,linecolor=zzttqq](36.6,-25.18)(28.93,9.99)
\psline[linecolor=yqqqqq](24.85,-6.26)(37.45,19.55)
\psline[linecolor=qqzzff](13.88,23.09)(1.4,26.51)
\psline[linecolor=qqzzff](1.4,26.51)(1.4,-23.09)
\psline[linecolor=qqzzff](1.4,-23.09)(13.88,-26.51)
\psline[linecolor=qqzzff](13.88,-26.51)(13.88,23.09)
\psline[linewidth=0.4pt,linecolor=xfqqff](24.85,-6.26)(7.64,0)
\psline[linestyle=dotted,linecolor=xfqqff](-17.37,9.1)(-36.7,16.13)
\psline[linewidth=0.4pt,linecolor=xfqqff](1.4,5.43)(-14.04,18.85)
\psline[linestyle=dotted,linecolor=xfqqff](1.4,5.43)(7.64,0)
\psline[linewidth=0.4pt,linecolor=xfqqff](7.64,0)(36.6,-25.18)
\psline[linewidth=0.4pt,linecolor=xfqqff](-17.37,9.1)(1.4,2.27)
\psline[linestyle=dotted,linecolor=xfqqff](1.4,2.27)(7.64,0)
\psline[linewidth=0.4pt,linecolor=xfqqff](7.64,0)(28.93,9.99)
\psline[linewidth=0.4pt,linecolor=xfqqff](1.4,-2.93)(-22.05,-13.92)
\psline[linestyle=dotted,linecolor=xfqqff](1.4,-2.93)(7.64,0)
\psline[linewidth=0.4pt,linecolor=xfqqff](7.64,0)(37.45,19.55)
\psline[linestyle=dotted,linecolor=xfqqff](7.64,0)(1.4,-4.1)
\psline[linewidth=0.4pt,linecolor=xfqqff](1.4,-4.1)(-13.59,-13.92)
\rput[tl](3.48,-15.89){$\pi $}
\psline[linecolor=zzttqq](36.6,-25.18)(24.85,-6.26)
\psline[linecolor=zzttqq](24.85,-6.26)(37.45,19.55)
\psline[linecolor=zzttqq](37.45,19.55)(36.6,-25.18)
\psdots[dotstyle=*](7.64,0)
\rput[bl](6.95,1.94){$S$}
\psdots[dotstyle=*](-22.05,-13.92)
\rput[bl](-26.5,-17){$X_3$}
\psdots[dotstyle=*](-13.59,-13.92)
\rput[bl](-12.23,-16.64){$X_4$}
\psdots[dotstyle=*](-14.04,18.85)
\rput[bl](-13.44,19.76){$X_1$}
\psdots[dotstyle=*](-36.7,16.13)
\rput[bl](-35.34,18.4){$X_2$}
\psdots[dotstyle=*](36.6,-25.18)
\rput[bl](30.97,-25.4){$Y_1$}
\psdots[dotstyle=*](24.85,-6.26)
\rput[bl](21,-10){$Y_2$}
\psdots[dotstyle=*](28.93,9.99)
\rput[bl](26.13,11.75){$Y_3$}
\psdots[dotstyle=*](37.45,19.55)
\rput[bl](38.06,20.51){$Y_4$}
\end{pspicture*}}
\end{center}
\caption{2D and 3D Extended Desarguesian configurations for space elation}
\label{fig:Desarguesian configuration space elation}
\end{figure}

\begin{definition}[Shearing]\label{shearing}
A shearing is a nonsingular stereohomology, of which the stereohomology center $S$ is infinite while the stereohomology hyperplane $\pi$ is ordinary (finite), and $S$ $\in$ $\pi$.

See figure~\ref{fig:Desarguesian configuration 2D shearing} for a 2D case visualization and definition No. 11 in table \ref{classification table} for the shearing formulation.
\end{definition}

\begin{figure}[!hpt]
\begin{center}
\newrgbcolor{zzttqq}{0.6 0.2 0}
\newrgbcolor{yqqqyq}{0.5 0 0.5}
\newrgbcolor{xfqqff}{0.5 0 1}
\psset{xunit=1.0cm,yunit=1.0cm,algebraic=true,dimen=middle,dotstyle=o,dotsize=3pt 0,linewidth=0.8pt,arrowsize=3pt 2,arrowinset=0.25}
\begin{pspicture*}(-0.86,-0.92)(11.88,3.68)
\pspolygon[linecolor=zzttqq,fillcolor=zzttqq,fillstyle=solid,opacity=0.1](2.52,1.52)(1,3.04)(4.04,3.04)
\pspolygon[linecolor=zzttqq,fillcolor=zzttqq,fillstyle=solid,opacity=0.1](5.56,1.52)(10.12,3.04)(7.08,3.04)
\psline(-0.5,0)(11.5,0)
\psline[linecolor=zzttqq](2.52,1.52)(1,3.04)
\psline[linecolor=zzttqq](1,3.04)(4.04,3.04)
\psline[linecolor=zzttqq](4.04,3.04)(2.52,1.52)
\psline[linecolor=zzttqq](5.56,1.52)(10.12,3.04)
\psline[linecolor=zzttqq](10.12,3.04)(7.08,3.04)
\psline[linecolor=zzttqq](7.08,3.04)(5.56,1.52)
\psline[linewidth=0.4pt,linestyle=dashed,dash=1pt 1pt,linecolor=yqqqyq](2.52,1.52)(4.04,0)
\psline[linewidth=0.4pt,linestyle=dashed,dash=1pt 1pt,linecolor=yqqqyq](1,0)(2.52,1.52)
\psline[linewidth=0.4pt,linestyle=dashed,dash=1pt 1pt,linecolor=yqqqyq](4.04,0)(5.56,1.52)
\psline[linewidth=0.4pt,linestyle=dashed,dash=1pt 1pt,linecolor=yqqqyq](1,0)(5.56,1.52)
\psline[linestyle=dashed,dash=1pt 1pt,linecolor=blue](1,0)(1,3.04)
\psline[linestyle=dashed,dash=1pt 1pt,linecolor=blue](4.04,3.04)(4.04,0)
\psline[linestyle=dashed,dash=1pt 1pt,linecolor=xfqqff](1,0)(7.08,3.04)
\psline[linestyle=dashed,dash=1pt 1pt,linecolor=xfqqff](4.04,0)(10.12,3.04)
\psline[linewidth=0.4pt](-0.5,1.52)(11.5,1.52)
\psline[linewidth=0.4pt](-0.5,3.04)(1,3.04)
\psline[linewidth=0.4pt](4.04,3.04)(7.08,3.04)
\psline[linewidth=0.4pt](10.12,3.04)(11.5,3.04)
\rput[tl](8.72,-0.08){$\pi$}
\psline{->}(0.46,1.82)(-0.26,1.82)
\psline{->}(10.5,1.82)(11.24,1.82)
\rput[tl](9.72,2.66){$S_{12}=S_{\infty }$}
\rput[tl](9.7,0.82){$S_{12}=S_{\infty }$}
\rput[tl](-0.6,2.64){$S_{12}=S_{\infty }$}
\rput[tl](-0.56,0.88){$S_{12}=S_{\infty }$}
\psdots[dotstyle=*](1,0)
\rput[bl](0.68,-0.6){$S_{23}$}
\psdots[dotstyle=*](4.04,0)
\rput[bl](4.18,-0.62){$S_{13}$}
\psdots[dotstyle=*,linecolor=darkgray](4.04,3.04)
\rput[bl](4.12,3.16){\darkgray{$X_2$}}
\psdots[dotstyle=*,linecolor=darkgray](1,3.04)
\psdots[dotstyle=*](1,3.04)
\rput[bl](1.08,3.16){$X_1$}
\psdots[dotstyle=*](7.08,3.04)
\rput[bl](7.16,3.16){$Y_1$}
\psdots[dotstyle=*](10.12,3.04)
\rput[bl](10.2,3.16){$Y_2$}
\psdots[dotstyle=*](2.52,1.52)
\rput[bl](2.36,1.86){$X_3$}
\psdots[dotstyle=*](5.56,1.52)
\rput[bl](5.38,1.88){$Y_3$}
\end{pspicture*}
\end{center}
\caption{Desarguesian configuration for 2D shearing}
\label{fig:Desarguesian configuration 2D shearing}
\end{figure}
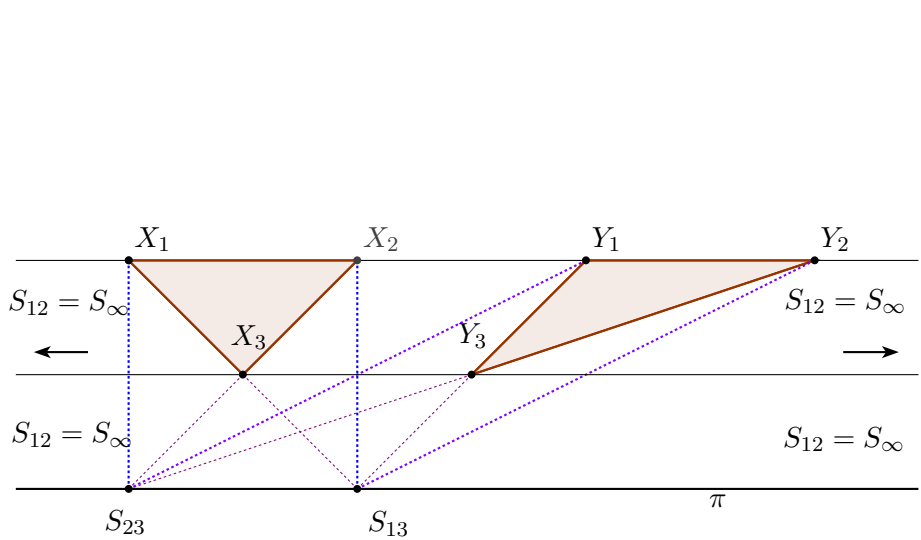

\begin{definition}[Translation]\label{translation}
A translation is a nonsingular stereohomology, of which both the stereohomology center $S$ and the stereohomology hyperplane $\pi$ are infinite elements, and  $S$ $\in$ $\pi$.

See figure~\ref{fig:Desarguesian configuration translation} for its extended Desarguesian configuration illustration and definition No. 12 in table \ref{classification table} for the translation formulation.
\end{definition}

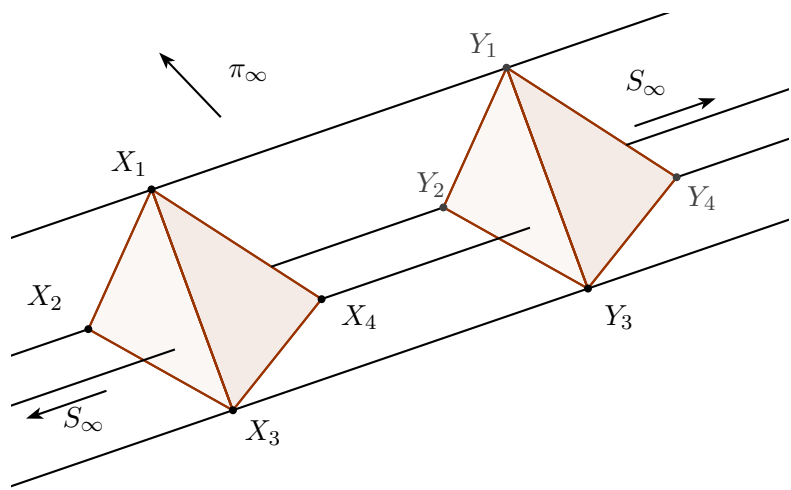
\begin{figure}[!hpt]
\begin{center}
\newrgbcolor{zzttqq}{0.6 0.2 0}
\psset{xunit=1.0cm,yunit=1.0cm,algebraic=true,dimen=middle,dotstyle=o,dotsize=3pt 0,linewidth=0.8pt,arrowsize=3pt 2,arrowinset=0.25}
\begin{pspicture*}(0.32,-1.66)(10.64,4.76)
\pspolygon[linecolor=zzttqq,fillcolor=zzttqq,fillstyle=solid,opacity=0.04](1.34,0.56)(2.18,2.42)(3.26,-0.52)
\pspolygon[linecolor=zzttqq,fillcolor=zzttqq,fillstyle=solid,opacity=0.1](3.26,-0.52)(4.44,0.96)(2.18,2.42)
\pspolygon[linecolor=zzttqq,fillcolor=zzttqq,fillstyle=solid,opacity=0.04](6.9,4.04)(6.06,2.18)(7.98,1.1)
\pspolygon[linecolor=zzttqq,fillcolor=zzttqq,fillstyle=solid,opacity=0.1](7.98,1.1)(9.16,2.58)(6.9,4.04)
\psline[linecolor=zzttqq](1.34,0.56)(2.18,2.42)
\psline[linecolor=zzttqq](2.18,2.42)(3.26,-0.52)
\psline[linecolor=zzttqq](3.26,-0.52)(1.34,0.56)
\psline[linecolor=zzttqq](3.26,-0.52)(4.44,0.96)
\psline[linecolor=zzttqq](4.44,0.96)(2.18,2.42)
\psline[linecolor=zzttqq](2.18,2.42)(3.26,-0.52)
\psline[linecolor=zzttqq](6.9,4.04)(6.06,2.18)
\psline[linecolor=zzttqq](6.06,2.18)(7.98,1.1)
\psline[linecolor=zzttqq](7.98,1.1)(6.9,4.04)
\psline[linecolor=zzttqq](7.98,1.1)(9.16,2.58)
\psline[linecolor=zzttqq](9.16,2.58)(6.9,4.04)
\psline[linecolor=zzttqq](6.9,4.04)(7.98,1.1)
\psline(-4.04,0.28)(13.15,6.18)
\psline(-4.11,-1.31)(1.34,0.56)
\psline(3.77,1.39)(6.06,2.18)
\psline(8.49,3.01)(16.62,5.8)
\psline(-3.89,-1.9)(2.49,0.29)
\psline(4.44,0.96)(7.21,1.91)
\psline(9.16,2.58)(17.67,5.5)
\psline(-3.74,-2.92)(18.07,4.56)
\psline{->}(1.58,-0.26)(0.51,-0.63)
\psline{->}(8.6,3.26)(9.68,3.63)
\rput[tl](8.48,4){$S_\infty $}
\rput[tl](1,-0.48){$S_\infty $}
\psline{->}(3.1,3.38)(2.28,4.24)
\rput[tl](3.2,4.08){$\pi _\infty $}
\psdots[dotstyle=*](2.18,2.42)
\rput[bl](1.64,2.6){$X_1$}
\psdots[dotstyle=*](1.34,0.56)
\rput[bl](0.52,0.82){$X_2$}
\psdots[dotstyle=*](3.26,-0.52)
\rput[bl](3.42,-0.98){$X_3$}
\psdots[dotstyle=*](4.44,0.96)
\rput[bl](4.7,0.56){$X_4$}
\psdots[dotstyle=*](7.98,1.1)
\rput[bl](8.2,0.56){$Y_3$}
\psdots[dotstyle=*,linecolor=darkgray](9.16,2.58)
\rput[bl](9.3,2.14){\darkgray{$Y_4$}}
\psdots[dotstyle=*,linecolor=darkgray](6.9,4.04)
\rput[bl](6.4,4.18){\darkgray{$Y_1$}}
\psdots[dotstyle=*,linecolor=darkgray](6.06,2.18)
\rput[bl](5.7,2.32){\darkgray{$Y_2$}}
\end{pspicture*}
\end{center}
\caption{Extended Desarguesian configuration for translation}
\label{fig:Desarguesian configuration translation}
\end{figure}

\subsection{Involutory Stereohomology Representation of Elementary Perspectives}

\begin{theorem}[Two involution theorem 1]\label{double involution1}
If $\mathscr{T}_1$ and $\mathscr{T}_2$ are two involutory stereohomology with stereohomology centers of $S_1$, $S_2$ , and share the same stereohomology hyperplane $\pi$, then $\mathscr{T}_3$ $=$ $\mathscr{T}_1$ $\cdot$ $\mathscr{T}_2$ and $\mathscr{T}_4$ $=$ $\mathscr{T}_2$ $\cdot$ $\mathscr{T}_1$ are a pair of elementary perspectives inverse to each other.

Additioinally, $\mathscr{T}_3$ and $\mathscr{T}_4$ share a common stereohomology hyperplane $\pi$ and a common stereohomology center which is the intersection of the line $S_1S_2$ and $\pi$.
\end{theorem}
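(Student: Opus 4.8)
The plan is to reduce the statement to a short rank-one matrix computation, using the normal form supplied by Proposition~\ref{involutory stereohomology}. Since only stereohomologies with $S\notin\pi$ can be involutory (an elementary perspective cannot be), equation~\eqref{involutory stereohomology equation} lets us write
\[
\mathscr{T}_1 = \boldsymbol{I} - \frac{2}{a_1}(s_1)(\pi)^{\scriptscriptstyle\top}, \qquad
\mathscr{T}_2 = \boldsymbol{I} - \frac{2}{a_2}(s_2)(\pi)^{\scriptscriptstyle\top},
\]
where $(s_i)$ represents $S_i$, $(\pi)$ the common hyperplane, and $a_i = (s_i)^{\scriptscriptstyle\top}(\pi)\ne 0$. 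First I would multiply these out; the only thing to notice is that $(\pi)^{\scriptscriptstyle\top}(s_i)=a_i$ is a scalar, so the cross term $(s_1)(\pi)^{\scriptscriptstyle\top}(s_2)(\pi)^{\scriptscriptstyle\top}$ collapses to $a_2(s_1)(\pi)^{\scriptscriptstyle\top}$. This gives
\[
\mathscr{T}_3=\mathscr{T}_1\mathscr{T}_2=\boldsymbol{I}+(s_3)(\pi)^{\scriptscriptstyle\top},\qquad
\mathscr{T}_4=\mathscr{T}_2\mathscr{T}_1=\boldsymbol{I}-(s_3)(\pi)^{\scriptscriptstyle\top},\qquad
(s_3):=2\Big(\tfrac{1}{a_1}(s_1)-\tfrac{1}{a_2}(s_2)\Big).
\]

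Next I would check $(s_3)^{\scriptscriptstyle\top}(\pi)=2(a_1/a_1-a_2/a_2)=0$, so that $\mathscr{T}_3=\boldsymbol{I}+(s_3)(\pi)^{\scriptscriptstyle\top}$ with $(s_3)^{\scriptscriptstyle\top}(\pi)=0$ has exactly the shape of equation~\eqref{elementary perspective} (the square-root denominator there being merely a normalization of the free parameter, absorbed by rescaling $(s_3)$). To confirm it genuinely is an elementary perspective in the paper's sense I would verify Definition~\ref{stereohomology} directly: every $(x)$ with $(\pi)^{\scriptscriptstyle\top}(x)=0$ is fixed, so $\pi$ is the stereohomology hyperplane; and $\mathscr{T}_3(x)-(x)=\big((\pi)^{\scriptscriptstyle\top}(x)\big)(s_3)$ is always proportional to $(s_3)$, so $(s_3)$ represents the stereohomology center $S_3$, which lies on $\pi$ because $(s_3)^{\scriptscriptstyle\top}(\pi)=0$. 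Since $(s_3)$ is a linear combination of $(s_1)$ and $(s_2)$, the point $S_3$ lies on the line $S_1S_2$; and as $S_1,S_2\notin\pi$ that line is not contained in $\pi$, so it meets $\pi$ in a unique point, forcing $S_3=S_1S_2\cap\pi$. The identical analysis applied to $\mathscr{T}_4=\boldsymbol{I}-(s_3)(\pi)^{\scriptscriptstyle\top}$ returns the same hyperplane $\pi$ and the same center $S_3$.

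For the mutual-inverse claim I would either compute $\mathscr{T}_3\mathscr{T}_4=\big(\boldsymbol{I}+(s_3)(\pi)^{\scriptscriptstyle\top}\big)\big(\boldsymbol{I}-(s_3)(\pi)^{\scriptscriptstyle\top}\big)=\boldsymbol{I}-(s_3)\big((\pi)^{\scriptscriptstyle\top}(s_3)\big)(\pi)^{\scriptscriptstyle\top}=\boldsymbol{I}$, the cross term vanishing because $(\pi)^{\scriptscriptstyle\top}(s_3)=0$; or, more structurally, note that each $\mathscr{T}_i$ satisfies $\mathscr{T}_i^2=\boldsymbol{I}$ by the normal form together with Definition~\ref{involutory}, whence $\mathscr{T}_4^{-1}=(\mathscr{T}_2\mathscr{T}_1)^{-1}=\mathscr{T}_1^{-1}\mathscr{T}_2^{-1}=\mathscr{T}_1\mathscr{T}_2=\mathscr{T}_3$.

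I do not expect a genuine obstacle: once Proposition~\ref{involutory stereohomology} furnishes the normal form, the whole argument is two lines of rank-one algebra. The only points needing care are bookkeeping ones — the statement tacitly assumes $S_1\ne S_2$ (otherwise $(s_3)=\boldsymbol{0}$, $\mathscr{T}_3=\boldsymbol{I}$, and ``elementary perspective'' degenerates), and one should record that $\mathscr{T}_3$ is nonsingular as an elementary perspective must be, which follows at once either from $\mathscr{T}_3\mathscr{T}_4=\boldsymbol{I}$ or from Proposition~\ref{rank of stereohomology}.
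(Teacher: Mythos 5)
Your proof is correct, and it follows exactly the route the paper implicitly intends: the paper states Theorem~\ref{double involution1} without proof, but its remark after Theorem~\ref{tri-sterehomology} indicates that such composition results are to be verified via the elementary-matrix normal forms, which is precisely your two-line rank-one computation (and your observations that the statement tacitly needs $S_1\neq S_2$ and that a non-identity elementary perspective cannot be involutory are worthwhile additions the paper does not record).
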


Similarly, we also have:
\begin{theorem}[Two involution theorem 2]\label{double involution2}
If $\mathscr{T}_1$ and $\mathscr{T}_2$ are two involutory stereohomology with stereohomology hyperplanes $\pi_1$, $\pi_2$ , and share a common stereohomology center $S$, then  $\mathscr{T}_3$ $=$ $\mathscr{T}_1$ $\cdot$ $\mathscr{T}_2$ and $\mathscr{T}_4$ $=$ $\mathscr{T}_2$ $\cdot$ $\mathscr{T}_1$ are a pair of elementary perspectives inverse to each other.

Additioinally, $\mathscr{T}_3$ and $\mathscr{T}_4$ share a common stereohomology center $S$ and a common stereohomology hyperplane which is the joining of the line $\pi_1\cap\pi_2$ and $S$.
\end{theorem}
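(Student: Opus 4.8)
The plan is to reduce everything to a short computation with the rank-one form of Proposition~\ref{involutory stereohomology}. By that proposition each of $\mathscr{T}_1,\mathscr{T}_2$ is an involutory stereohomology sharing the given center $(s)$, so
\[
\mathscr{T}_i \;=\; {\boldsymbol I} - 2\,\frac{(s)\cdot(\pi_i)^{\scriptscriptstyle\top}}{(s)^{\scriptscriptstyle\top}\!\cdot(\pi_i)}\,,\qquad a_i:=(s)^{\scriptscriptstyle\top}\!\cdot(\pi_i)\neq 0,\qquad i=1,2;
\]
in particular $S\notin\pi_1$ and $S\notin\pi_2$. We may assume $\pi_1\neq\pi_2$, since otherwise the same proposition forces $\mathscr{T}_1=\mathscr{T}_2$ and $\mathscr{T}_3\propto{\boldsymbol I}$, a trivial case.

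First I would expand $\mathscr{T}_3=\mathscr{T}_1\mathscr{T}_2$. Since $(\pi_1)^{\scriptscriptstyle\top}\!\cdot(s)=a_1$ is a scalar, the cross term collapses, $\bigl((s)(\pi_1)^{\scriptscriptstyle\top}\bigr)\bigl((s)(\pi_2)^{\scriptscriptstyle\top}\bigr)=a_1\,(s)(\pi_2)^{\scriptscriptstyle\top}$, and the four terms telescope to
\[
\mathscr{T}_3 \;=\; {\boldsymbol I}+(s)\cdot(\pi_3)^{\scriptscriptstyle\top},\qquad (\pi_3)\;=\;\frac{2}{a_2}(\pi_2)-\frac{2}{a_1}(\pi_1)\neq 0 .
\]
Interchanging the two factors gives $\mathscr{T}_4=\mathscr{T}_2\mathscr{T}_1={\boldsymbol I}+(s)\cdot(\pi_4)^{\scriptscriptstyle\top}$ with $(\pi_4)=-(\pi_3)$, so $\mathscr{T}_3$ and $\mathscr{T}_4$ have the common center $S$ and, projectively, the common hyperplane $\pi_3$.

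Next I would read off that $\mathscr{T}_3$ is an elementary perspective: one computes $(s)^{\scriptscriptstyle\top}\!\cdot(\pi_3)=2-2=0$, and $\mathscr{T}_3$ is nonsingular as a product of two invertible (involutory) matrices, so after fixing the scaling of $(\pi_3)$ it has exactly the form \eqref{elementary perspective} and is therefore a stereohomology with center $S$ and stereohomology hyperplane $\pi_3$. Because $(\pi_3)^{\scriptscriptstyle\top}\!\cdot(s)=0$, the product
\[
\mathscr{T}_3\mathscr{T}_4=\bigl({\boldsymbol I}+(s)(\pi_3)^{\scriptscriptstyle\top}\bigr)\bigl({\boldsymbol I}-(s)(\pi_3)^{\scriptscriptstyle\top}\bigr)={\boldsymbol I}-(s)\bigl((\pi_3)^{\scriptscriptstyle\top}\!\cdot(s)\bigr)(\pi_3)^{\scriptscriptstyle\top}={\boldsymbol I}
\]
shows $\mathscr{T}_4=\mathscr{T}_3^{-1}$, settling the ``inverse pair'' claim. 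It then remains to identify $\pi_3$: since $(\pi_3)$ is a linear combination of $(\pi_1)$ and $(\pi_2)$, it lies in the pencil of hyperplanes through the codimension-two flat $\pi_1\cap\pi_2$ (a line in ${\mathbb P}^3$), so $\pi_1\cap\pi_2\subseteq\pi_3$; and $(s)^{\scriptscriptstyle\top}\!\cdot(\pi_3)=0$ says $S\in\pi_3$. As $S\notin\pi_1\cap\pi_2$, the hyperplane $\pi_3$ is precisely the join of $\pi_1\cap\pi_2$ and $S$, as asserted.

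The computation is routine --- indeed it is the exact dual of the argument for Theorem~\ref{double involution1}, with the roles of centers and hyperplanes interchanged --- so I expect no genuine obstacle. The only points that need care are the bookkeeping of projective scalings (``inverse'', ``common center'' and ``common hyperplane'' are all meant up to a nonzero scalar, and matching the normalizing radical of \eqref{elementary perspective} to the bare matrix ${\boldsymbol I}+(s)(\pi_3)^{\scriptscriptstyle\top}$ is a cosmetic last step) and the translation of the algebraic facts $(\pi_3)\in\operatorname{span}\{(\pi_1),(\pi_2)\}$ and $(s)^{\scriptscriptstyle\top}(\pi_3)=0$ into the synthetic statement that $\pi_3$ joins $\pi_1\cap\pi_2$ with $S$.
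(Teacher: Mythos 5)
Your computation is correct and complete: expanding the product of the two rank-one involutory forms from Proposition~\ref{involutory stereohomology}, collapsing the cross term via the scalar $(\pi_1)^{\scriptscriptstyle\top}(s)=a_1$, and then reading off $(s)^{\scriptscriptstyle\top}(\pi_3)=0$, the pencil condition $(\pi_3)\in\operatorname{span}\{(\pi_1),(\pi_2)\}$, and $\mathscr{T}_3\mathscr{T}_4={\boldsymbol I}$ establishes every clause of the statement. The paper itself states this theorem without proof, but its declared method for such results is precisely this direct manipulation of the elementary-matrix forms \eqref{elementary homology} and \eqref{elementary perspective}, so your argument is the intended one and fills the omission.
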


Based on theorems \ref{double involution1} and \ref{double involution2}, we present definitions for elementary perspectives which are different from but equivalent to their original definitions. The new definitions may provide convenience in obtaining desired elementary perspective matrices.

\begin{definition}[Translation]\label{translation1}
A translation is a compound transformation of two central symmetry of which the stereohomology centers $S_1$ and $S_2$ are different ordinary points, and which share a common infinite sterehomology hyperplane.

The displacement of the translation is twice the distance between $S_1$ and $S_2$.
\end{definition}

\begin{definition}[Translation]\label{translation2}
A translation is a compound transformation of two orthographic reflections of which the ordinary stereohomology hyperplanes $\pi_1$ and $\pi_2$ are different but share a common normal direction.

The displacement of the translation is twice the distance between $\pi_1$ and $\pi_2$.
\end{definition}

\begin{definition}[Shearing]\label{shearing1}
A shearing is a compound transformation of two involutory stereohomology, of which the two stereohomology centers $S_1$ and $S_2$ are different ordinary points, and which share a common ordinary sterehomology hyperplane $\pi$, if the line joining $S_1$ and $S_2$ intersects $\pi$ at infinite.

The shearing displacement is hence twice the distance between $S_1$ and $S_2$.
\end{definition}

\begin{definition}[Shearing]\label{shearing2}
A shearing is a compound transformation of two reflections, at least one of which is oblique, the two mirror hyperplanes $\pi_1$ and $\pi_2$ of which are different ordinary hyperplanes intersecting at an ordinary line, and which share a common infinite stereohomology center $S$.
\end{definition}

Except for projections, shearing and scaling\cite[p.11]{Faugeras} are also fundamental concepts in computer vision which were not rigorously defined independent of a particular choice of coordinate system before.

\subsection{Definitions and Representation of General Rotations}
The definitions and representation for {\em elementary geometric transformations} can be further extended to those geometric transformations which are not elementary since square matrices can be decomposed into elementary factors. Here we only present an example to general 3D rotations with axes not necessarily passing through coordinate system origin.

A rotation can be defined as the compound operation of two reflections according to~\cite[pp.419~\texttildelow~422]{Perspectives}, but there is no simple rotation representation derived based on such definitions yet. In this section we shall both use the compound transformation of two orthographic {\em reflections} defined as involutory stereohomology in table~\ref{classification table} and use the eigen-system of the rotation which is inherent algebraic features per theorem~\ref{zeroth}, to represent a general rotation.

The definition of an orthographic reflection in~\cite{meaning} takes advantage of the existence and uniqueness of an involutory projective transformation which transforms $X_i$ and $S$ in the extended Desargues configuration $X_1X_2X_3X_4-S-Y_1Y_2Y_3Y_4$ (as in figure~\ref{fig:Desarguesian configuration reflection}) into $Y_i$ and $S$ in sequence respectively. The homogeneous square matrix formulation of such a reflection was proved to be in the form as indicated in table~\ref{classification table}~\cite{meaning}.

Note that in order to make the definitions in algebraic projective geometry {\em compatible with} the Euclidean geometry intuitions in one's mind, we have to make choices to distinguish ordinary and infinite geometric elements which are algebraically indistinguishable in projective space. Then we redefine homogeneous rotations in ${\mathbb P}^n$ ( only when $n$ = 2,3) in this paper as:

\begin{definition}[Rotation]\label{rotation1}
A rotation in ${\mathbb P}^n$ is a compound transformation of two orthographic reflections of which the stereohomology centers $S_1$ and $S_2$ are different infinite points, and sterehomology hyperplanes $\pi_1$ and $\pi_2$ are ordinary elements(see table~\ref{classification table} for definitions of elementary geometric transformations).

The rotation angle $\theta$ of the rotation is twice that of dihedral angle $\omega$ between $\pi_1$ and $\pi_2$ which can be represented by Laguerre's formula by involving cross ratio~\cite[pp.342,409]{Perspectives}.
\end{definition}

The above definition~\ref{rotation1} is directly borrowed from the classic definitions in projective geometry, and is theoretically dependent on the possibility of defining {\em normal reflection} as an involutory stereohomology in table~\ref{classification table} where modified Householder's elementary matrices~\cite[1\texttildelow 3]{Householder} are presented and defined into {\em stereohomology} based on an extension to Desargues theorem~\cite[75\texttildelow 76]{Veblen}. Otherwise, we have not find any other opportunity of define rotation via such an approach in algebraic projective geometry. It is only based on definition~\ref{rotation1} that we can obtain a pure algebraic definition~\ref{rotation2} of 2D and 3D rotations in projective space without using any non-projective-geometry concept, i.e., it is logically inappropriate to immediately adopt a Givens matrix as a {\em standard} rotation. %

\begin{definition}[Rotation]\label{rotation2}
A rotation in ${\mathbb P}^n$($n$ = 2,3) with  rotation angle $\theta$  and rotation axis $l$ (the latter of which should be able to be represented as the intersection of two hyperplanes in ${\mathbb P}^n$) is a projective transformation of which:

(1) the ratios of all eigenvalues are cos$\theta \pm i\cdot$ sin$\theta $ and 1;

(2) points on rotation axis $l$ are the associated eigenvectors with the ratio 1 real eigenvalue, and

(3) the associated eigenvectors with eigenvalues of ratios cos$\theta \pm i\cdot$ sin$\theta $ are the intersetion points of the imaginary conics\cite[p.204]{Vaisman} and the infinite hyperplane in ${\mathbb P}^2$, or are the intersection points of the imaginary quadrics\cite[p.204]{Vaisman}, the infinite hyperplane, and any ordinary hyperplane of which the normal direction is the direction of the rotation axis when it is in ${\mathbb P}^3$.
\end{definition}

We shall obtain homogeneous rotations based on definitions \ref{rotation1} and \ref{rotation2} via two approaches different from those in~\cite[pp.33~\texttildelow~34,43~\texttildelow~48]{CAD}, \cite[pp.43~\texttildelow~52]{Salomon}, \cite[p.36]{VinceFormulae}, \cite[pp.89~\texttildelow~90,115~\texttildelow~118,177~\texttildelow~180]{VinceRotation}:
\begin{description}
\item[(I)] find two hyperplanes of which their intersection line being rotation axis and the dihedral angle $\omega$ being half the angle $\theta$, then the products of reflections about the two hyperplanes will be desired rotation and its inverse (per definition \ref{rotation1}), further characteristic geometric features of positive direction of rotation axes and the right- or left-handed rule, can finalize the desired rotation;
\item[(II)] find all the eigenvalues and their associate eigenvectors, then the rotation and its inverse can be obtained by reconstructing from its eigen-decomposition factors(per definition \ref{rotation2}), further characteristic geometric information on the rotation similar to above uniquely determines the rotation.
\end{description}

The right-handed 3D homogeneous rotation with rotation angle $\theta$ and rotation axis through $(x_0,y_0,z_0,1)^T$ with axis direction $(a,b,c,0)^T$ can therefore be obtained by the approaches above as in~\eqref{3D homogeneous roation}, which for application convenience has been rewritten into a user friendly form similar to the classic Rodrigues' formula~\cite[p.165]{Goldman} %with rotation axis passing through the coordinate system origin when $x_0 = 0$, $y_0 = 0$, $z_0 = 0$ and {\em translation} $\mathscr{T}_4$ becomes an identity matrix
({\color{black}Note}: without loss of generality, we assume $\color{red}a^2+b^2+c^2=1$):
{%\large
\begin{gather}\label{3D homogeneous roation}
\boldsymbol{R}^{3D}\left(x_0,y_0,z_0,a,b,c,\theta\right)=\mathscr{C}_1+\left(\sin\theta\cdot\mathscr{A}_2- \left(1-\cos\theta\right)\cdot\mathscr{O}_3\right)\cdot \mathscr{T}_4
\end{gather}
}
where:
\begin{flalign*}
\quad\mathscr{C}_1=
\begin{array}{c}
\underbrace{\left[
\begin{array}{*{20}{c}}
1&0&0&0\\
0&1&0&0\\
0&0&1&0\\
0&0&0&{2 - \cos \theta }
\end{array} \right]} \\ \text{Central dilation}{_{\tiny\tfrac{1}{ (2-\cos\theta)}}}
\end{array},  & \qquad\quad\mathscr{A}_2  =  {\begin{array}{c}
\underbrace{
\begin{array}{c}
 {{\left[ {\begin{array}{*{20}{c}}
{\color{blue}0}&{\color{blue}-c}&{\color{blue}b}&0\\
{\color{blue}c}&{\color{blue}0}&{\color{blue}-a}&0\\
{\color{blue}-b}&{\color{blue}a}&{\color{blue}0}&0\\
0&0&0&0
\end{array}} \right]}}
\end{array}
} \\ \text{Antisymmetric matrix}
\end{array}}
\end{flalign*}

\begin{flalign*}
\quad\mathscr{O}_3= \begin{array}{c}
\underbrace{  {I - {\left[ {\begin{array}{*{20}{c}}
a\\
b\\
c\\
0
\end{array}} \right] \cdot \left[ {\begin{array}{*{20}{c}}
a&b&c&0
\end{array}} \right]}} }\\
\text{Orthographic parallel projection} \end{array}, \quad & \mathscr{T}_4=\begin{array}{c}
\underbrace{\left[ {\begin{array}{*{20}{c}}
1&0&0&{ - {x_0}}\\
0&1&0&{ - {y_0}}\\
0&0&1&{ - {z_0}}\\
0&0&0&1
\end{array}}  \right]}\\ \text{Translation}\end{array}
\end{flalign*}

\section{Conclusions}
We have algebraically redefined as stereohomology a series of geometric transformations, i.e., central projection, parallel projection, direction, space homology, elementary scaling, central dilation, reflection, central symmetry, space elation, shearing and translation, and represented them into modified Householder elementary matrices. Such results can be further extended to general homogeneous geometric transformations which are not elementary.

\section*{Acknowledgements}

{\footnotesize
}
\clearpage

\section*{Appendices}
\begin{proposition}[Existence and uniqueness of stereohomology]
There exists a unique generalized collineation(or generalized projective transformation, see Definition~\ref{Desargues} in homogeneous transformation matrix which transforms $X_1X_2\cdots X_n$ and $S$ in an extended Desarguesian configuration into $Y_1Y_2\cdots Y_n$ and $S$(or null), respectively.
\end{proposition}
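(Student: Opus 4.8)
The plan is to follow the constructive route sketched in \cite{investigation}: first write down an explicit candidate matrix by Cramer's rule, then check that it does the required job and has rank at least $n$, and only afterwards deduce uniqueness from a frame-rigidity argument. Conceptually this is the fundamental theorem of projective geometry — a collineation of $\mathbb{P}^n$ is pinned down by $n+2$ points in general position — adapted so that the \emph{target} frame $Y_1,\dots,Y_{n+1}$ is allowed to degenerate to rank $n$, which is exactly what forces the passage from ordinary collineations to \emph{generalized} ones (rank $\ge n$, Definition~\ref{g-projective}) and the ``or null'' alternative for the image of $S$.

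For existence, since the $X_i$ have rank $n+1$ they are a basis of $\mathbb{R}^{n+1}$, so $S=\sum_i\xi_iX_i$ with $\xi_i=\Delta_i/\Delta$, where $\Delta=\det[X_1,\dots,X_{n+1}]$ and $\Delta_i$ is the same determinant with the $i$-th column replaced by $S$; the analogous scalars $\Delta'_i,\Delta'$ are built from the $Y_i$. I would then form $P=[\,\Delta_1X_1\mid\cdots\mid\Delta_{n+1}X_{n+1}\,]$ and $Q=[\,\Delta'_1Y_1\mid\cdots\mid\Delta'_{n+1}Y_{n+1}\,]$, so that Cramer's rule gives $P(1,\dots,1)^{\!\top}=\Delta\,S$ and $Q(1,\dots,1)^{\!\top}=\Delta'\,S$. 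Under the non-degeneracy that every $\xi_i\ne0$ (equivalently, $P$ invertible), set $\mathscr{T}=QP^{-1}$, which is precisely the matrix in \eqref{initial-matrix}. Then $\mathscr{T}(\Delta_iX_i)=Qe_i=\Delta'_iY_i$, i.e.\ $\mathscr{T}X_i$ is proportional to $Y_i$, and $\mathscr{T}(\Delta S)=Q(1,\dots,1)^{\!\top}=\Delta'S$, i.e.\ $\mathscr{T}S=(\Delta'/\Delta)S$. Finally $\operatorname{rank}\mathscr{T}=\operatorname{rank}Q=\operatorname{rank}\{Y_i\}$, which by hypothesis is $n+1$ or $n$: in the rank-$n$ (singular) case $\Delta'=0$, hence $\mathscr{T}S=\mathbf 0$ and $S$ is sent to \emph{null}; in both cases $\operatorname{rank}\mathscr{T}\ge n$, so $\mathscr{T}$ is a bona fide generalized projective transformation.

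For uniqueness, suppose $\mathscr{T}_1$ and $\mathscr{T}_2$, each of rank $\ge n$, both send $X_i\mapsto Y_i$ projectively and $S$ to $S$ (or null), and write $\mathscr{T}_kX_i=\lambda_i^{(k)}Y_i$ with $\lambda_i^{(k)}\ne0$. When both are nonsingular, $\mathscr{T}_2^{-1}\mathscr{T}_1$ scales each $X_i$ by $\lambda_i^{(1)}/\lambda_i^{(2)}$ while fixing $S$ projectively; expanding $S=\sum_i\xi_iX_i$ with all $\xi_i\ne0$ forces these ratios equal, whence $\mathscr{T}_1$ and $\mathscr{T}_2$ coincide up to a nonzero scalar. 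In the rank-$n$ case I would adapt the same argument inside the common $n$-dimensional image hyperplane $\operatorname{span}\{Y_i\}$, now using that $S$ is sent to null by \emph{both} maps (together with the collinearity relations $S\in\overline{X_iY_i}$) so that no extra degree of freedom is introduced, again concluding $\mathscr{T}_1$ and $\mathscr{T}_2$ agree up to scale; Proposition~\ref{rank of stereohomology} then independently confirms that the constructed matrix is singular exactly when $\Delta'=0$, matching the ``$S\mapsto$ null'' branch.

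The main obstacle I expect is not the linear algebra above but the degenerate-case bookkeeping: verifying that the extended Desarguesian configuration of Theorem~\ref{Desargues} genuinely forces $\xi_i\ne0$ for every $i$ — that is, that $S$ lies on no facet of the simplex $X_1\cdots X_{n+1}$ — so that $P$ is invertible and \eqref{initial-matrix} is well defined, and then carefully aligning the dichotomy ``$\{Y_i\}$ has rank $n+1$, so $S\mapsto S$'' versus ``$\{Y_i\}$ has rank $n$, so $S\mapsto$ null'' with the statement of Lemma~\ref{existence and uniqueness theorem}. Once that case split is pinned down, the remainder is the classical argument for the fundamental theorem of projective geometry applied to the $n+2$ points $X_1,\dots,X_{n+1},S$.
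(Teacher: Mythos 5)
Your construction is the same one the paper uses --- the matrix you call $QP^{-1}$, with $P=[\Delta_1X_1\mid\cdots]$ and $Q=[\Delta'_1Y_1\mid\cdots]$, is exactly the paper's $T_2\cdot T_1^{-1}$ obtained by routing through the standard frame $O_1,\dots,O_{n+1},(1,\dots,1)^{\top}$, and both arrive at \eqref{initial-matrix} --- but the two arguments are packaged quite differently. The paper spends most of its effort establishing the existence and uniqueness of the frame-to-configuration map $T_1$ by writing out an explicit $(n+1)(n+2)$-dimensional linear system \eqref{eqn:20 dimensional linear system} and showing its coefficient matrix is nonsingular (which reduces to the nonsingularity of $[B\,|\,C\,|\,D\,|\,S]$), then recovers the $\Delta_i$ via Cramer's rule; uniqueness of the composite is only asserted to ``follow from the construction.'' You instead define $P$ and $Q$ directly, verify $\mathscr{T}X_i\propto Y_i$ and $\mathscr{T}S=(\Delta'/\Delta)S$ by inspection (your identity $Q(1,\dots,1)^{\top}=\Delta'S$ does hold even when $\operatorname{rank}\{Y_i\}=n$, by the standard Cramer relation among $n+2$ vectors in $\mathbb{R}^{n+1}$), and give the classical frame-rigidity uniqueness argument at the level of $\mathscr{T}$ itself; this is cleaner and makes the ``$\Delta'=0\Leftrightarrow S\mapsto\text{null}$'' dichotomy transparent, at the cost of leaving the singular-case uniqueness as a sketch (the missing step is that $\mathscr{T}_kS=\mathbf{0}$ forces $(\xi_1\lambda_1^{(k)},\dots,\xi_{n+1}\lambda_{n+1}^{(k)})$ to span the one-dimensional kernel of $(c_i)\mapsto\sum_ic_iY_i$, whence the $\lambda_i^{(k)}$ are proportional across $k$ and $\mathscr{T}_1=c\,\mathscr{T}_2$ on the basis $\{X_i\}$). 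Both proofs lean on the same unstated non-degeneracy hypothesis that you correctly flag: the configuration must place $S$ off every facet of the $X$-simplex (so that all $\Delta_i\neq0$ and $P$ is invertible), and off $\operatorname{span}\{Y_j:j\neq i\}$ for each $i$ (so that all $\Delta'_i\neq0$ and $\operatorname{rank}Q=\operatorname{rank}\{Y_i\}$); the paper simply asserts the corresponding fact as ``$B,C,D$ and $S$ are not coplanar,'' so this is a shared gap in rigor rather than a defect of your proposal.
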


\begin{proof}

Since the applications are mainly those geometric transformations in 3D and 2D projective spaces, we prove the cases in 3D projective space only.

When all $Y_1Y_2\cdots Y_n$ are not coplanar, the proof is similar to that of {\em the first fundamental theorem in projective geometry}(see page~\hyperlink{page.5}{5} of {\em The Geometry of Multiple Images, 2001}, reference~\cite{Faugeras} of the manuscript), that is, by taking advantage of $O_1O_2\cdots O_nO_{n+1}$ as transition. That is, we first construct the generalized collineation $T_1$ which transforms $O_i$'s into $X_i$'s and $S$ respectively, then $T_2$ which transforms $O_i$'s into $Y_i$'s and $S$ respectively. The combination $T = T_1^{-1}\cdot T_2$ should be the desired generalized collineation (projective transformation) which transforms $X_i$'s and $S$ into $Y_i$'s and $S$(or null) respectively, namely the {\em stereohomology}. The uniqueness can also be proved from the construction.

Here we use the extended Desuargues configuration $ABCD\text{-}S\text{-}A'B'C'D'$ and the notations as in page~\hyperlink{page.5}{5} of the manuscript. Denote $O_1:$ $(1,0,0,0)^T$, $O_2:$ $(0,1,0,0)^T$, $O_3:$ $(0,0,1,0)^T$, $O_4:$ $(0,0,0,1)^T$, $O_5:$ $(1,1,1,1)^T$. Denote the generalized collineation which transforms $O_i$s into $ABCDS$ respectively as $t_{ij}$ $(i,j=1,2,3,4)$. Then

\begin{equation}\label{eqn:original transformation}
\left(\begin{array}{ccccc}
 {a_1} {\rho_1} & {b_1} {\rho_2} & {c_1}
   {\rho_3} & {d_1} {\rho_4} & {s_1} {\rho_5} \\
 {a_2} {\rho_1} & {b_2} {\rho_2} & {c_2}
   {\rho_3} & {d_2} {\rho_4} & {s_2} {\rho_5} \\
 {a_3} {\rho_1} & {b_3} {\rho_2} & {c_3}
   {\rho_3} & {d_3} {\rho_4} & {s_3} {\rho_5} \\
 {a_4} {\rho_1} & {b_4} {\rho_2} & {c_4}
   {\rho_3} & {d_4} {\rho_4} & {s_4} {\rho_5} \\
\end{array}
\right)=\left(t_{ij}\right)_{4\times 4}\cdot
\left(
\begin{array}{ccccc}
 1 & 0 & 0 & 0 & 1 \\
 0 & 1 & 0 & 0 & 1 \\
 0 & 0 & 1 & 0 & 1 \\
 0 & 0 & 0 & 1 & 1 \\
\end{array}
\right)
\end{equation}
without loss of generality, let $\rho_1 = 1$, then equation~\eqref{eqn:original transformation} becomes 20 linear equations with 20 variables: $\rho_i$ ($i=2,3,4,5$), and $t_{ij}$ ($i,j=1,2,3,4$) as in~\eqref{eqn:original transformation with rho=1}:

\begin{equation}\label{eqn:original transformation with rho=1}
\left(\begin{array}{ccccc}
 {a_1} & {b_1} {\rho_2} & {c_1}
   {\rho_3} & {d_1} {\rho_4} & {s_1} {\rho_5} \\
 {a_2} & {b_2} {\rho_2} & {c_2}
   {\rho_3} & {d_2} {\rho_4} & {s_2} {\rho_5} \\
 {a_3} & {b_3} {\rho_2} & {c_3}
   {\rho_3} & {d_3} {\rho_4} & {s_3} {\rho_5} \\
 {a_4} & {b_4} {\rho_2} & {c_4}
   {\rho_3} & {d_4} {\rho_4} & {s_4} {\rho_5} \\
\end{array}
\right)=\left(
\begin{array}{cccc}
 {t_{11}} & {t_{12}} & {t_{13}} &{t_{14}} \\
 {t_{21}} & {t_{22}} & {t_{23}} &{t_{24}} \\
 {t_{31}} & {t_{32}} & {t_{33}} &{t_{34}} \\
 {t_{41}} & {t_{42}} & {t_{43}} &{t_{44}} \\
\end{array}
\right)\cdot
\left(
\begin{array}{ccccc}
 1 & 0 & 0 & 0 & 1 \\
 0 & 1 & 0 & 0 & 1 \\
 0 & 0 & 1 & 0 & 1 \\
 0 & 0 & 0 & 1 & 1 \\
\end{array}
\right)
\end{equation}
which can be rewritten into equations~\eqref{eqn: transform expansion 01}\texttildelow\eqref{eqn: transform expansion 05} as below:
\begin{align}
t_{11}=\rho_1 a_1=a_1,\quad t_{21}=\rho_1 a_2=a_2,\quad t_{31}=\rho_1 a_3=a_3,\quad t_{41}=\rho_1 a_4=a_4 \label{eqn: transform expansion 01}\\
t_{12}-b_1\rho_2=0,\quad t_{22}-b_2\rho_2=0,\quad t_{32}-b_3\rho_2=0,\quad t_{42}-b_4\rho_2=0 \label{eqn: transform expansion 02}\\
t_{13}-c_1\rho_3=0,\quad t_{23}-c_2\rho_3=0,\quad t_{33}-c_3\rho_3=0,\quad t_{43}-c_4\rho_3=0\label{eqn: transform expansion 03}\\
t_{14}-d_1\rho_4=0,\quad t_{24}-d_2\rho_4=0,\quad t_{34}-d_3\rho_4=0,\quad t_{44}-d_4\rho_4=0\label{eqn: transform expansion 04}\\
\left(\sum\limits_{j=1}^4{t_{ij}}\right) -s_i\rho_5=0,\quad i=1,2,3,4 \label{eqn: transform expansion 05}
\end{align}
and rewritten into the $(n+1)(n+2)-$dimensional linear system:
\begin{equation}\label{eqn:20 dimensional linear system}
\left(
\begin{array}{cccccccccccccccccccc}
 1 & 0 & 0 & 0 & 0 & 0 & 0 & 0 & 0 & 0 & 0 & 0 & 0 & 0 & 0 & 0 & 0 & 0 & 0 & 0 \\
 0 & 1 & 0 & 0 & 0 & 0 & 0 & 0 & 0 & 0 & 0 & 0 & 0 & 0 & 0 & 0 & 0 & 0 & 0 & 0 \\
 0 & 0 & 1 & 0 & 0 & 0 & 0 & 0 & 0 & 0 & 0 & 0 & 0 & 0 & 0 & 0 & 0 & 0 & 0 & 0 \\
 0 & 0 & 0 & 1 & 0 & 0 & 0 & 0 & 0 & 0 & 0 & 0 & 0 & 0 & 0 & 0 & 0 & 0 & 0 & 0 \\
 0 & 0 & 0 & 0 & 1 & 0 & 0 & 0 & 0 & 0 & 0 & 0 & 0 & 0 & 0 & 0 & -{b_1} & 0 & 0 & 0 \\
 0 & 0 & 0 & 0 & 0 & 1 & 0 & 0 & 0 & 0 & 0 & 0 & 0 & 0 & 0 & 0 & -{b_2} & 0 & 0 & 0 \\
 0 & 0 & 0 & 0 & 0 & 0 & 1 & 0 & 0 & 0 & 0 & 0 & 0 & 0 & 0 & 0 & -{b_3} & 0 & 0 & 0 \\
 0 & 0 & 0 & 0 & 0 & 0 & 0 & 1 & 0 & 0 & 0 & 0 & 0 & 0 & 0 & 0 & -{b_4} & 0 & 0 & 0 \\
 0 & 0 & 0 & 0 & 0 & 0 & 0 & 0 & 1 & 0 & 0 & 0 & 0 & 0 & 0 & 0 & 0 & -{c_1} & 0 & 0 \\
 0 & 0 & 0 & 0 & 0 & 0 & 0 & 0 & 0 & 1 & 0 & 0 & 0 & 0 & 0 & 0 & 0 & -{c_2} & 0 & 0 \\
 0 & 0 & 0 & 0 & 0 & 0 & 0 & 0 & 0 & 0 & 1 & 0 & 0 & 0 & 0 & 0 & 0 & -{c_3} & 0 & 0 \\
 0 & 0 & 0 & 0 & 0 & 0 & 0 & 0 & 0 & 0 & 0 & 1 & 0 & 0 & 0 & 0 & 0 & -{c_4} & 0 & 0 \\
 0 & 0 & 0 & 0 & 0 & 0 & 0 & 0 & 0 & 0 & 0 & 0 & 1 & 0 & 0 & 0 & 0 & 0 & -{d_1} & 0 \\
 0 & 0 & 0 & 0 & 0 & 0 & 0 & 0 & 0 & 0 & 0 & 0 & 0 & 1 & 0 & 0 & 0 & 0 & -{d_2} & 0 \\
 0 & 0 & 0 & 0 & 0 & 0 & 0 & 0 & 0 & 0 & 0 & 0 & 0 & 0 & 1 & 0 & 0 & 0 & -{d_3} & 0 \\
 0 & 0 & 0 & 0 & 0 & 0 & 0 & 0 & 0 & 0 & 0 & 0 & 0 & 0 & 0 & 1 & 0 & 0 & -{d_4} & 0 \\
 1 & 0 & 0 & 0 & 1 & 0 & 0 & 0 & 1 & 0 & 0 & 0 & 1 & 0 & 0 & 0 & 0 & 0 & 0 & -{s_1} \\
 0 & 1 & 0 & 0 & 0 & 1 & 0 & 0 & 0 & 1 & 0 & 0 & 0 & 1 & 0 & 0 & 0 & 0 & 0 & -{s_2} \\
 0 & 0 & 1 & 0 & 0 & 0 & 1 & 0 & 0 & 0 & 1 & 0 & 0 & 0 & 1 & 0 & 0 & 0 & 0 & -{s_3} \\
 0 & 0 & 0 & 1 & 0 & 0 & 0 & 1 & 0 & 0 & 0 & 1 & 0 & 0 & 0 & 1 & 0 & 0 & 0 & -{s_4} \\
\end{array}
\right)
\left[
\begin{array}{c}
{t_{11}}\\
{t_{21}}\\
{t_{31}}\\
{t_{41}}\\
{t_{12}}\\
{t_{22}}\\
{t_{32}}\\
{t_{42}}\\
{t_{13}}\\
{t_{23}}\\
{t_{33}}\\
{t_{43}}\\
{t_{14}}\\
{t_{24}}\\
{t_{34}}\\
{t_{44}}\\
{\rho_2}\\
{\rho_3}\\
{\rho_4}\\
{\rho_5}\\
\end{array}
\right]=
\left[
\begin{array}{c}
 {a_1} \\
 {a_2} \\
 {a_3} \\
 {a_4} \\
 0 \\
 0 \\
 0 \\
 0 \\
 0 \\
 0 \\
 0 \\
 0 \\
 0 \\
 0 \\
 0 \\
 0 \\
 0 \\
 0 \\
 0 \\
 0 \\
\end{array}
\right]
\end{equation}

The $20\times 20$ coefficient matrix in~\eqref{eqn:20 dimensional linear system} is always nonsingular since its determinant is equal to minus that of the following matrix, which is nonsingular since $B,C,D\text{ and }S$ in an extended Desarguesian configuration are not coplanar:
\begin{equation}\label{eqn:matrix noncoplanar}
\left(
\begin{array}{cccc}
{b_1}&{c_1}&{d_1}&{s_1}\\
{b_2}&{c_2}&{d_2}&{s_2}\\
{b_3}&{c_3}&{d_3}&{s_3}\\
{b_4}&{c_4}&{d_4}&{s_4}\\
\end{array}
\right)\end{equation}

Since $a_i$ $(i=1,2,3,4)$ cannot be all zero, then there exist a unique nontrivial solution to the $20\times 20$ linear system. So does the desired generalized projective transformation which transforms $ABCDS$ into $A'B'C'D'S$.

The conclusion can be extended to $n$-dimensional projective spaces by examining the existence and uniqueness of the solution to a thus obtained $(n+1)*(n+2)$ dimensional linear system.
\end{proof}

Below we show how to obtain the generalized collineation in the form as in equation~\eqref{Desargues} in page~\hyperlink{page.5}{5} of the manuscript.

First, also from equations~\eqref{eqn: transform expansion 01}, \eqref{eqn: transform expansion 02}, \eqref{eqn: transform expansion 03}, \eqref{eqn: transform expansion 04} and \eqref{eqn: transform expansion 05}, we obtain the following relationships:
\begin{equation}\label{eqn:linear system for Gramer's rule}
\left\{\quad
\begin{array}{c}
 {\rho_1 a_1+\rho_2 b_1+\rho_3 c_1+\rho_4 d_1=\rho_5 s_1} \\
 {\rho_1 a_2+\rho_2 b_2+\rho_3 c_2+\rho_4 d_2=\rho_5 s_2} \\
 {\rho_1 a_3+\rho_2 b_3+\rho_3 c_3+\rho_4 d_3=\rho_5 s_3} \\
 {\rho_1 a_4+\rho_2 b_4+\rho_3 c_4+\rho_4 d_4=\rho_5 s_4} \\
\end{array}
\right.\end{equation}
Therefore: $\rho_1 : \rho_2 : \rho_3 : \rho_4 : \rho_5 = \Delta_1 : \Delta_2 : \Delta_3 : \Delta_4 : \Delta_5$ (per Cramer's rule), where:
\[{\small
\begin{gathered}
  \Delta _1 \! =\! \left| {\begin{array}{*{20}c}
   {s_1 }\! & {b_1 }\! & {c_1 }\! & {d_1 }  \\
   {s_2 }\! & {b_2 }\! & {c_2 }\! & {d_2 }  \\
   {s_3 }\! & {b_3 }\! & {c_3 }\! & {d_3 }  \\
   {s_4 }\! & {b_4 }\! & {c_4 }\! & {d_4 }  \\
 \end{array} } \right|,\:\Delta _2 \! = \!\left| {\begin{array}{*{20}c}
   {a_1 }\! & {s_1 }\! & {c_1 }\! & {d_1 }\!  \\
   {a_2 }\! & {s_2 }\! & {c_2 }\! & {d_2 }\!  \\
   {a_3 }\! & {s_3 }\! & {c_3 }\! & {d_3 }\!  \\
   {a_4 }\! & {s_4 }\! & {c_4 }\! & {d_4 }\!  \\
 \end{array} } \right|,
  \Delta _3 \! =\! \left| {\begin{array}{*{20}c}
   {a_1 }\! & {b_1 }\! & {s_1 }\! & {d_1 }\!  \\
   {a_2 }\! & {b_2 }\! & {s_2 }\! & {d_2 }\!  \\
   {a_3 }\! & {b_3 }\! & {s_3 }\! & {d_3 }\!  \\
   {a_4 }\! & {b_4 }\! & {s_4 }\! & {d_4 }\!
 \end{array} } \right|, \Delta _4 \! = \! \left| {\begin{array}{*{20}c}
   {a_1 }\! & {b_1 }\! & {c_1 }\! & {s_1 }  \\
   {a_2 }\! & {b_2 }\! & {c_2 }\! & {s_2 }  \\
   {a_3 }\! & {b_3 }\! & {c_3 }\! & {s_3 }  \\
   {a_4 }\! & {b_4 }\! & {c_4 }\! & {s_4 }
 \end{array} } \right| \hfill \\
\end{gathered}
}\]

Let $\rho_1 = k\cdot \Delta_1 ( 0\ne k\in\mathbb{R})$, then $\rho_i = k\cdot \Delta_i$ $(i=1,2,3,4,5)$, and $\mathscr{T}$ in equation~({\color{blue}2.3}) (page {\color{blue}5} of the manuscript) can be easily obtained.

\begin{proposition}[Existence of an $n$-dimensional Eigenspace]\label{exist-pi} The $n$+1 dimensional homogeneous matrix of generalized projective transformation $\mathscr{T}$ as in equation ({\color{blue}2.3}) of the manuscript, has an eigenvalue with geometric multiplicity of $n$, i.e., there exists an $n$-dimensional eigenspace (invariant subspace, invariant hyperplane) of the transformation $\mathscr{T}$.
\end{proposition}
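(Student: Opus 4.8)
The plan is to exhibit $n$ linearly independent eigenvectors of the matrix $\mathscr{T}$ of equation~\eqref{initial-matrix} all having one common eigenvalue, and to take for these the $C_{n+1}^2$ intersection points $S_{ij}=X_iX_j\cap Y_iY_j$ of the underlying extended Desarguesian configuration, for which Theorem~\ref{Desargues} already supplies rank~$n$. Since the resulting eigenspace then has dimension at least~$n$, and it cannot have dimension $n+1$ (that would force $\mathscr{T}$ to be scalar and hence $Y_i\propto X_i$, incompatible with Definition~\ref{e-configuration}), the geometric multiplicity is exactly~$n$, which is the claim. I would follow \cite{investigation} and carry the argument out in $\mathbb{P}^3$, the general case being identical.

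The first step is to read the action of $\mathscr{T}$ on the frame directly off~\eqref{initial-matrix}: writing $\boldsymbol{M}$ and $\boldsymbol{M}'$ for the two matrices in that product, whose $i$-th columns are $\Delta_iX_i$ and $\Delta'_iY_i$, the factorization is $\mathscr{T}\boldsymbol{M}=k\boldsymbol{M}'$, so $\mathscr{T}X_i=\lambda_iY_i$ with $\lambda_i=k\Delta'_i/\Delta_i$. Summing the columns and using the Cramer identities $\sum_i\Delta_iX_i=\Delta_0S$ and $\sum_i\Delta'_iY_i=\Delta'_0S$ (with $\Delta_0=\det[X_1\cdots X_{n+1}]$ and $\Delta'_0=\det[Y_1\cdots Y_{n+1}]$, valid as polynomial identities even when $\Delta'_0=0$) gives $\mathscr{T}S=\rho S$ with $\rho=k\Delta'_0/\Delta_0$ in the nonsingular case and $\mathscr{T}S=0$ in the singular one.

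The second step uses the collinearity of $S$ with each pair $X_i,Y_i$ to write $Y_i=\alpha_iX_i+\beta_iS$. Substituting into $\sum_i\Delta'_iY_i=\Delta'_0S$ and matching coefficients against $\sum_i\Delta_iX_i=\Delta_0S$ in the basis $\{X_i\}$ forces $\Delta'_i\alpha_i=\kappa\,\Delta_i$ for a single scalar $\kappa$ not depending on $i$, whence the products $\alpha_i\lambda_i=\kappa k=:\lambda$ all agree. Then for $i\neq j$ the vector $\beta_jY_i-\beta_iY_j$ equals $\alpha_i\beta_jX_i-\alpha_j\beta_iX_j$, so it lies on $X_iX_j$ and on $Y_iY_j$ and is a nonzero (since $Y_i\not\propto Y_j$) representative of $S_{ij}$; applying $\mathscr{T}$ yields $\mathscr{T}(\beta_jY_i-\beta_iY_j)=\alpha_i\lambda_i\beta_jY_i-\alpha_j\lambda_j\beta_iY_j=\lambda(\beta_jY_i-\beta_iY_j)$. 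Thus every $S_{ij}$ is a $\lambda$-eigenvector, and by Theorem~\ref{Desargues} the $S_{ij}$ span an $n$-dimensional subspace $V\subseteq\mathbb{R}^{n+1}$, so $\ker(\mathscr{T}-\lambda\boldsymbol I)\supseteq V$ has dimension at least~$n$; combined with the opening remark this pins the geometric multiplicity at~$n$, and $V$ is the invariant hyperplane ($\pi$) in the statement.

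The step I expect to be the main obstacle is establishing that $\alpha_i\lambda_i$ is the same for every $i$: this is exactly where the specific Cramer's-rule form of the cofactors $\Delta_i,\Delta'_i$ in~\eqref{initial-matrix} must be exploited, and the weaker information ``$\mathscr{T}$ maps $X_i$ to $Y_i$ and fixes $S$'' does not suffice. A secondary technical point is that for the degenerate configurations of Definition~\ref{e-configuration} (center or hyperplane at infinity, or $\mathscr{T}$ singular) one must still have the inverted matrix in~\eqref{initial-matrix} well defined; in those cases I would either replace~\eqref{initial-matrix} by the explicit $(n+1)(n+2)$-dimensional linear-system construction of $\mathscr{T}$ used just above in the existence-and-uniqueness argument, or obtain the eigenspace by a limiting argument, noting that the property ``eigenvalue of geometric multiplicity~$n$'' is closed under such limits.
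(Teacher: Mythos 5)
Your proposal is correct and follows essentially the same route as the paper's own proof: both arguments show that the $C_{n+1}^2$ intersection points $S_{ij}=X_iX_j\cap Y_iY_j$ are eigenvectors of $\mathscr{T}$ for one common eigenvalue and then invoke the rank-$n$ conclusion of the extended Desargues theorem to get geometric multiplicity $n$. The only real difference is where the constancy of that eigenvalue comes from --- you extract it from the Cramer-rule structure of the factors in \eqref{initial-matrix}, whereas the paper derives the analogous identity $\rho_i\lambda_i/\lambda'_i=\text{const.}$ by comparing two expansions of $(s)$ in the basis $\{(y_i)\}$ using only $\mathscr{T}(x_i)\propto (y_i)$ and $\mathscr{T}(s)\propto (s)$, so your stated worry that this ``weaker information'' cannot suffice is unfounded (those $n+2$ correspondences already determine $\mathscr{T}$ up to scale).
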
%\end{propositiona}

\begin{proof} Let $\mathscr{T}$ represent both the {\em geometric transformation} and the {\em transformation matrix.} Only prove the case when $\mathscr{T}$ is nonsingular, i.e., both the homogeneous coordinates of $(x_i)$ and those of $(y_i)$ are linearly independent. The proof is based on a constructed minimal polynomial of $\mathscr{T}$.

Since he homogeneous coordinate of $S$ can be linearly expressed as:
%\begin{comment}
\begin{eqnarray*}
\begin{array}{ccccc}%@{\extracolsep{2pt}}
(s)\quad &=&\lambda_1 (x)_1&+&\lambda'_1 (y)_1\\
  &=&\lambda_2 (x)_2&+&\lambda'_2 (y)_2\\
  &\vdots &\vdots & \vdots & \vdots \nonumber\\
   &=&\lambda_{n+1} (x)_{n+1}&+& \lambda'_{n+1} (y)_{n+1}
\end{array}\\[3pt]
\end{eqnarray*}
%\end{comment}
and since the $n$+1 homogeneous coordinates of $(x_i)$ and $(y_i)$ ( $i$ = 1,$\cdots$, $n$+1 ) are linearly independent, $\exists$ $0 \neq$ $\mu_i$ $\in\mathbb{R}$ ($i$ = 1,$\cdots$, $n$+1 ), which make $(s)$ can be linearly expressed as:

\begin{equation*} \label{s-sum}
(s)= \sum\limits_{i = 1}^{n+1} {\mu_i\;(x_i) }\\
\end{equation*}

 Since $(x_i)$ and $(y_i)$ are the corresponding points through the geometric transformation $\mathscr{T}$ $(\forall\;i\,=\,1,\cdots,n+1)$, i.e., there exist: 0 $\neq$ $\rho_i$ $\in$ $\mathbb{R}$ ($i$\,= $1,\ldots$,$n+1$),  which satisfy:
\begin{equation}\label{x-rho}
\rho_i\,(y_i)\;=\; \mathscr{T} (x_i)\qquad(i\,= 1,\cdots,n+1)
\end{equation}

\begin{equation}\label{s-rho}
\text{and}\quad \rho_s\,(s)\;=\; \mathscr{T} (s) \qquad \qquad
\qquad \qquad \end{equation}

By combining the  results in equations  \eqref{x-rho} and \eqref{s-rho} here together, the linear expression of $(s)$ by $(y_i)$  can be obtained in two different forms:
\begin{equation*}
\left( {\sum\limits_{i = 1}^{n + 1} {\frac{{\mu _i }}{{\lambda _i }}
- 1} } \right)(s) = \sum\limits_{i = 1}^{n + 1} {\frac{{\mu _i
\lambda' _i }}{{\lambda _i }}} (y_i )
\end{equation*}
and
\begin{equation*}
(s) = \frac{1}{{\rho _s }}\mathscr{T} \sum\limits_{i = 1}^{n + 1}
{\mu _i (x_i)}  = \sum\limits_{i = 1}^{n + 1} {\frac{{\rho _i \mu _i
}}{{\rho _s }}} (y_i )
\end{equation*}

Since the linear expression of $(s)$ by $(y_i)$ is unique once the homogeneous coordinate vectors are fixed, comparing the corresponding factors before each $(y_i)$, we can obtain:
\begin{equation}\label{eigenvalueconstant}
\frac{{\rho _i \lambda _i }}{{\lambda' _i }} =\text{ const.} \qquad
(\forall\;i\,=\,1,\cdots,n+1)
\end{equation}

According to equation \eqref{s-rho}
\begin{equation}(s)=\frac{1} {{\rho _s
}}\mathscr{T}\cdot (s) ,
\end{equation}
which yields:
\begin{align}\label{s-for-polynomial}
\nonumber (s) &= \lambda _i (x_i) + \lambda' _i (y_i ) = \lambda _i (x_i) +\frac{{\lambda' _i }} {{\rho _i
}}\mathscr{T}\cdot (x_i)\\
& = \frac{{\lambda_i }} {{\rho _s }}\mathscr{T}\cdot (x_i) + \frac{{\lambda' _i }} {{\rho_s \rho _i }}\mathscr{T} ^2\cdot (x_i) \qquad \forall \: i = 1, \cdots,   n+1 \qquad \qquad \raisetag{50pt}
\end{align}

then from:
 %\begin{comment}
 \[
\lambda _i (x_i) +\frac{{\lambda' _i }} {{\rho _i }}\mathscr{T}
(x_i) =\frac{{\lambda_i }} {{\rho _i }}\mathscr{T} (x_i) +
\frac{{\lambda' _i }} {{\rho _s \rho _i }}\mathscr{T} ^2 (x_i)
\]
%\end{comment}
we have:
\begin{align}
(\mathscr{T}  - \rho _s \boldsymbol{I})\left( {\mathscr{T}  + \frac{{\rho _i \lambda _i }} {{\lambda '_i }}\boldsymbol{I}} \right)(x_i) = \boldsymbol{0}\\
\nonumber \forall \: i = 1,\: \cdots, n+1
\end{align}

 Since $\rho_s$ is an eigenvalue of the transformation matrix $\mathscr{T}$, and $\mathscr{T}$ $\neq$ $\boldsymbol{I}$,
\begin{equation}(t - \rho _s )\left( {t + \frac{{\rho _i \lambda _i }}
{{\lambda '_i }}} \right)
\end{equation}
 is the minimal characteristic polynomial of the transformation matrix $\mathscr{T}$. Consequently,
\begin{equation}\label{eigenvalue}
 -\frac{{\rho _i \lambda _i }}{{\lambda '_i }} = {\rm const.}
 \end{equation} is also an eigenvalue  of transformation matrix $\mathscr{T},$ and then it can be proved that the geometric multiplicity thereof is $n$.

Then prove that all the homogeneous coordinate vectors of the $C_{n+1}^2$ different intersection points, denoted as ($s)_{i,j}$, ($i$\,$\neq$\,$j$,\:$i$,$j$=1,$\cdots$,$n$+1), are the associated eigenvectors with the eigenvalue {equation} \eqref{eigenvalue}.

Because:
\qquad $$(s)=\lambda_i (x_i)+\lambda'_i (y_i) \quad (i {\text = 1,} \cdots, n{\text +1} ), $$

\begin{align}\label{intersections-1}
\nonumber (s)_{i,j}&=\lambda_i(x_i)-\lambda_j(x)_j = \lambda'_j(y)_j-\lambda'_i (y_i) \\
        & \forall\; i \neq j, \quad i, j = 1, \cdots, n{\text +1}
\end{align}
and
\begin{align}\label{intersections-t}
\mathscr{T}\cdot (s)_{i,j}&=\mathscr{T}\cdot(\lambda_i(x_i)-\lambda_j(x)_j) = \lambda_i \rho_i (y_i)-\lambda_j \rho_j (y)_j
\end{align}

Comparing the linear expression of $(s)_{i,j}$ in equation \eqref{intersections-1} and $\mathscr{T}\cdot (s)_{i,j}$ in equation \eqref{intersections-t} by $(y)_j$ and $(y_i)$, considering {equation} \eqref{eigenvalueconstant}, we obtain:

\begin{align}\label{intersections-eigenvalue}
\nonumber \mathscr{T}\cdot (s)_{i,j}& = -\frac{{\rho _i \lambda _i }}{{\lambda' _i }} (s)_{i,j}\\
 & \forall\; i \neq j, \quad i, j = 1, \cdots, n{\text +1}
\end{align}
Since the rank of the vector set which consists of all the homogeneous coordinates of the $C_{n+1}^2$ intersection points is $n$, and all the vectors are associated eigenvectors, the geometric multiplicity  of the associated eigenvalue is $n$.

When $\mathscr{T}$ is a singular matrix, the proof is similar with only slight difference.
\end{proof}

\end{document}